\documentclass{article}


\usepackage[preprint]{neurips_2024}

\usepackage[utf8]{inputenc} 
\usepackage[T1]{fontenc}    
\usepackage{hyperref}       
\usepackage{url}            
\usepackage{booktabs}       
\usepackage{amsfonts}       
\usepackage{nicefrac}       
\usepackage{microtype}      
\usepackage{xcolor}         

\usepackage{algorithmic}
\usepackage{pifont}
\usepackage[linesnumbered, ruled]{algorithm2e}

\usepackage{microtype}
\usepackage{graphicx}
\usepackage{subfigure}
\usepackage{booktabs} 
\usepackage{bm}
\usepackage{enumitem}

\usepackage{diagbox}
\usepackage{multirow}

\usepackage{wrapfig}

\usepackage{amsmath}
\usepackage{amssymb}
\usepackage{mathtools}
\usepackage{amsthm}

\usepackage[capitalize,noabbrev]{cleveref}

\hypersetup{hidelinks,colorlinks=true,
linkcolor=black,anchorcolor=black,citecolor=darkgray}

\theoremstyle{plain}
\newtheorem{theorem}{Theorem}[section]

\newtheorem{lemma}[theorem]{Lemma}

\theoremstyle{definition}

\newtheorem{assumption}[theorem]{Assumption}
\theoremstyle{remark}
\newtheorem{remark}[theorem]{\bf Remark}

\usepackage[textsize=tiny]{todonotes}




\title{Sequential Federated Learning in Hierarchical Architecture on Non-IID  Datasets}

%

\author{%
  Xingrun Yan\\
   Beijing Institute of Technology\\
   \texttt{3220221473@bit.edu.cn} \\
   \And
  Shiyuan Zuo \\
   Beijing Institute of Technology\\
  \texttt{zuoshiyuan@bit.edu.cn} \\
   \And
   Rongfei Fan\\
   Beijing Institute of Technology\\
   \texttt{fanrongfei@bit.edu.cn} \\
   \And
      Han Hu\\
   Beijing Institute of Technology\\
   \texttt{hhu@bit.edu.cn} \\
   \And
   Li Shen \\
   Sun Yat-Sen University\\
   \texttt{shenli6@mail.sysu.edu.cn} \\
   \And
   Puning Zhao\\
   Zhejiang Lab\\
   \texttt{pnzhao@zhejianglab.com} \\
    \And
   Yong Luo\\
   Wuhan University\\
   \texttt{luoyong@whu.edu.cn} \\
}

\begin{document}

\maketitle

\begin{abstract}
In a real federated learning (FL) system, communication overhead for passing model parameters between the clients and the parameter server (PS) is often a bottleneck.
Hierarchical federated learning (HFL) that poses multiple edge servers (ESs) between clients and the PS can partially alleviate communication pressure but still needs the aggregation of model parameters from multiple ESs at the PS.
To further reduce communication overhead, we bring sequential FL (SFL) into HFL for the first time, which removes the central PS and enables the model training to be completed only through passing the global model between two adjacent ESs for each iteration, and propose a novel algorithm adaptive to such a combinational framework, referred to as Fed-CHS. Convergence results are derived for strongly convex and non-convex loss functions under various data heterogeneity setups, which show comparable convergence performance with the algorithms for HFL or SFL solely.
Experimental results provide evidence of the superiority of our proposed Fed-CHS on both communication overhead saving and test accuracy over baseline methods.
\end{abstract}
\section{Introduction}
\label{sec:intro}

Recently, federated learning (FL) stands out as an emerging distributed ML approach that can alleviate the privacy concern of clients via multiple rounds of local training \citep{9796935,MA2022244,10251949}.
In each round, participating clients first utilize their local data and the received FL global model in the last round to update their FL local models respectively, and then transmit the parameters of their most up-to-date local models to a parameter server (PS) \citep{8664630,9835537}. Hereafter, the PS aggregates these uploaded local model parameters to have a new one and broadcasts it to each participating client. 
Convergence of global model can be promised in such a framework and no raw data is transmitted between any participating client and the PS \citep{NEURIPS2021_64be20f6,NEURIPS2022_449590df,pmlr-v202-song23e}.

In traditional FL architecture, as the number of participating clients grows, it is very pricey to sustain a dedicated communication link between every client and the PS for global model updating, considering the data size of model parameter nowadays is generally enormous
\citep{8885054,pmlr-v119-rothchild20a,9488679,9660377,9014530}. 
To overcome this challenge, some efforts are made to reduce the number of bits to offload for each communication hop, e.g., through pruning the neural network model \citep{NEURIPS2021_6aed000a,ijcai2022p385} or compressing the model parameter \citep{Bibikar_Vikalo_Wang_Chen_2022}, etc. 
Differently, some other works try to save the number of communication hops by designing new aggregation mechanisms with a concern of network topology.
\textit{Hierarchical Federated Learning} (HFL) \citep{9054634,9207469,9479786} is such a kind of example that emerges recently.
In HFL, massive clients are broadly distributed and have to get through the PS via an edge server (ES) in the vicinity. Dividing the set of clients into multiple clusters, within each of which there is an associated ES \citep{9148862,10261420,9699080}.
In any round of local model aggregation, each ES first aggregates the local models from the clients in its cluster and then pushes the aggregated one to the PS to generate the global model parameter \citep{yang2021h,9665214,9809926,9834296}.
Through this operation, the communication hops for offloading each client's specific local model parameter from the associated ES node to the central PS are waived \citep{9546457,9685644}.

Even with HFL, 
the round-by-round aggregations of combined model parameters from every ES at the PS is still a communication-heavy task \citep{9488756}.
This situation is even worse when the group of ESs and the PS are in a highly dynamic network, weakly interconnected, or subject to network topology mismatch, i.e., the topology of ESs and the PS is not in a stable and star shape \citep{9829367,10364357}.
To overcome this challenge, the idea of \textit{sequential FL} (SFL) that removes PS from the FL system and allows global model parameters updated among participating clients sequentially, can serve as an enhancement, but has not been brought into HFL in literature.
To leverage SFL in the framework of HFL, aggregated model parameter is updated sequentially among ESs,
which exempts the parallel offloading of each ES to the PS \citep{pmlr-v162-sun22b,10044204}.  
The combination of HFL and SFL is adaptive to a broad range of network topologies. 
Some exemplary applications include, but are not limited to, Internet of Vehicles (IoV) and Integrated Low-earth Orbit (LEO) Satellite-Terrestrial Network, whose detailed descriptions are listed in \cref{apx:app}.

In addition, datasets from disjoint participating clients may be non-independent and identically distributed (non-IID) (or say be subject to data heterogeneity) due to diverse geographical environments and client characteristics \citep{10261420,NEURIPS2022_449590df}. 
Non-IID datasets bring new challenges into convergence proving for training algorithms and have been a vital issue to address in the traditional framework of FL, HFL, or SFL \citep{pmlr-v162-sun22b,10261420,pmlr-v202-song23e}.
In our combinational framework of SFL and HFL, the factor of non-IID cannot be ignored either as the HFL structure enables the coverage of a broader range of clients.
{
How to promise convergence in such a situation is still an open problem, and cannot be answered based on existing related works that cares SFL or HFL sorely \citep{NEURIPS2023_74088c68,9834296}.}



In this paper, to save communication hops and overcome the data heterogeneity that are widely existed, we investigate the realization of \underline{fed}erated learning in the \underline{c}ombinational framework of \underline{H}FL and \underline{S}FL with non-IID data sets, and propose a novel aggregation algorithm named Fed-CHS. 
In Fed-CHS, iterative training is performed cluster-by-cluster. For the currently active cluster, each client in it only needs to interact with the associated ES node, who then generates the most up-to-date global model parameter from these interactions and pushes the generated model parameter to the ES node in a neighbor cluster for next round of iteration. 
The selection of next passing ES node is realized based on a deterministic and simple rule. 
Rigorous convergence analysis is expanded when the loss function is strongly convex and non-convex, under various setup of data heterogeneity.

\textbf{Contributions:}
Our main contributions are three-fold: 
\begin{itemize}[leftmargin=*]
    \item {\bf Algorithmically}, we investigate the combinational framework of SFL and HFL for the first time and propose a new aggregation algorithm Fed-CHS to be adaptive to it.
    {Fed-CHS offers a communication overhead saving method compared with existing methods for HFL architecture or conventional FL framework and is totally different from existing communication-efficient methods that merely save the number of bits for each communication hop.}
    {Furthermore Fed-CHS is general to network topology, especially when the topology is highly dynamic or not in a star shape}.
    \item {\bf Theoretically}, we explore the convergence performance of Fed-CHS when the loss function is strongly convex or non-convex. Specifically, Fed-CHS can converge at a linear rate with $T$ and a rate $O(1/K^{q-1})$ for $q\geq 2$ with $K$, which defines the rounds of training iteration among clusters and within a cluster respectively, for strongly convex loss function, and a rate of $\mathcal{O}(1/T^{1+ q_1 -q_2})$ with $1 +q_1 > q_2$ and $q_2 \ge q_1$ for non-convex loss function under general data heterogeneity. 
    Additionally, when data heterogeneity fades away partly or completely, Fed-CHS can further achieve zero optimality gap for strongly convex loss function, and stationary point for non-convex loss function, respectively, while keeping the convergence rate unchanged. These convergence results are comparable with the algorithms for HFL or SFL solely.
    \item {\bf Numerically}, we conduct extensive experiments to compare Fed-CHS with baseline methods. The results verify the advantage of our proposed Fed-CHS in test accuracy and communication overhead saving.
\end{itemize}

\section{Related Work in Brief}

FL was firstly introduced in \citet{mcmahan2017communication}  and then evolves to be HFL or SFL for the sake of relieving communication overhead.
Optimality gap and convergence rate are the two most analytical performance metrics in literature under diverse assumptions on  loss function's convexity, especially for non-IID dataset, which are surveyed briefly in the following. More details about the surveyed literature are given in \cref{app:survey}.


{\bf Hierarchical Federated Learning (HFL)}
In the framework of HFL, \citet{tu2020network} concerns limited computation capability at clients, \citet{9705093} cares a special network connection topology like device-to-device (D2D) , and \citet{9699080} takes communication bottleneck into account by allowing ES currently having bad link condition with the PS to delay the uploading of model parameter. However, \citet{9699080} still requires each ES to offload aggregated model parameter to the PS. Heavy parameter offloading task is only postponed rather than diminished in \citet{9699080}.

{\bf Sequential Federated Learning (SFL)}
With regard to SFL, the {\it Next Passing Node} that is going to receive the most up-to-date global model parameter, may be selected by following a fixed order or randomly. 
When the next passing node is chosen by following a fixed order, ring topology has to be imposed \citep{wang2022efficient,9956084}, which can hardly accommodate to various network topology in real application. 
When the next passing node is selected in a random way, more computation overhead may be brought in for evaluating the Lipschitz constant as required in \citet{8778390} or the value function for working about an multiple armed bandit (MAB) problem in \citet{10044204}. 

{\bf Convergence Guarantees for Federated Learning}
In terms of convergence analysis concerning data heterogeneity, for HFL, convex loss function is mainly concerned \citep{tu2020network,9705093,9699080}. 
Some literature, including \citet{tu2020network,9699080}, cannot achieve zero optimality gap but is able to converge at linear rate, for a general convex \citep{tu2020network,9699080} or strongly convex \citep{9699080} loss function.
\citet{9705093} can achieve zero optimality gap but at a convergence rate of $\mathcal{O}(1/T)$.
However, the zero optimality gap in \citet{9705093} is build on some special assumption on quantitative relationship between instant gradient of loss function and system parameter. 
While, for SFL, the aforementioned \citet{9050511,10044204} have tried to overcome data heterogeneity. 
The loss function is assumed to be strongly convex in \citet{9050511,10044204}.
\citet{9050511} can achieve convergence at rate $\mathcal{O}(1/T^{1-q})$ for $q \in (0, 1/2)$, while \citet{10044204} is able to further achieve zero optimality gap at at rate $\mathcal{O}(1/\sqrt{T})$, which relies on the holding of a special condition between local model parameter and local gradient in the stage of initialization. 

In summary, as the architecture evolves from conventional FL, the results of which is given in \cref{app:survey}, to HFL and SFL, i.e., from simple to complex, it becomes harder and harder to get ideal results for a general convex or non-convex loss function in front of heterogeneous dataset.
In this situation, deriving analytical results under the combinational framework of HFL and SFL is even more challenging.

\section{Fed-CHS}
\label{sec:proposed model}
In this section, we present Fed-CHS. We first elaborate the problem setup under the hybrid framework of SFL and HFL.

\subsection{Problem Setup}

{\bf FL learning task:} 
Consider a FL system with $N$ clients. These $N$ participating clients compose the set of $\mathcal{N} \triangleq \{1, 2, ..., N\}$.
For any participating client, say $n$th client, it has a local data set $\mathcal{D}_n$ with $D_n$ elements. The $i$th element of $\mathcal{D}_n$ is a ground-true label $z_{n,i} = \{{x}_{n,i}, {y}_{n,i}\}$.
The ${x}_{n,i} \in \mathcal{R}^{\text{in}}$ is the input vector and ${y}_{n,i} \in \mathcal{R}^{\text{out}}$ is the output vector.
By utilizing the data set $\mathcal{D}_n$ for $n\in \mathcal{N}$, 
we aim to train a $d$-dimension vector ${w}$ by minimizing the following loss function
\begin{equation}
F({w}) \triangleq  \frac{1}{D_A} \sum_{n\in \mathcal{N}} \sum_{z_{n,i}\in \mathcal{D}_n} f({w}, z_{n,i})
=  \sum_{n \in \mathcal{N}} \frac{D_n}{D_A} f_n({w})
\end{equation}
where $f({w}, z_{n,i})$ is the loss function to evaluate the error for approximating  ${y}_{n,i}$ with an input of ${x}_{n,i}$ and a selection of ${w}$, $D_A$ is defined as $D_A \triangleq \sum_{n \in \mathcal{N}} D_n$, and $f_n({w})$ represents the local loss function of $n$th client, defined as
\begin{equation}
f_n({w}) \triangleq \frac{1}{D_n} \sum_{z_{n,i} \in \mathcal{D}_n} f({w}, z_{n,i}), \forall n \in \mathcal{N}.
\end{equation}
For the ease of presentation in the following, with a randomly set $\xi_{n} \subseteq \mathcal{D}_n$, we define
\begin{equation}
f(w, \xi_n) \triangleq \frac{1}{|\xi_{n}|} \sum_{z_{n,i} \in \xi_n} f({w}, z_{n,i}), \forall n \in \mathcal{N}.
\end{equation}

Denoting $\gamma_n$ as $\gamma_n = {D_n}/{D_A}$ for $n \in \mathcal{N}$, the training task can be reformulated as the following optimization problem
\begin{equation} \label{e:prob_formu}
	\min_{w\in\mathcal{R}^d}\left\{F\left(w\right)\triangleq \sum_{n=1}^N\gamma_n f_n\left(w\right)\right\}
\end{equation}

Accordingly, the gradient of loss function of  $f_n(w)$, $f(w, \xi_n)$ with respect to $w$ can be written as $\nabla f_n({w})$ or $\nabla f(w, \xi_n)$, and the gradient of global loss function $F(w)$ with respect to $w$ is denoted as $\nabla F(w)$.  

In a traditional framework of FL, to find the solution of the problem in \cref{e:prob_formu}, a PS is obligated to interact with the involved clients iteratively round by round by exchanging the loss function's gradient or model parameter. This will lead to extremely heavy burden considering the model parameter is usually with high dimension. A FL training framework that can save the exchanging of loss function's gradient or model parameter is highly desirable.

{\bf Hybrid framework of SFL and HFL:}


\begin{wrapfigure}{r}{0.5\textwidth}
\centering
\includegraphics[width=0.5\textwidth]{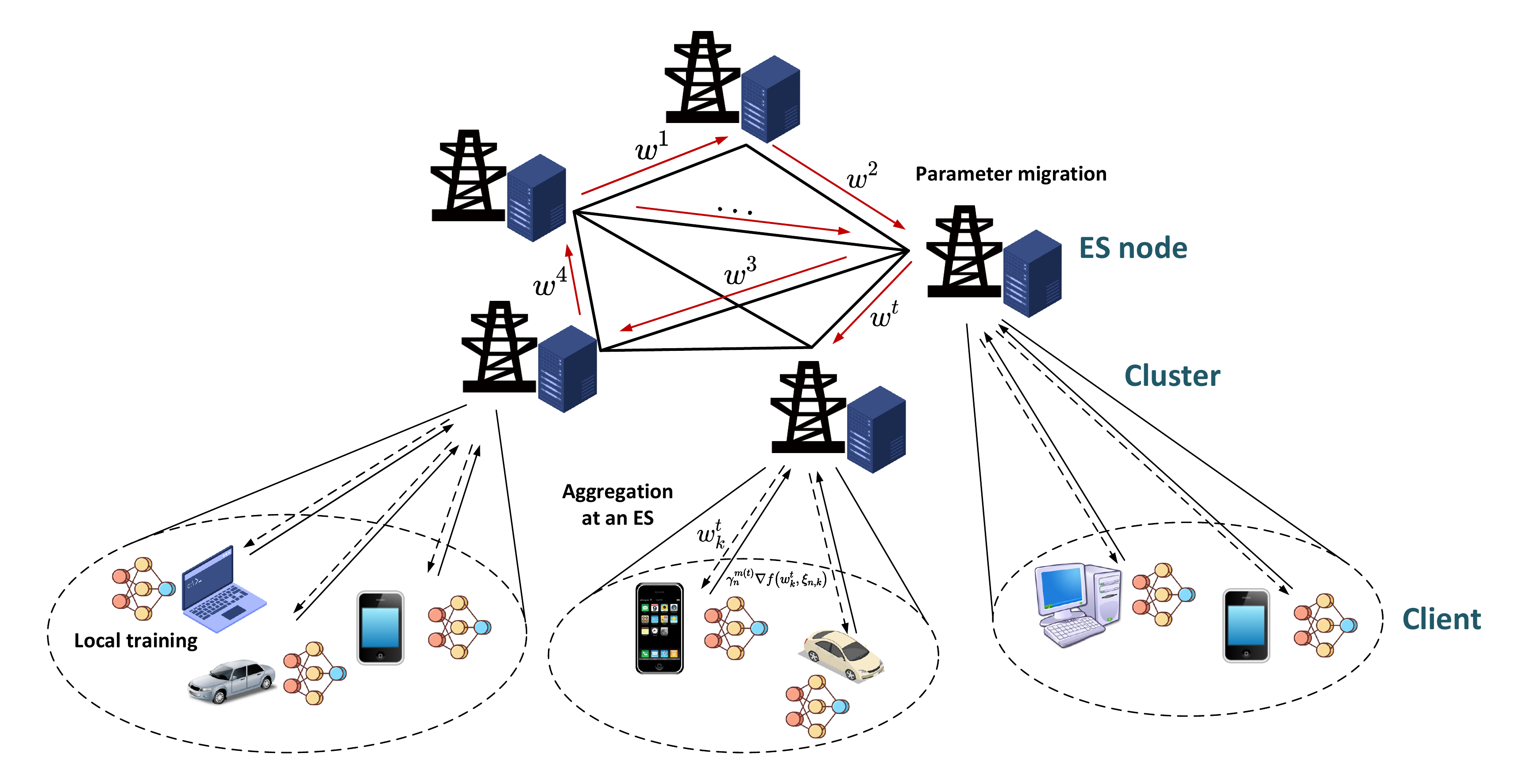}
\caption{The framework of SFL in hierarchical architecture. In this architecture, clients are divided into multiple clusters, each of which is managed by one ES. For each step of iteration, model parameter is firstly updated within one cluster through multiple interactions between the ES and the associated clients, and then migrated to a neighbor ES (cluster) for next step of iteration. 
}
\label{fig:architecture}
\end{wrapfigure}

In this work, we adopt a  communication-efficient hierarchical architecture.
As shown in Fig. \ref{fig:architecture}, the group of $N$ clients are divided into $M$ clusters, which compose the set $\mathcal{M} \triangleq \{1, 2, ..., M\}$.
For $m$th cluster, the set of associated clients are denoted as $\mathcal{N}_m$, and there is one ES that can interact with the clients in set $\mathcal{N}_m$, which is denoted as $m$th ES node.
The group of $M$ ES nodes are connected through communication links.
In previous HFL works, to approximate the training process of traditional FL and to fulfill the advantage of communication-efficiency for this hierarchical architecture, in $t$th round, the local gradients $\nabla f_n({w}^{t})$ or $\nabla f(w^t, \xi_n)$ for $n\in \mathcal{N}_m$ can be firstly aggregated at $m$th ES to obtain an aggregated model parameter $w_{m}^t$ for every $m\in \mathcal{M}$, 
and then form to be a global $w^{t+1}$ through aggregating the set of $\{w_{m}^t | m \in \mathcal{M}\}$ at a PS.
Once a global $w^{t+1}$ is generated, it will be broadcasted to every client in each cluster through the associated ES.
In such a procedure, the communication burden of uploading every specific $\nabla f_n({w}^{t})$ or $\nabla f(w^t, \xi_n)$ for $n\in \mathcal{N}_m$ via $m$th ES to the PS is saved. 
Bursty connecting requests from each client to the PS is also evaded.  

To achieve better communication-efficiency,  by referring to the framework of SFL, we do not assume the existence of PS and waive all the ES's from aggregating $\{w_{m}^t | m \in \mathcal{M}\}$ to be a global one like $w^{t+1}$ in any round $t$.
To be specific, for round $t$, only one selected ES, say $m(t)$th ES, first broadcasts its received most up-to-date model parameter by round $(t-1)$, denoted as $w^{t}$ for the ease of presentation, to all the clients in its cluster, i.e., $m(t)$th cluster, and then aggregates the local gradients in its cluster, i.e., $\nabla f_n({w}^{t})$ or $\nabla f(w^t, \xi_n)$ for $n \in \mathcal{N}_m$ by some rule, to obtain an aggregated model parameter $w^{t+1}$. 
At the ending stage of round $t$, $m$th ES selects a new ES to work as the model parameter broadcaster and aggregator in round $(t+1)$, say $m(t+1)$th ES, and push the most up-to-date model parameter $w^{t+1}$ to $m(t+1)$th ES. 

{\bf Design goal:} Under the above combinational framework of SFL and HFL, our goal is to design a proper aggregating rule at ES side and select a suitable $m(t+1)$ for each round $t$, so as to achieve convergent or optimal solution of the problem in \cref{e:prob_formu} as quickly as possible, while being robust to data heterogeneity.

\subsection{Algorithm Description}
To achieve our design goal, we develop Fed-CHS.
In each training round of Fed-CHS, there is only one cluster being active, within which clients and the ES interacts multiple times to leverage the local dataset at these clients to update the global model parameter. With the global model parameter updated, the ES in currently active cluster selects the next passing cluster (and ES) by some rule and then push the most up-to-date global model parameter to it, in preparation for next round of training.
Below, we give a full description of Fed-CHS (see Algorithm \ref{alg:train process}). Its crucial steps are explained in detail as follows.


\begin{itemize}[leftmargin=*]

\item
{\bf Training within in one cluster:}
In $t$th round, after receiving the global model parameter $w^{t}$ that is broadcasted from $m(t)$th ES, $K$ times of interactions between $m(t)$th ES and all the clients in set $\mathcal{N}_m$ is activated. 
In $k$th step, suppose the global model parameter already known by each client at the beginning of this step is $w_k^t$, client $n$ with $n\in \mathcal{N}_{m(t)}$ will updates its local model parameter based on a randomly selected dataset $\xi_{n,k} \subseteq \mathcal{D}_n$, and then return its gradient $\nabla f\left(w^t_k, \xi_{n, k}\right)$ to ES $m(t)$. For ES $m(t)$, it first aggregates these received local gradients with weighting coefficients $\gamma_{n}^{m(t)}$ such that
$\sum_{n\in \mathcal{N}_{m(t)}} \gamma_{n}^{m(t)} = 1$,
then use the learning rate $\eta_k$ to generate $w^t_{k+1}$, and finally broadcasts $w^t_{k+1}$ to every client in its cluster.
Specifically, $w^t_{k+1}$ can be written as 
\begin{equation} \label{e:w_iter}
w^t_{k+1}\!=\!w^t_k-\eta_k\sum_{n \in \mathcal{N}_m(t)} \gamma_{n}^{m(t)}  \nabla f\left(w^t_k, \xi_{n,k}\right), k  \in \mathcal{K} \triangleq \{0, 1, ..., K-1\}.
\end{equation}
With the iterative expression of $w^t_k$ in \cref{e:w_iter}, there is $w^{t+1}_0= w^{t+1} = w^{t}_{K}$. 

\item
{\bf The selection of next passing cluster:}
To select the next passing cluster, i.e., $m(t+1)$th cluster, in case Fed-CHS has run $t$ rounds of iterations, we propose to select $m(t+1)$ from $\mathcal{A}(m(t))$
by the following two-step rule.
{\bf Step 1:} Generate the set $\mathcal{C}(t) = \{m' | m' = \arg \min c(m'), m' \in \mathcal{A}(m(t))\}$. The set $\mathcal{C}(t)$ actually represents the set of ES nodes that is least traversed in last $t$ rounds of iteration.
If $|\mathcal{C}(t)|=1$, suppose $\mathcal{C}(t) = m^{\dag}$, output $m(t)= m^{\dag}$.
If $|\mathcal{C}(t)|>1$, go to Step 2;
{\bf Step 2:} Set $m^{\ddag} = \mathop{\arg \max} \limits_{m' \in \mathcal{C}(t)} D_{A, m'}$, output $m(t+1) = m^{\ddag}$. This operation is to select the cluster with the largest dataset from $\mathcal{C}(t)$. The essence of above two-step rule is to drive the learning process to cover a broader range of dataset or more diversely distributed dataset.
\end{itemize}

\begin{algorithm}[tb] 
\small
	\caption{Fed-CHS Algorithm} 
	\label{alg:train process}
	\begin{algorithmic}[1]
		\STATE {\bfseries Input:} Initial global model parameter $w^{0}$, client set $\mathcal{N}$, ES node set $\mathcal{M}$, the number of global iteration rounds $T$, the number of local iteration rounds $K$. 
		\STATE {\bfseries Ouput:} Updated global model parameter $w^{T}$.
		\STATE {\% \% \bf Initialization}
		\STATE {ES $m$ builds set $\mathcal{N}_m$, calculates $D_{A,m}$, and establish set $\mathcal{A}_{m}$,  for $m\in \mathcal{M}$. Set $M$-dimensional vector $c = (0,..., 0)^T$. Randomly select $m(0)$ from set $\mathcal{M}$ and set $w^0_0 = w^{0}$.}
		\FOR{$t=0$: $T-1$}
		\STATE {\%\% \bf Training within one cluster}
		\FOR{$k=0$:$K-1$}
		\STATE The $m(t)$th ES broadcasts the most up-to-date global model parameter $w^t_k$ to all the clients in set $\mathcal{N}_{m(t)}$.
		\FOR{every client $n \in \mathcal{N}_{m(t)}$ in parallel}
		\STATE Receive the global parameter $w^t_k$. Randomly select dataset $\xi_{n,k}$ and evaluate $\nabla f\left(w^t_k,\xi_{n, \zeta_k}\right)$. Upload $\nabla f\left(w^t_k,\xi_{n, \zeta_k}\right)$ to $m(t)$th ES.
		\ENDFOR
		\STATE {The $m(t)$th ES receives every $\nabla f\left(w^t_k,\xi_{n, \zeta_k}\right)$ for $n\in \mathcal{N}_{m(t)}$ and generates $w_{k+1}^t$ according to \cref{e:w_iter}.}
		\ENDFOR
		\STATE{\%\% \bf The selection of next passing cluster.}
		\STATE {The $m(t)$th ES retrieves the information of $D_{A,m}$ and the value of $c(m)$ from ES node $m \in \mathcal{A}_{m(t)}$, and selects the next passing node ES $m(t+1)$ according to the 2-step operation.}
		\STATE {The $m(t)$th ES sets $w^{t+1}_0=w^{t+1} = w^t_{K}$ and sends the most up-to-date global model parameter $w^{t+1}_0$ to $m(t+1)$th ES.}
		\STATE {The $m(t+1)$th ES updates $c(m(t+1)) = c(m(t+1))+1$.}
		\ENDFOR
		\STATE Output $w^{T}$.
	\end{algorithmic}
\end{algorithm}

{\bf Communication Overhead:}
With regard to communication burden for passing model parameters or gradients between ES nodes and clients, suppose we need $Q$ bits to quantize a model parameter vector or local gradient vector, both of which is a $d$ dimensional vector of floating numbers.
For $T$ round of iterations, suppose the maximal $|\mathcal{N}_m|$ for $m\in \mathcal{M}$ is $N_{\max}$, according to \cref{alg:train process}, the clients need to upload $TKQN_{\max}$ bits at most to ES nodes, reversely the ES nodes broadcast $TK$ times to deliver no exceeding $TKQN_{\max}$ bits to their associated clients, and $TQ$ bits is required to transmitted among neighbor ES nodes.

\section{Theoretical Results}

In this section, we theoretically analyze the convergence performance of Fed-CHS over the non-IID datasets. Due to limited space, necessary assumptions are listed in Section \ref{s:assum}.
Then analytical results of convergence on our proposed Fed-CHS are presented based on assumptions. 
All the proofs are deferred to \cref{apx:convex} and \cref{apx:nonconvex}.

\subsection{For \texorpdfstring{$\mu$}{mu}-strongly convex loss function}
\begin{theorem}
	\label{thm:convex}
With the holding of \cref{ass:smooth}, \cref{ass:convex}, \cref{ass:gradient}, and \cref{ass:heterogeneity}, 
for the set of $\eta_k$ such that $\eta_k \leq \frac{1}{2LK}, \forall k\in \mathcal{K}$,
define
\begin{align}
		& \Delta_{m} \triangleq \sum_{n \in \mathcal{N}_{m}} \gamma_{n}^{m} \left(f_n\left(w^{t}\right)-f_n\left(w^*\right)\right); \tau_{m} \triangleq \sum_{n \in \mathcal{N}_{m}} \gamma_{n}^{m} \left(f_n\left(w^*\right)-f_n^*\right), \forall m \in \mathcal{M}; \beta \triangleq \frac{\mu}{2} \sum_{k=0}^{K-1} \eta_k, 
	\end{align}
where $w^*$ is the global minimizer of loss function $F(w)$ and $f_n^*$ is the minimum of $f_n(w)$,
then there is
	\begin{align}
		\label{eqn:thm1}
		\nonumber
	&	\mathbb{E}\{F\left(w^{T}\right)\}-F\left(w^*\right)   
  \le  \frac{L}{2}\left(1-\beta\right)^{T} \left\|w^0-w^*\right\|^2 +
  \frac{L}{2}\sum_{k=0}^{K-1} 6LK \eta_k^2 \sum_{t=0}^{T-1} \left(1-\beta\right)^{t} \tau_{m(t)}
   \\
  & +\frac{L}{2}\sum_{k=0}^{K-1} 2\eta_k\left(1-2LK\eta_k\right)\left(LK\eta_k-1\right) \sum_{t=0}^{T-1} \left(1-\beta\right)^{t} \Delta_{m(t)} 
    +\frac{L}{2}\sum_{k=0}^{K-1} \frac{2k}{K} \sum_{j=0}^{k-1} \eta_j^2 \sum_{t=0}^{T-1} \left(1-\beta\right)^{t} G^2 \\ 
    & + \frac{L}{2}\sum_{k=0}^{K-1} \left(K\eta_{k}^2+\frac{2k}{K} \sum_{j=0}^{k-1} \eta_j^2\right) \sum_{t=0}^{T-1} \left(1-\beta\right)^{t} {\theta_{m(t)}^2} \nonumber
	\end{align}
\end{theorem}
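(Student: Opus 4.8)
The plan is to reduce the claim to a one-round contraction for $\mathbb{E}\|w^{t}-w^{*}\|^{2}$ and invoke $L$-smoothness once at the very end. By \cref{ass:smooth} and $\nabla F(w^{*})=0$ we have $\mathbb{E}\{F(w^{T})\}-F(w^{*})\le\frac{L}{2}\mathbb{E}\|w^{T}-w^{*}\|^{2}$, which already explains the common prefactor $\frac{L}{2}$ on every term of \eqref{eqn:thm1}. So it suffices to prove a per-round inequality
\[
\mathbb{E}\|w^{t+1}-w^{*}\|^{2}\le(1-\beta)\,\mathbb{E}\|w^{t}-w^{*}\|^{2}+E_{t},
\]
where $E_{t}$ is the bracketed $k$-sum in \eqref{eqn:thm1} evaluated with $\tau_{m(t)},\Delta_{m(t)},G^{2},\theta_{m(t)}^{2}$. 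Unrolling this recursion from $t=T-1$ down to $t=0$ produces $(1-\beta)^{T}\|w^{0}-w^{*}\|^{2}$ together with a geometrically weighted sum of the $E_{t}$'s, which is exactly the right-hand side of \eqref{eqn:thm1}.

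For the per-round inequality I would fix $t$, write the local step as $w^{t}_{k+1}=w^{t}_{k}-\eta_{k}g^{t}_{k}$ with $g^{t}_{k}=\sum_{n\in\mathcal{N}_{m(t)}}\gamma_{n}^{m(t)}\nabla f(w^{t}_{k},\xi_{n,k})$, and expand
\[
\|w^{t}_{k+1}-w^{*}\|^{2}=\|w^{t}_{k}-w^{*}\|^{2}-2\eta_{k}\langle w^{t}_{k}-w^{*},g^{t}_{k}\rangle+\eta_{k}^{2}\|g^{t}_{k}\|^{2}.
\]
Taking the conditional expectation over the minibatches so that $\mathbb{E}\,g^{t}_{k}=\bar g^{t}_{k}:=\sum_{n}\gamma_{n}^{m(t)}\nabla f_{n}(w^{t}_{k})$, I would (i) use $\mu$-strong convexity of each $f_{n}$ (\cref{ass:convex}) to get $\langle\bar g^{t}_{k},w^{t}_{k}-w^{*}\rangle\ge\sum_{n}\gamma_{n}^{m(t)}(f_{n}(w^{t}_{k})-f_{n}(w^{*}))+\frac{\mu}{2}\|w^{t}_{k}-w^{*}\|^{2}$; (ii) bound $\mathbb{E}\|g^{t}_{k}\|^{2}\le\|\bar g^{t}_{k}\|^{2}+(\text{variance})$ and, by smoothness/convexity, $\|\bar g^{t}_{k}\|^{2}\le2L\sum_{n}\gamma_{n}^{m(t)}(f_{n}(w^{t}_{k})-f_{n}^{*})$, then split $f_{n}(w^{t}_{k})-f_{n}^{*}=(f_{n}(w^{t}_{k})-f_{n}(w^{*}))+(f_{n}(w^{*})-f_{n}^{*})$ to surface the $\tau_{m(t)}$ contribution, the variance and the within-cluster gradient dissimilarity being absorbed into $G^{2}$ and $\theta_{m(t)}^{2}$ via \cref{ass:gradient} and \cref{ass:heterogeneity}; and (iii) replace $f_{n}(w^{t}_{k})$ by $f_{n}(w^{t}_{0})=f_{n}(w^{t})$ up to a client-drift error controlled by $\mathbb{E}\|w^{t}_{k}-w^{t}_{0}\|^{2}\le k\sum_{j=0}^{k-1}\eta_{j}^{2}\,\mathbb{E}\|g^{t}_{j}\|^{2}$, with $\mathbb{E}\|g^{t}_{j}\|^{2}$ itself bounded by $G^{2}$, $\theta_{m(t)}^{2}$ and the function gaps, which is the origin of the factors $\frac{2k}{K}\sum_{j<k}\eta_{j}^{2}$. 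Summing over $k=0,\dots,K-1$ telescopes $\|w^{t}_{K}-w^{*}\|^{2}$ back to $\|w^{t}_{0}-w^{*}\|^{2}$; using $\eta_{k}\le\frac{1}{2LK}$ keeps the $\Delta_{m(t)}$ coefficient $2\eta_{k}(1-2LK\eta_{k})(LK\eta_{k}-1)$ nonpositive and controls the cross terms, and lower-bounding the stepwise $\|w^{t}_{k}-w^{*}\|^{2}$ in terms of $\|w^{t}_{0}-w^{*}\|^{2}$ yields the contraction factor $(1-\beta)$ with $\beta=\frac{\mu}{2}\sum_{k}\eta_{k}$.

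The main obstacle is the inner-loop bookkeeping forced by the sequential-hierarchical structure: because round $t$ only touches cluster $m(t)$, $\bar g^{t}_{k}$ is the cluster-weighted gradient rather than $\nabla F$, so the per-client residuals do not cancel and one must carry $\Delta_{m(t)}$, $\tau_{m(t)}$ and $\theta_{m(t)}$ through every estimate, while at the same time the drift terms $\|w^{t}_{k}-w^{t}_{0}\|$ couple the $K$ inner steps nonlinearly. Balancing the smoothness bound on $\|\bar g^{t}_{k}\|^{2}$, the strong-convexity contraction, and the drift control so that the $\|w^{t}_{k}-w^{*}\|^{2}$ terms telescope cleanly into $(1-\beta)$ under $\eta_{k}\le\frac{1}{2LK}$, while checking that the loose constants assemble into precisely the coefficients $6LK\eta_{k}^{2}$, $2\eta_{k}(1-2LK\eta_{k})(LK\eta_{k}-1)$, $\frac{2k}{K}\sum_{j<k}\eta_{j}^{2}$ and $K\eta_{k}^{2}+\frac{2k}{K}\sum_{j<k}\eta_{j}^{2}$, is the delicate and tedious part; the rest is routine once this recursion is in place.
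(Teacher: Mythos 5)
Your proposal is correct and follows essentially the same route as the paper: reduce to a per-round contraction $\mathbb{E}\|w^{t}-w^{*}\|^{2}\le(1-\beta)\,\mathbb{E}\|w^{t-1}-w^{*}\|^{2}+E_{t-1}$ using strong convexity of the $f_n$, the smoothness bound $\|\nabla f_n(w)\|^{2}\le 2L\left(f_n(w)-f_n^{*}\right)$, the drift estimate $\|w^{t-1}-w^{t-1}_{k}\|^{2}\le k\sum_{j=0}^{k-1}\eta_j^{2}\left(G^{2}+\theta_{m(t-1)}^{2}\right)$, and the split of function gaps into $\Delta_{m(t)}$ and $\tau_{m(t)}$, then unroll geometrically and apply $L$-smoothness at $w^{*}$ to obtain the $\frac{L}{2}$ prefactor. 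The only cosmetic difference is that the paper expands the entire $K$-step block $\|w^{t-1}-w^{*}-\sum_{k}\eta_{k}g_{k}^{t-1}\|^{2}$ at once, with Cauchy--Schwarz supplying the factor $K$ that appears in the stated coefficients, rather than telescoping the inner steps as you do, so to land on exactly $K\eta_k^{2}$, $6LK\eta_k^{2}$ and $2\eta_k\left(1-2LK\eta_k\right)\left(LK\eta_k-1\right)$ you would apply the same $K$-factor loosening at that point.
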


 \begin{proof}
 	See \cref{apx:convex}.
 \end{proof}

\begin{remark} \label{r:convex}
The bound of optimality gap in \cref{eqn:thm1} reveals the impact of various system parameters on the performance of training model.
To reduce the optimality gap, we need to suppress the right-hand side of \cref{eqn:thm1} as much as possible.
In the following, discussion is unfolded to show how to make the suppression under some specific configuration of $\eta_k$ for $k\in \mathcal{K}$.
\begin{itemize}[leftmargin=*]
\item Empirically, we do not wish $\{\eta_k\}$ to be too small. Hence we first set $\eta_k$ as $\eta_k = \frac{1}{ 2 L K\sqrt{k+1}}$, which satisfies $\eta_k < \frac{1}{2LK}$ for $k\in \mathcal{K}$ naturally.
In this case, $\beta = \frac{\mu}{2} \sum_{k=0}^{K-1} \eta_k  = \frac{\mu}{4L} \sum_{k=0}^{K-1} \frac{1}{K \sqrt{k}} =  \mathcal{O}(1/\sqrt{K})$, which can be within $(0,1)$ as $K$ is large enough. Thus the first term of right-hand side of \cref{eqn:thm1} will trend to be zero at linear rate as $T$ grows. For the rest terms of right-hand side of \cref{eqn:thm1}, we notice that $\sum_{t=0}^{T-1} \left(1-\beta\right)^t \le \frac{1}{\beta} = \mathcal{O}(\sqrt{K})$, $\sum_{k=0}^{K-1}\eta_k^2 = \mathcal{O}(\frac{\ln K}{K^2})$,  $\sum_{k=0}^{K-1} \frac{k}{K} \sum_{j=0}^{k-1} \eta_j^2 = \mathcal{O}(\frac{\ln K}{K})$, $\sum_{k=0}^{K-1} \eta_k(1 - 2LK \eta_k)(LK\eta_k -1)<0$.
However, we are still unsure about the sign of $\Delta_{m(t)}$ by inspecting its definition.
By assuming $|\Delta_{m(t)} | \leq \Delta_{\max}$, $|\theta_{m(t)} | \leq \theta_{\max}$ and $\tau_{m(t)} \leq \tau_{\max}$, the asymptotic expression of  the right-hand side of \cref{eqn:thm1} can be written as $O\left(\left(1- \beta\right)^T + \frac{\ln K}{\sqrt{K}}\left(\theta_{\max}^2+G^2+\tau_{\max}\right)\right) + \mu \Delta_{\max}$, which will converge to $\mu \Delta_{\max}$ at linear rate with $T$ and at rate $\mathcal{O}(\ln K/\sqrt{K})$ with $K$.
\item To further speedup convergence rate with $K$, we explore another way of $\{\eta_k\}$ configuration, which sets $\eta_k = \frac{1}{2L K^q}$ with $q\geq 2$. This set of $\{\eta_k\}$ also fulfills $\eta_k \leq \frac{1}{2LK}$ for $k\in \mathcal{K}$. By following the similar analytical procedure as for previous $\{\eta_k\}$ configuration, the optimality gap can be bounded as $\mathcal{O}((1-\beta)^T + \frac{1}{K^{q-1}})+ \mu \Delta_{\max}$, which achieves a faster convergence rate with $K$. This convergence rate is also adjustable by $q$. 
\item Note that the optimality gap is not zero because of the existence of $\Delta_{m(t)}$. In this setup, we partly relax the assumption of data heterogeneity. Specifically, we still respect the data heterogeneity among clients within any cluster but assume the data distribution among clusters to be identical. This assumption is still reasonable. To given an instance, for the integrated LEO satellite terrestrial network demonstrated in Section \ref{apx:app}, every bypassing LEO satellite is an ES node but covers the same group of clients on the ground. Hence the data distribution over clusters is identical and the expectation of $\Delta_{m(t)}$ can be regarded as 0. In this case, with previously suggested setup of $\{\eta_k\}$, zero optimality gap is achieved.
\end{itemize}	
\end{remark}

\subsection{For non-convex loss function}
\begin{theorem}
	\label{thm:nonconvex}
	With the holding of \cref{ass:smooth} and \cref{ass:heterogeneity}, {for the set of $\eta_k$ such that $\eta_k \leq {1}/{(LK)}, \forall k\in \mathcal{K}$}, there is
	\begin{align}
		\label{eqn:thm2}	
 \mathbb{E}\left[\frac{1}{T}\sum_{t=0}^{T-1} \left\|\nabla F\left(w^t\right)\right\|^2\right] 
  \le\frac{4\left[F\left(w^0\right)-F\left(w^*\right)\right]}{T\sum_{k=0}^{K-1} \eta_k} +\left(\frac{2LK\sum_{k=0}^{K-1} {\eta^2_k}}{\sum_{k=0}^{K-1} \eta_k} + 4\right)\frac{1}{T}\sum_{t=0}^{T-1}{ \theta_{m(t)}^2} +\frac{2}{T}\sum_{t=0}^{T-1} \sigma^2 
	\end{align}
\end{theorem}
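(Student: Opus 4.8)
The plan is a one-round descent estimate on the global objective $F$, where round $t$ is viewed as $K$ stochastic steps of \eqref{e:w_iter} on the cluster objective $F_{m(t)}(w)\triangleq\sum_{n\in\mathcal{N}_{m(t)}}\gamma_n^{m(t)}f_n(w)$, followed by a telescoping over $t=0,\dots,T-1$. Write $g_k^t\triangleq\sum_{n\in\mathcal{N}_{m(t)}}\gamma_n^{m(t)}\nabla f(w_k^t,\xi_{n,k})$, so that $w_{k+1}^t=w_k^t-\eta_k g_k^t$ and $w^{t+1}=w_K^t$. Conditioning on the filtration $\mathcal{F}_k^t$ generated by the history up to inner step $k$ of round $t$, one has $\mathbb{E}[g_k^t\mid\mathcal{F}_k^t]=\nabla F_{m(t)}(w_k^t)$ and, by Jensen over the clients together with the per-client stochastic-gradient variance bound, $\mathbb{E}[\|g_k^t-\nabla F_{m(t)}(w_k^t)\|^2\mid\mathcal{F}_k^t]\le\sigma^2$. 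First I would write the one-step $L$-smoothness (\cref{ass:smooth}) descent inequality $F(w_{k+1}^t)\le F(w_k^t)-\eta_k\langle\nabla F(w_k^t),g_k^t\rangle+\tfrac{L\eta_k^2}{2}\|g_k^t\|^2$ and take $\mathbb{E}[\cdot\mid\mathcal{F}_k^t]$, turning the middle term into $-\eta_k\langle\nabla F(w_k^t),\nabla F_{m(t)}(w_k^t)\rangle$ and the last into $\tfrac{L\eta_k^2}{2}(\|\nabla F_{m(t)}(w_k^t)\|^2+\sigma^2)$.

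Next I would split the inner product with the polarization identity, $-\langle\nabla F(w_k^t),\nabla F_{m(t)}(w_k^t)\rangle=\tfrac12(-\|\nabla F(w_k^t)\|^2-\|\nabla F_{m(t)}(w_k^t)\|^2+\|\nabla F(w_k^t)-\nabla F_{m(t)}(w_k^t)\|^2)$, and bound the discrepancy $\|\nabla F(w_k^t)-\nabla F_{m(t)}(w_k^t)\|^2\le\theta_{m(t)}^2$ by \cref{ass:heterogeneity}. The coefficient of $\|\nabla F_{m(t)}(w_k^t)\|^2$ then equals $-\tfrac{\eta_k}{2}(1-L\eta_k)\le 0$ exactly because $\eta_k\le 1/(LK)\le 1/L$, so it can be discarded (or retained as a negative reserve for the drift bound below), leaving $\mathbb{E}[F(w_{k+1}^t)\mid\mathcal{F}_k^t]\le F(w_k^t)-\tfrac{\eta_k}{2}\|\nabla F(w_k^t)\|^2+\tfrac{\eta_k}{2}\theta_{m(t)}^2+\tfrac{L\eta_k^2}{2}\sigma^2$. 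To reach the target $\|\nabla F(w^t)\|^2=\|\nabla F(w_0^t)\|^2$ I would apply $L$-Lipschitzness of $\nabla F$, $\|\nabla F(w_k^t)\|^2\ge\tfrac12\|\nabla F(w^t)\|^2-L^2\|w_k^t-w^t\|^2$, and then control the inner-loop drift $\mathbb{E}\|w_k^t-w^t\|^2=\mathbb{E}\|\sum_{j<k}\eta_j g_j^t\|^2\le k\sum_{j<k}\eta_j^2\,\mathbb{E}\|g_j^t\|^2$ via $\mathbb{E}\|g_j^t\|^2\le\mathbb{E}\|\nabla F_{m(t)}(w_j^t)\|^2+\sigma^2$, where the stepsize cap $\eta_k\le 1/(LK)$ is used to keep the self-referential coefficient $kL^2\sum_{j<k}\eta_j^2$ bounded so that all drift-induced terms are $O(\eta^2)$-small.

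Finally I would sum the resulting per-step bound over $k=0,\dots,K-1$ and then over $t=0,\dots,T-1$, telescope the $F$-terms, invoke $F(w^T)\ge F(w^*)$ (as $w^*$ minimizes $F$), divide by $\tfrac14 T\sum_k\eta_k$, and fold the drift remainders into the $\theta_{m(t)}^2$ and $\sigma^2$ terms; the constant $4$ in the first term of \eqref{eqn:thm2} and the $+4$ and $2$ in the heterogeneity and variance terms track this normalization together with $LK\eta_k\le 1$. I expect the decisive obstacle to be precisely this drift step: $\mathbb{E}\|w_k^t-w^t\|^2$ feeds back into both the inner-product term and the smoothness term, and one must verify that under $\eta_k\le 1/(LK)$ the induced positive multiple of $\|\nabla F(w^t)\|^2$ never cancels the $-\tfrac14\sum_k\eta_k\|\nabla F(w^t)\|^2$ descent — this is exactly why the stepsize must scale as $1/(LK)$ rather than as a $K$-independent $1/L$. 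A secondary, purely bookkeeping, point is that round $t$ touches only cluster $m(t)$, so the penalty is the cluster-specific $\theta_{m(t)}^2$ and must be carried through the telescoping without symmetrizing over $\mathcal{M}$.
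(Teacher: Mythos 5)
Your route is viable but genuinely different from the paper's. The paper never performs a per-inner-step descent on $F$: it fixes the round-start point $w^t$, decomposes $\|\nabla F(w^t)\|^2=\langle\nabla F(w^t),g_k^t\rangle+\langle\nabla F(w^t),\nabla F_{m(t)}(w^t)-g_k^t\rangle+\langle\nabla F(w^t),\nabla F(w^t)-\nabla F_{m(t)}(w^t)\rangle$, controls the two error inner products by \cref{lemma4} and \cref{lemma5} (yielding the $\tfrac14\|\nabla F(w^t)\|^2$, $\theta_{m(t)}^2$ and $\tfrac12\sigma^2$ terms), and then applies $L$-smoothness \emph{once per round} to the whole displacement $w^t-w^{t+1}=\sum_k\eta_k g_k^t$, with Cauchy--Schwarz over the $K$ inner steps producing the $LK\sum_k\eta_k^2$ factor in \cref{eqn:thm2}; the condition $\eta_k\le 1/(LK)$ is used only to kill the $\|\nabla F_{m(t)}(w^t)\|^2$ coefficient. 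In doing so the paper's argument silently identifies quantities evaluated at $w_k^t$ with their values at $w^t$ (e.g.\ it bounds $\|\nabla F_{m(t)}(w^t)-g_k^t\|^2$ by $\theta_{m(t)}^2$ even though $g_k^t$ is computed at $w_k^t$), i.e.\ it does not carry an inner-loop drift term at all in the non-convex case. Your step-level smoothness analysis with an explicit $\|w_k^t-w^t\|^2$ drift bound is the more standard local-SGD treatment, is the very issue the paper's proof sidesteps, and would in fact avoid the crude round-level Cauchy--Schwarz (your variance coefficient comes out as $2L\sum_k\eta_k^2/\sum_k\eta_k$ rather than the paper's $2LK\sum_k\eta_k^2/\sum_k\eta_k$), at the price of extra drift remainders that must be absorbed into the $\theta_{m(t)}^2$ term and into the discarded negative $\|\nabla F_{m(t)}(w_k^t)\|^2$ reserve.

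Two caveats. First, you have swapped the paper's symbols: by \cref{ass:heterogeneity} the sampling fluctuation $\mathbb{E}\|g_k^t-\nabla F_{m(t)}(w_k^t)\|^2$ is bounded by $\theta_{m(t)}^2$, while the cluster-versus-global discrepancy $\mathbb{E}\|\nabla F(w)-\nabla F_{m(t)}(w)\|^2$ is bounded by $\sigma^2$; with your labels the constants in \cref{eqn:thm2} would attach to the wrong quantities, so the relabeling must be undone before the final bound matches the theorem. Second, the drift absorption you flag is real but not yet closed: the $\|\nabla F_{m(t)}(w_j^t)\|^2$ portion of $L^2\mathbb{E}\|w_k^t-w^t\|^2$ must be dominated by the retained $-\tfrac{\eta_j}{2}(1-L\eta_j)\|\nabla F_{m(t)}(w_j^t)\|^2$ terms, and with $\eta_j\le 1/(LK)$ this comparison is tight (it needs roughly $L\eta_jK\le 1-L\eta_j$), so you either need a marginally smaller stepsize, a sharper drift estimate, or a slight loosening of the final constants; until that bookkeeping is done the stated coefficients $4$, $+4$ and $2$ in \cref{eqn:thm2} are not yet established by your argument.
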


 \begin{proof}
 	See \cref{apx:nonconvex}.
 \end{proof}

\begin{remark} \label{r:nonconvex}
The bound in \cref{eqn:thm2} clearly reveals how the sampling variance in one cluster, which is indicated by $\theta_m^2$ for $m\in \mathcal{M}$, and the data heterogeneity among clusters, which is indicated by $\sigma^2$, affect the convergence performance. 
To achieve a tighter bound, we made the following configurations of $\{\eta_k\}$:
\begin{itemize}[leftmargin=*]
\item Like the discussion in \cref{r:convex}, we first set { $\eta_k={1}/{(LK\sqrt{k+1})}$ for $k\in \mathcal{K}$.} In this case, ${1}/{\sum_{k=0}^{K-1} \eta_k} = \mathcal{O}(\sqrt{K})$ and ${(K \sum_{k=0}^{K-1} \eta_k^2)}/{\sum_{k=0}^{K-1} \eta_k} = \mathcal{O}\left({\ln K}/{\sqrt{K}}\right)$, then the right-hand side of \cref{eqn:thm2} is upper bounded by 
\begin{equation}
O\left(\frac{\sqrt{K}}{T}\left(F(w^0)-F(w^*)\right)+\frac{\ln K}{\sqrt{K}} \sum_{t=0}^{T-1} \theta_{m(t)}^2\right) + \frac{4}{T} \sum_{t=0}^{T-1} \theta_{m(t)}^2 + 2 \sigma^2, \label{e:nonconvex_gap_bound}
\end{equation}
which will converge if $T$  grows faster than $\sqrt{K}$ and  $\mathcal{O}\left(({\sqrt{K}}/{T})\left(F(w^0)-F(w^*)\right)+ ({\ln K}/{\sqrt{K}}) \sum_{t=0}^{T-1} \theta_{m(t)}^2\right)$ goes to zero with the increase of $K$.  
\item To further speedup convergence and express convergence gap in a simple way, we alternatively { set $T^{q_1}= K$ and  $\eta={1}/({LT^{q_2}})$ with $q_1 \in \left(0,1\right)$, $q_2 \ge q_1$, and $1 + q_1 > q_2$. } With such a configuration, the right-hand side of \cref{eqn:thm2} is bounded by 
\begin{gather}
\mathcal{O}\left(\frac{1}{T^{1+q_1 - q_2}}\left(F(w^0)-F(w^*)\right)+\frac{1}{T^{q_2 - q_1}} \sum_{t=0}^{T-1} \theta_{m(t)}^2\right) + \frac{4}{T} \sum_{t=0}^{T-1} \theta_{m(t)}^2 + 2 \sigma^2,
\label{e:nonconvex_gap_bound_2}
\end{gather}
which converges at a rate $O(1/T^{1+q_1-q_2})$.
The convergence rate can be easily adjusted by changing $q_1$ or $q_2$, such that $q_1 \in \left(0,1\right)$, $q_2 \ge q_1$, and $1 + q_1 > q_2$, and will be surely faster then the bound in \cref{e:nonconvex_gap_bound} as $T$ and $K$ grows.
\item From \cref{e:nonconvex_gap_bound} and \cref{e:nonconvex_gap_bound_2}, it can be also observed that these bounds will converge to zero as the variance lead by sampling in one cluster (described by $\theta_m^2$ for $m\in \mathcal{M}$) and the data heterogeneity among the clusters (represented by $\sigma^2$) vanish. Hence stationary point is reached.
\end{itemize}		
\end{remark}
\begin{table*}[tb]
\caption{The global model performance of different algorithms on MNIST, CIFAR-10 and CIFAR-100 with $\lambda$=0.3 or $\lambda$=0.6. We set 100 clients, 10 ESs, T=4000, K=20 and other general settings.}
\label{tab:sota}
\renewcommand{\arraystretch}{1.5}
\resizebox{\textwidth}{!}{%
\begin{tabular}{cccccccccc}
\hline
Dataset                    & Loss Function & \multicolumn{2}{c}{Fed-CHS}       & \multicolumn{2}{c}{FEDAVG}      & \multicolumn{2}{c}{WRWGD}          & \multicolumn{2}{c}{Hier-Local-QSGD}    \\ \hline
                           &               & Dirichlet(0.3) & Dirichlet(0.6) & Dirichlet(0.3) & Dirichlet(0.6) & Dirichlet(0.3) & Dirichlet(0.6) & Dirichlet(0.3) & Dirichlet(0.6) \\ \hline
\multirow{2}{*}{MNIST}     & MLP            & 0.9811         & 0.9811         & \textbf{0.9813}         & \textbf{0.9821}         & 0.9761        & 0.9767       & 0.9705         & 0.9751         \\ \cline{2-10} 
                           & LENET         & \textbf{0.9921}         & \textbf{0.9918}         & 0.9912         & 0.9916         & 0.9891        & 0.9888         & 0.9909         & 0.9907         \\ \hline
\multirow{2}{*}{CIFAR-10}  & MLP            & \textbf{0.6249}         & 0.6324         & 0.5775         & \textbf{0.6380}         & 0.5166         & 0.5338         & 0.5685         & 0.5772         \\ \cline{2-10} 
                           & LENET         & \textbf{0.8198}         & \textbf{0.8237}         & 0.7364         & 0.8042         & 0.7378         & 0.7631         & 0.6805         & 0.7545         \\ \hline
\multirow{2}{*}{CIFAR-100} & MLP            & \textbf{0.3316}         & \textbf{0.3322}         & 0.3067         & 0.3123         & 0.1950         & 0.2273         & 0.2249         & 0.2636         \\ \cline{2-10} 
                           & LENET         & \textbf{0.4766}         & \textbf{0.4790}         & 0.3151         & 0.3184         & 0.2920         & 0.3245         & 0.3558         & 0.4021         \\ \hline
\end{tabular}%
}
\end{table*}

\section{Experiments}
\label{s:exp}
Several important experimental results are presented in this section, whose detailed discussions are postponed to \cref{app:hyperparameter} and \cref{app:futher_comparison}.

\subsection{Setup}


{\bf Datasets and Models:}
{We experiment with three datasets, MNIST, CIFAR-10 and CIFAR-100 using the LENET \citep{726791,lecun2015lenet} and Multi-Layer Perceptron (MLP) \citep{pmlr-v162-yue22a}  models. }
The details for datasets are present in \cref{apx:exp}.
The non-IID settings adopted by us involve Dirichlet($\lambda$): the label distribution on each device follows the Dirichlet distribution with $\lambda>0$ being a concentration parameter.
A larger value of $\lambda$ indicates a more homogeneous distribution across each dataset.
The loss function in LENET model and MLP model can be regarded as non-convex and convex, respectively.
More details can be found in \cref{apx:exp}.

{\bf Baselines:}
{
To empirically highlight the training performance of our proposed framework, we first compare it in the general settings with FedAvg \citep{mcmahan2017communication}, Weighted RWGD (WRWGD) \citep{8778390}, Hier-Local-QSGD \citep{9834296} based on MNIST, CIFAR-10 and CIFAR-100 datasets. 
{
Then, in order to evaluate the communication overhead, we compare our proposed framework in the similar settings with FedAvg compressed by QSGD \citep{NIPS2017_6c340f25} and Hier-Local-QSGD based on MNIST, CIFAR-10 datasets for LENET model. 
}
}

{\bf Metrics:}
To investigate the performance of Fed-CHS, we compare it with other algorithms by measuring the accuracy in the test. 
A higher test accuracy means that the performance of the associated algorithm is better. 
During the training process, we also evaluate the communication overhead for passing model parameters.
{
Specifically, we compare the total communication cost for passing model parameters under different algorithms to reach specific threshold of test accuracy, denoted as $\Gamma$.
}
{
For MNIST, CIFAR-10 and CIFAR-100 dataset, $\Gamma$ is selected to be $0.98, 0.72, 0.40$, respectively, by referring to achievable test accuracy in \cref{tab:sota}.
}

\subsection{Results for Training Performance}
\label{s:acc for baseline}

The experimental results are reported in \cref{tab:sota}.
These results show that Fed-CHS can achieve higher accuracy under the majority of configurations, although it may not be consistently superior to other algorithms.
Particularly noteworthy is our proposed Fed-CHS expands its advantage over other baseline algorithms when the data distribution becomes more heterogeneous.

For the easy datasets (MNIST), Fed-CHS is not always the best but only has a tiny performance gap compared with FedAvg, which achieves highest accuracy in this case.
It is worthy to note that MNISt dataset has limited number of target classes, which can drive every testing algorithm to perform well easily. Hence some slight disadvantage on MNIST for Fed-CHS does not mean it is inferior to other baselines. 
{
For the MLP model on the CIFAR-10 or CIFAR-100 datasets, each algorithm performs poorly, especially for WRWGD and Hier-Local-QSGD. This explains the relatively low level of test accuracy for all the testing algorithms.
The above situation alleviates a lot for LeNet model, except of FedAvg on CIFAR-100 dataset.
}

Additionally, to highlight the stability of our algorithm, the performance of Fed-CHS and other algorithms in different settings where clients use different models and Dirichlet parameters under the datasets of CIFAR10, CIFAR100, and MNIST is plotted with round number $T$, as given in \cref{app:futher_comparison}.
An improvement of accuracy over baselines ranges from 
$0.02\%$ and $33.95\%$ can be observed. 
Further analysis can be also found from \cref{app:futher_comparison}.

\subsection{Results for Communication Overhead}
\label{s:exp for com cost}

\begin{figure}[ht]
    \centering
    \subfigure[MNIST]{
        \includegraphics[width=0.3\linewidth]{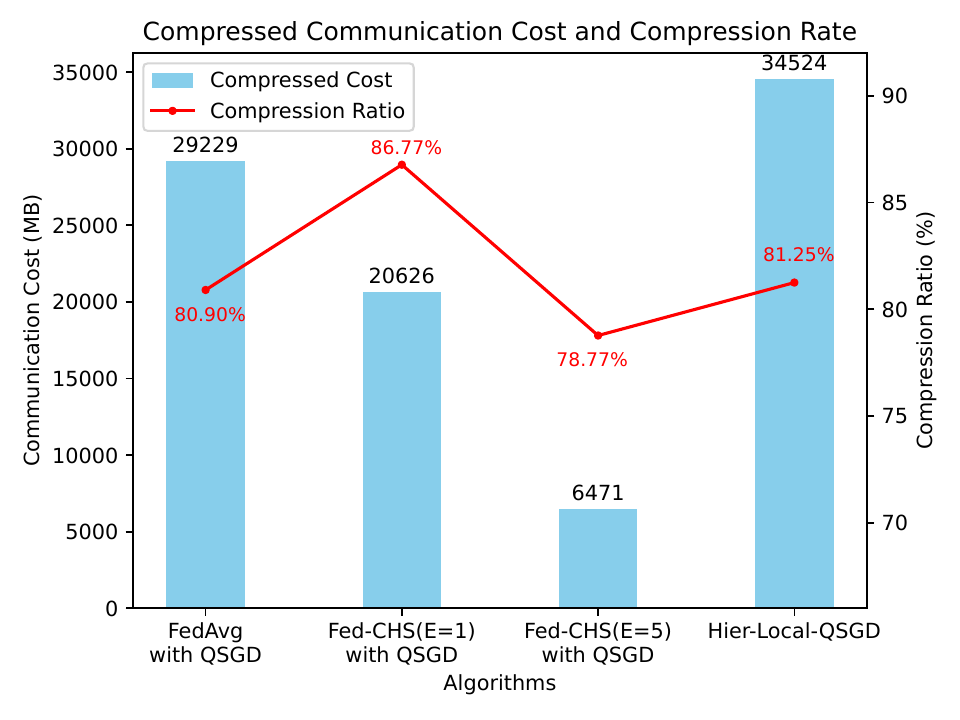}
    }%
    \subfigure[CIFAR-10]{
        \includegraphics[width=0.3\linewidth]{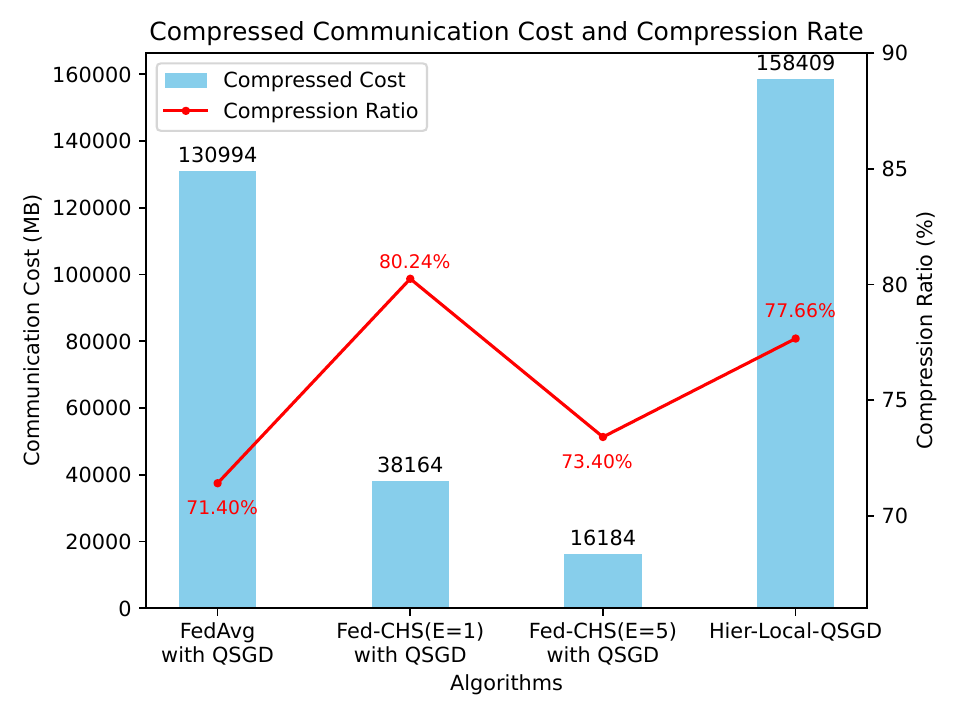}
    }%
    \subfigure[CIFAR-100]{
        \includegraphics[width=0.3\linewidth]{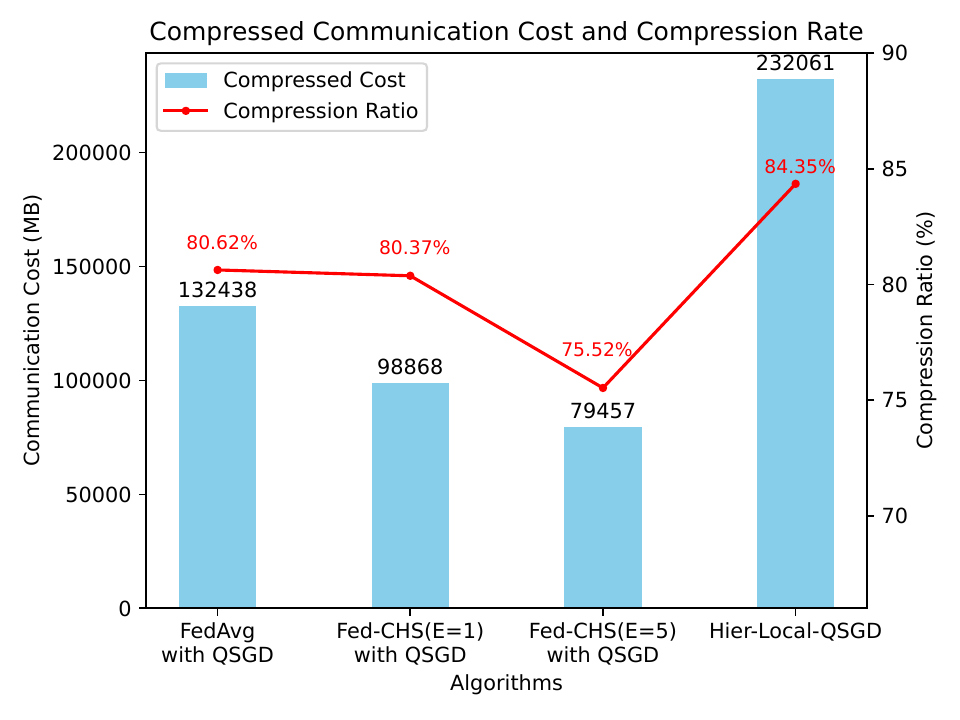}
    }%
    \caption{{Results of communication overhead for different algorithms and datasets.}}
    \label{fig:communication overhead}
\end{figure}

{
We recorded the total communication overhead (in unit of information bits) of our algorithm and the baselines to achieve default $\Gamma$ with compression, and the percentage reduction in communication overhead with the aid of compression, in \cref{fig:communication overhead}. 
Therefore the communication overhead without compression can be also inferred from  \cref{fig:communication overhead}.
}
Notably, for Fed-CHS, a total number of aggregations within one cluster is 20. 
Hence every client interacts with associated ES by 20 times when $E=1$, which will turn to be 4 times when $E=5$.
{
It is evident that for datasets MNIST, CIFAR-10, or CIFAR-100, our algorithm significantly outperforms other baseline algorithms in terms of total communication overhead with or without compression.
}
This result is reasonable because the passing of aggregated model parameter among ESs, rather than further aggregating them at a PS, can help to save a lot of communication overhead compared with FedAvg (with QSGD) or Hier-Local-QSGD, and does not degrade the convergence performance too much.
There is one more thing to highlight, the counted communication for Fed-CHS happens between a client and an associated ES or an ES and its neighbor ES, all of which only involve one-hop communication, while the counted communication between a client and the PS in FedAvg (with QSGD) may be over a long distance that will ask for multiple hops of routing and incur heavier communication burden. 

\section{Conclusion}
\label{s:conclusion}

In this paper, we propose a new federated learning algorithm Fed-CHS, which introduces SFL into HFL for the first time to further reduce communication overhead for passing model parameters. Theoretical analysis guarantees the convergence performance of the Fed-CHS for strongly convex or non-convex loss function under various setup of data heterogeneity.
Extensive experiments are conducted to verify the advantage of our proposed Fed-CHS in terms of test accuracy, communication overhead saving, and convergence over existing methods.
{
In the future, we will extend our theoretical analysis from the setup of single-step local iteration to multi-step, which is more general. And the potential of this setup has been verified in existing experiments on communication overhead.
}

\newpage

\bibliography{Reference}
\bibliographystyle{apalike}

\newpage
\appendix
\onecolumn

\tableofcontents

\newpage



\section{Implementation Details}
\label{apx:exp}
We experiment with three datasets, MNIST, CIFAR-10 and CIFAR-100 using the LENET \citep{726791,lecun2015lenet} and Multi-Layer Perceptron (MLP) \citep{pmlr-v162-yue22a}  models.
Firstly, we describe the datasets as below: 
\begin{itemize}[leftmargin=*]
	\item \textbf{MNIST:} It contains 60000 train samples and 10000 test samples. 
	Each sample is a $28 \times 28$ scale image associated with a label from 10 classes. 
	\item \textbf{CIFAR-10:} It is a dataset that includes 10 different types of color images, with 6000 images in each category, totaling 60000 images. 
	All images are divided into 50000 train samples and 10000 test samples. 
	Each image has a resolution of $32 \times 32$ pixels and includes 3 color channels (RGB). 
	\item \textbf{CIFAR-100:} It includes 50000 train samples and 10000 test samples, each associated with 100 possible labels. 
	The sample in CIFAR-100 is a $32 \times 32$ color image.
\end{itemize}
The non-IID settings we adopt involve Dirichlet($\lambda$): the label distribution on each device follows the Dirichlet distribution, where $\lambda$ is a concentration parameter ($\lambda > 0$).
A larger value of $\lambda$ indicates a more homogeneous distribution across each dataset.
The loss function in LENET model is non-convex and the loss function in MLP model is convex.
Next, we represent the network model used in experiments as below:
\begin{itemize}[leftmargin=*]
	\item \textbf{MLP}: There are two fully connected layers in MLP, apart from the input layer and the output layer. 
    For MNIST, these two fully connected layers both have 200 neurons.
	For CIFAR-10 and CIFAR-100, the amount of neurons in the two fully connected layers is 256 and 512, respectively.
	We all set ReLU as the activation function of the two middle layers.
	\item \textbf{LENET}: Different from the MLP, we add two sets of concatenated convolution layer and pooling layer between the input layer and the fully connected layer in LENET. 
	{For these two convolution layers, we utilize 64 and 256 convolution kernels to train MNIST respectively, each with a size of $5 \times 5$.
    And we utilize two 64 convolution kernels having been adopted for MNIST to train CIFAR-10 and CIFAR-100 respectively. }
	Additionally, all the pooling layers are set to be 2x2. 
	In case of MNIST dataset, there are two fully connected layers with dimensions $512 \times 512$ and $128 \times 128$, respectively. 
	For CIFAR-10 and CIFAR-100 datasets, the two fully connected layers have dimensions of $384 \times 384$ and $192 \times 192$.
\end{itemize}

To carry out our experiments, we set up a machine learning environments in PyTorch 2.2.2 on Ubuntu 20.04, powered by four RTX A6000 GPUs and AMD 7702 CPU. 

\section{Issues about Hyper-parameters}
\label{app:hyperparameter}
\subsection{Hyper-parameter Settings}
In this work, we generally employed 100 devices to simulate a substantial number of participating clients in practical FL. 
For the hierarchical architecture, we set 10 ES's with each ES being assigned with some clients. 
All clients are selected to generate the global model. 
For non-IID settings of the datasets, the parameter $\lambda$ of Dirichlet needs to be set, we use $\lambda=0.6$ generally and we also use the setting group with $\lambda=0.3$ and $\lambda=0.6$.

Furthermore, we ran these algorithms in $T=4000$ rounds.
{
In Fed-CHS and Hier-Local-QSGD, the local communication round number within ESs is $K=T^{q_1}=20$ and the value of the learning rate is set as $\frac{1}{K\sqrt{k+1}}$ in the $k$-th communication. 
And for FedAvg and Weighted RWGD, we set the training epochs in clients to be $K=20$ and the value of the learning rate as $\frac{1}{K\sqrt{k+1}}=\frac{1}{LT^{q_2}}$ in the $k$-th communication.
}  
For Fed-CHS and Weighted RWGD, we first randomly generate the network topology 
before these algorithms begin to train. 
The ES (or client) node is required to be connected with at most three other ES (or client) nodes for the network topology of Fed-CHS (or Weighted RWGD), which uses a relatively sparse connection approach to better mimic the physical connectivity.
{
To evaluate the total communication cost, we represent each parameter by 32 bits.
For ease of comparison, in Fed-CHS and FedAvg, each client performs one local iteration per round, while in Hier-Local-QSGD, each client performs 5 local iterations per round. Both Fed-CHS and Hier-Local-QSGD conduct 20 iterations per round within the cluster.
}

\subsection{Results for Impact of Hyper-parameters}
In this subsection, we examine the impact of the number of local communication rounds, the degree of data heterogeneity and the number of ES's on our proposed Fed-CHS.
Additionally, we also examine the impact of partial data heterogeneity within one cluster on Fed-CHS.

\cref{fig:k with mlp} and \cref{fig:k with lenet} show that our Fed-CHS's convergence rate varies with the number of local communication rounds. 
In case a smaller $K$ is adopted, which implies a higher learning rate, a faster convergence rate can be achieved.
According to \cref{thm:nonconvex}, the gap of convergence is weakly affected by $K$, the influence of which is even weaker as $T$ grows. This explains the fact that the training accuracy trends to be similar as $T$ increases.
Both \cref{fig:iid with mlp} and \cref{fig:iid with lenet} reveal that severe data heterogeneity may exert influence on the training accuracy significantly.  
But as the dataset distributions among clients tend to be more homogeneous, this kind of influence is likely to decay.
As a comparison, the difference of training accuracy due to data homogeneity is more sensitive to convex loss function, like in MLP, rather than a non-convex loss function, like in LENET. 
\cref{fig:m with mlp} and \cref{fig:m with lenet} clearly illustrate that too many ES's may lead to a bad performance. 
The reason can be explained as follows: With the population of clients unchanged but the number of ES's increased, each ES shares a less fraction of clients, which leads to a less generality of the locally trained model within one cluster.
Sequential updating of this kind of model among clusters become a repetitive correcting process of the model only being adapt to the dataset of the clients in currently active cluster, rather than an evolving process of the model that can represent the global data distribution of all the clients.

\begin{figure*}[ht]
\centering
\subfigure[Different $K$ with MLP]{
\includegraphics[width=0.33\linewidth]{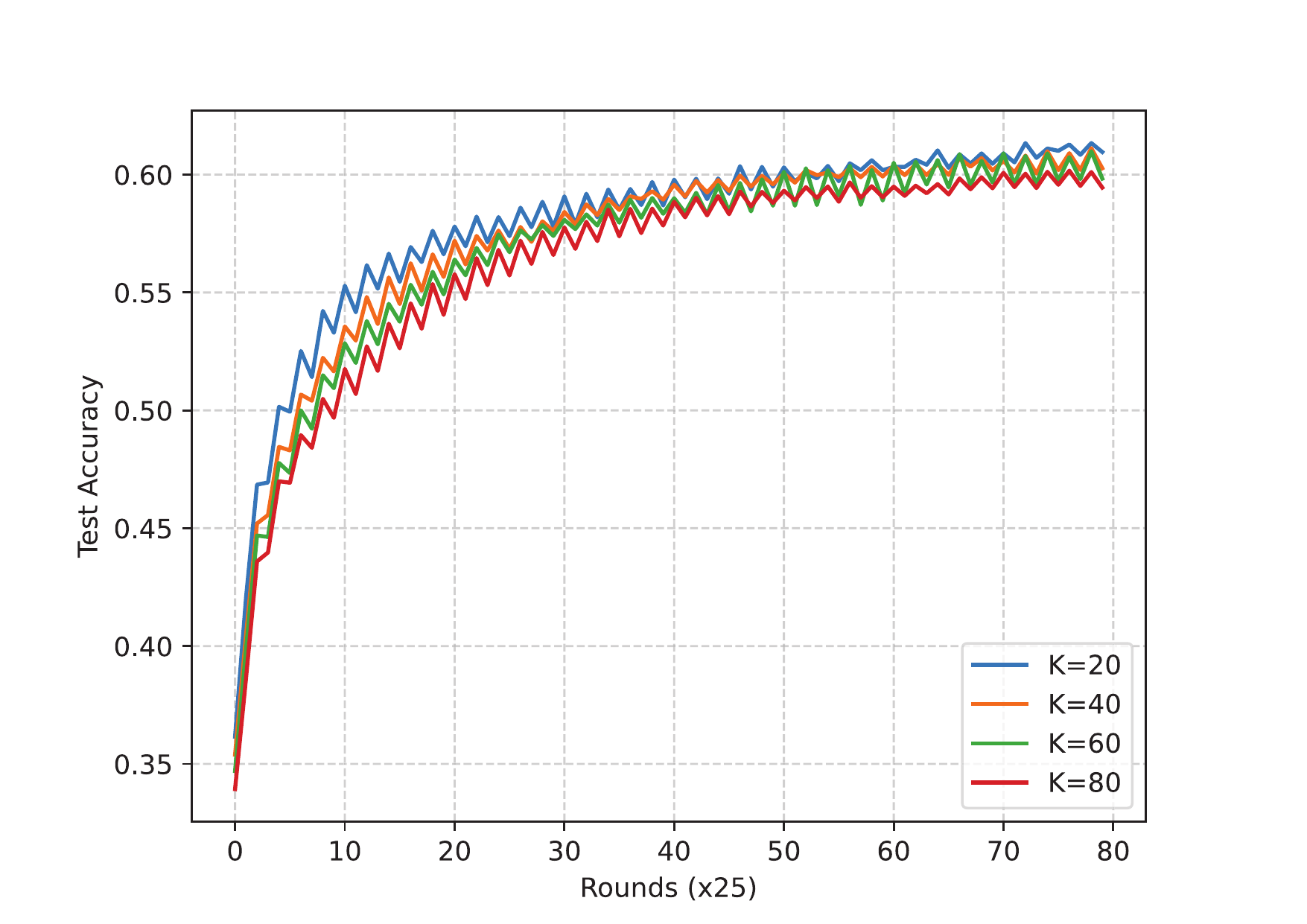}
\label{fig:k with mlp}
}%
\subfigure[Different $\lambda$ with MLP]{
\includegraphics[width=0.33\linewidth]{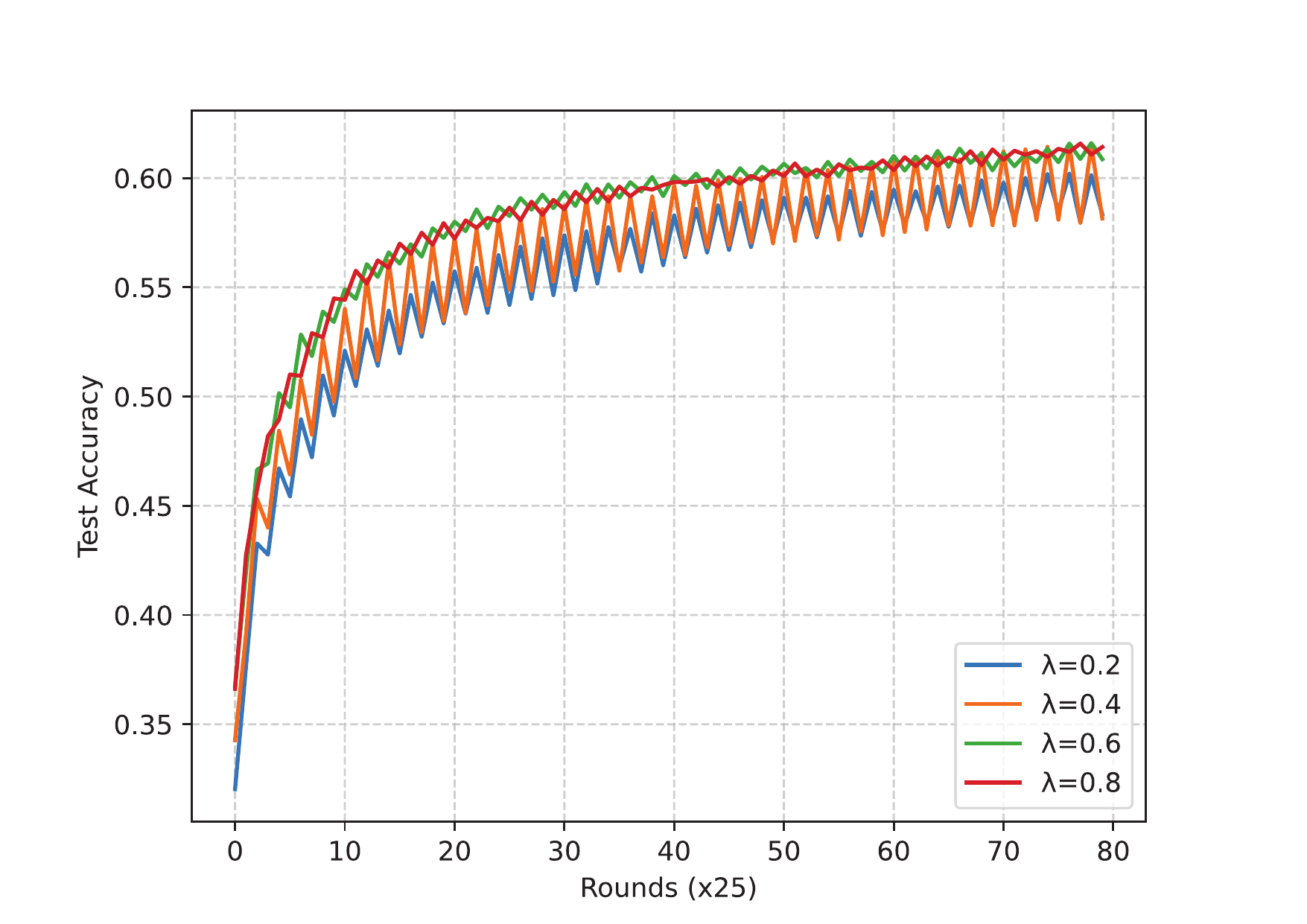}
\label{fig:iid with mlp}
}%
\subfigure[Different $M$ with MLP]{
\includegraphics[width=0.33\linewidth]{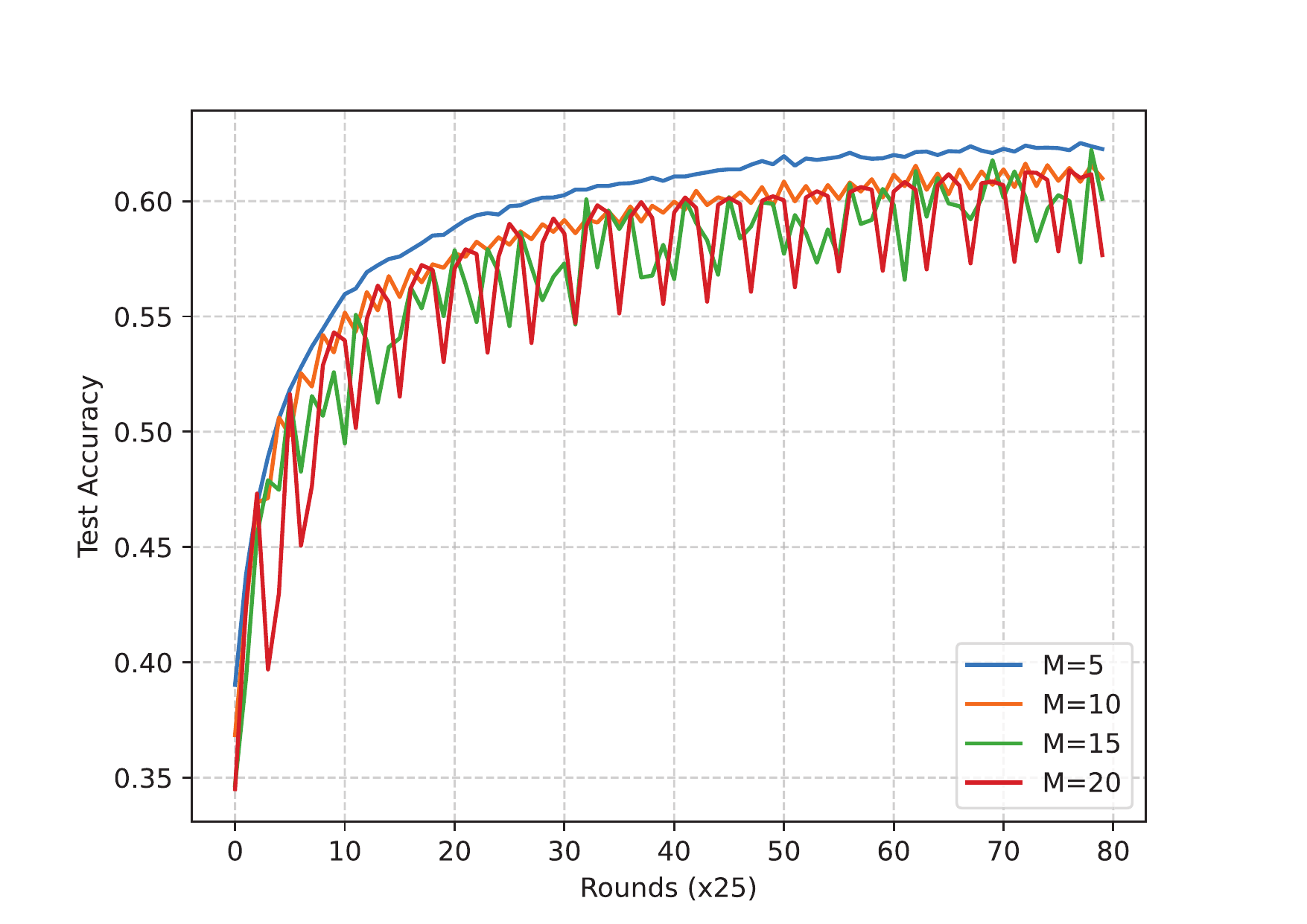}
\label{fig:m with mlp}
}%
\quad            
\subfigure[Different $K$ with LENET]{
\includegraphics[width=0.33\linewidth]{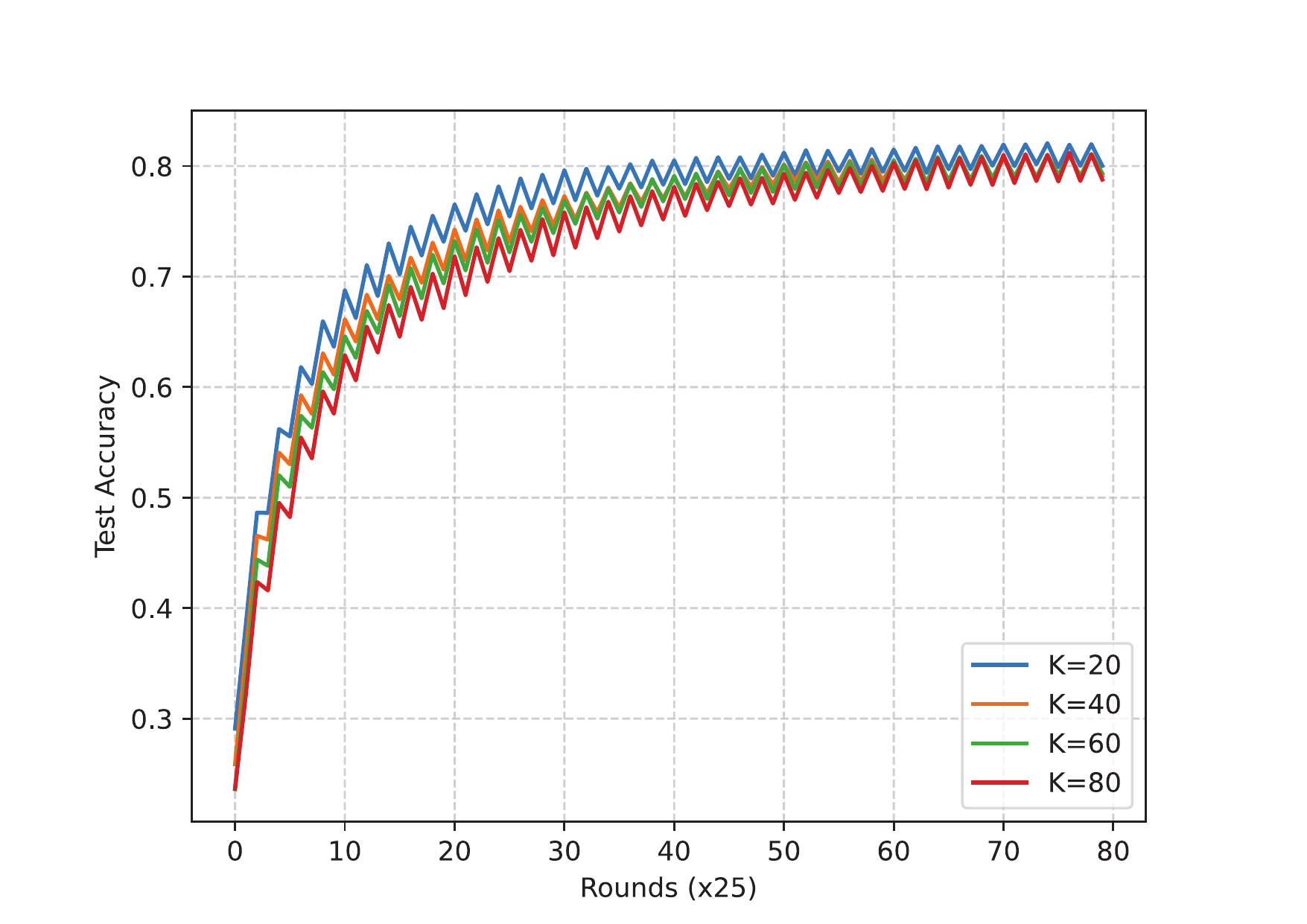}
\label{fig:k with lenet}
}%
\subfigure[Different $\lambda$ with LENET]{
\includegraphics[width=0.33\linewidth]{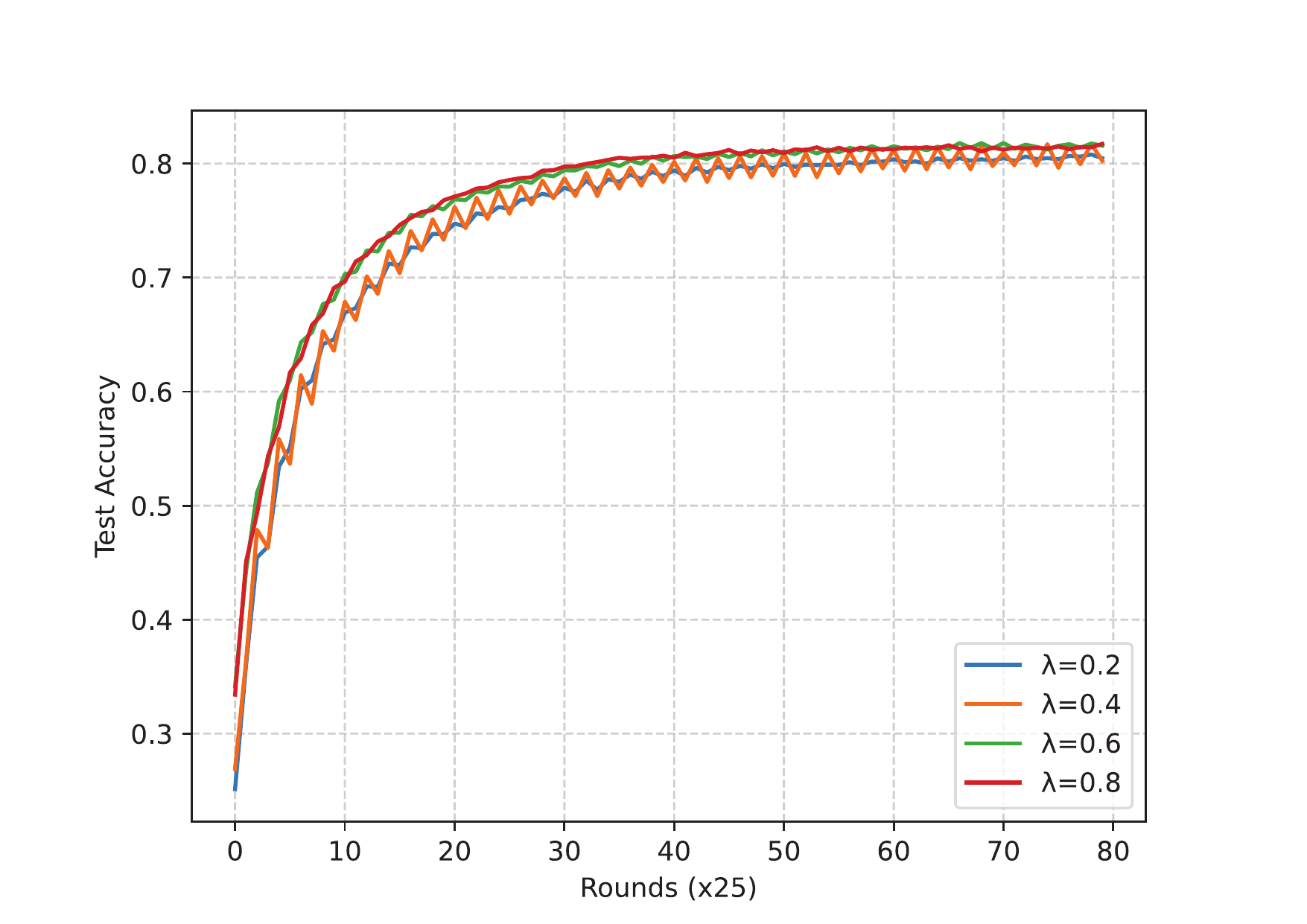}
\label{fig:iid with lenet}
}%
\subfigure[Different $M$ with LENET]{
\includegraphics[width=0.33\linewidth]{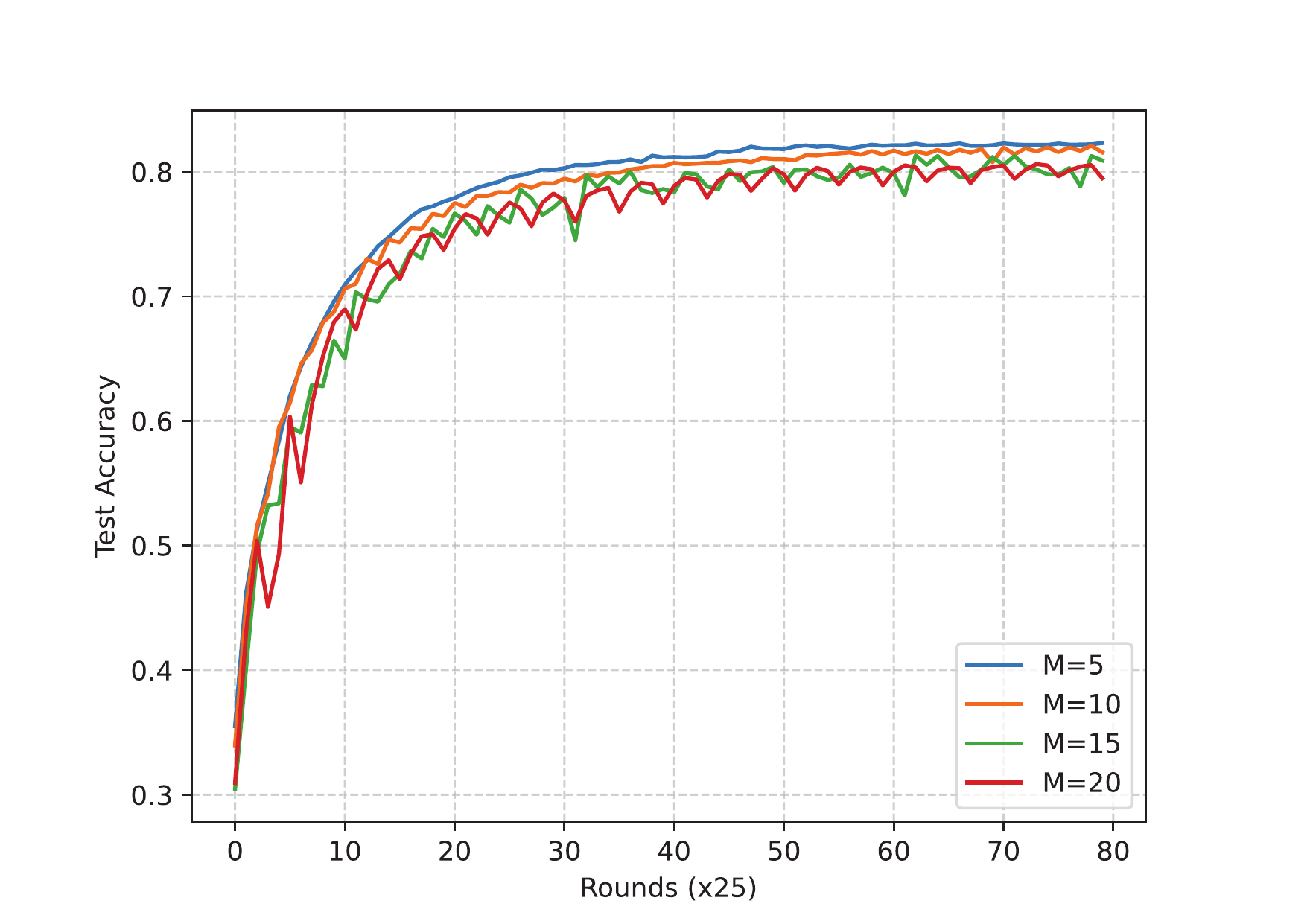}
\label{fig:m with lenet}
}%
\centering
\caption{Results for different hyper-parameters with MLP or LENET}
\label{fig:differnt hp}
\end{figure*}

We also inspect the training performance of Fed-CHS under fully or partially heterogeneous dataset by using LENET model and CIFAR-10 dataset in \cref{fig:esiid}.
When partial data heterogeneity is implemented, the dataset distribution among clusters are IID but the dataset distribution among clients within any cluster is non-IID.
The extent of heterogeneity is still tuned by parameter $\lambda$.
As of training performance, both test accuracy and training loss are probed.
From \cref{fig:esiid}, we observe that the performance gap due to the difference between fully and partially heterogeneous datasets trends to be diminish as $T$ grows.
Also considering the remark in \cref{r:nonconvex}, we can achieve stationary point for non-convex loss function like the LENET model with partially heterogeneous dataset. Then it can be inferred that the performance of Fed-CHS for training a non-convex loss function under fully heterogeneous dataset can be also promised.

\begin{figure}
    \centering
    \subfigure[Test Accuraccy]{
        \includegraphics[width=0.4\linewidth]{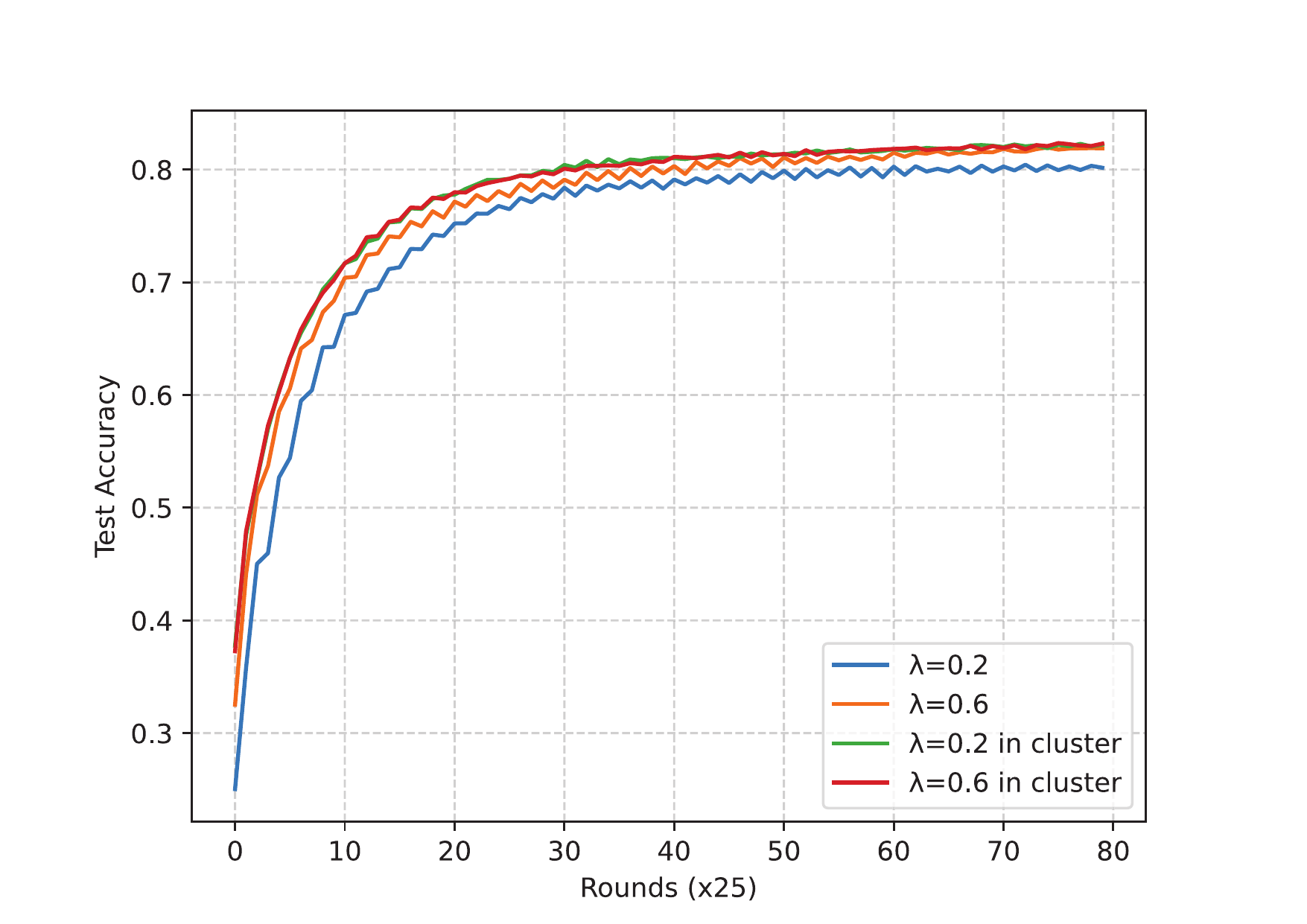}
    }%
    \subfigure[Training Loss]{
        \includegraphics[width=0.4\linewidth]{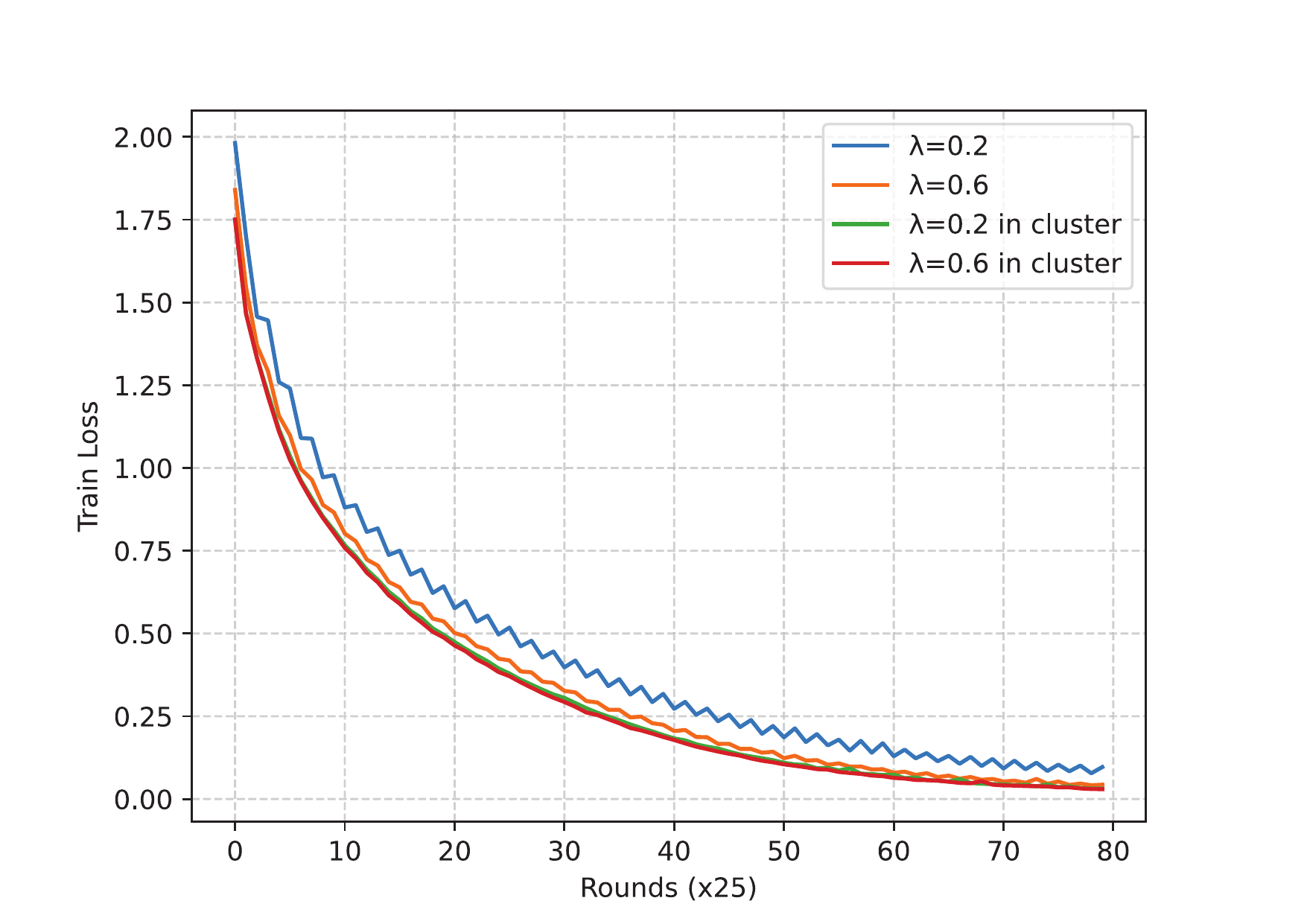}
    }
    \caption{Result for different data heterogeneity in ES}
    \label{fig:esiid}
\end{figure}

\section{Additional Comparison Results for Training Performance}
\label{app:futher_comparison}

\cref{figs:MNIST}, \cref{figs:CIFAR10}, and \cref{figs:CIFAR100} represent the performance of Fed-CHS and other algorithms in different settings where clients use different models and Dirichlet parameters under MNIST, CIFAR-10, and CIFAR-100 dataset, respectively.
For \cref{figs:MNIST}, all algorithms can achieve good performance, but only Fed-CHS has a stable volatility.
In \cref{figs:CIFAR10}, due to the non-IID dataset or the convexity of loss function, which is used to approximate the idea model that is surely non-convex, all the algorithms suffer from different degrees of volatility.
{
Although Fed-CHS doesn't perform the best in convex settings, its performance gap to the optimal one among baseline methods shrinks from {0.1\% to 0.02\%} as data heterogeneity increases.
}
Fed-CHS performs better on volatility and converges faster than all the other algorithms under various settings.
From \cref{figs:CIFAR100}, it can be seen that Fed-CHS still outperforms its counterparts not only on volatility but also on convergence speed under LENET model. 
{
For MLP model, Fed-CHS exhibits slightly less fluctuation and lower convergence speed compared to FedAvg, but its overall model performance surpasses that of all other baseline methods by a significant margin.
}
The above results prove the advantage of our proposed Fed-CHS.

\begin{figure*}[ht]
\centering
\subfigure[MLP and $\lambda=0.3$]{
\includegraphics[width=0.25\linewidth]{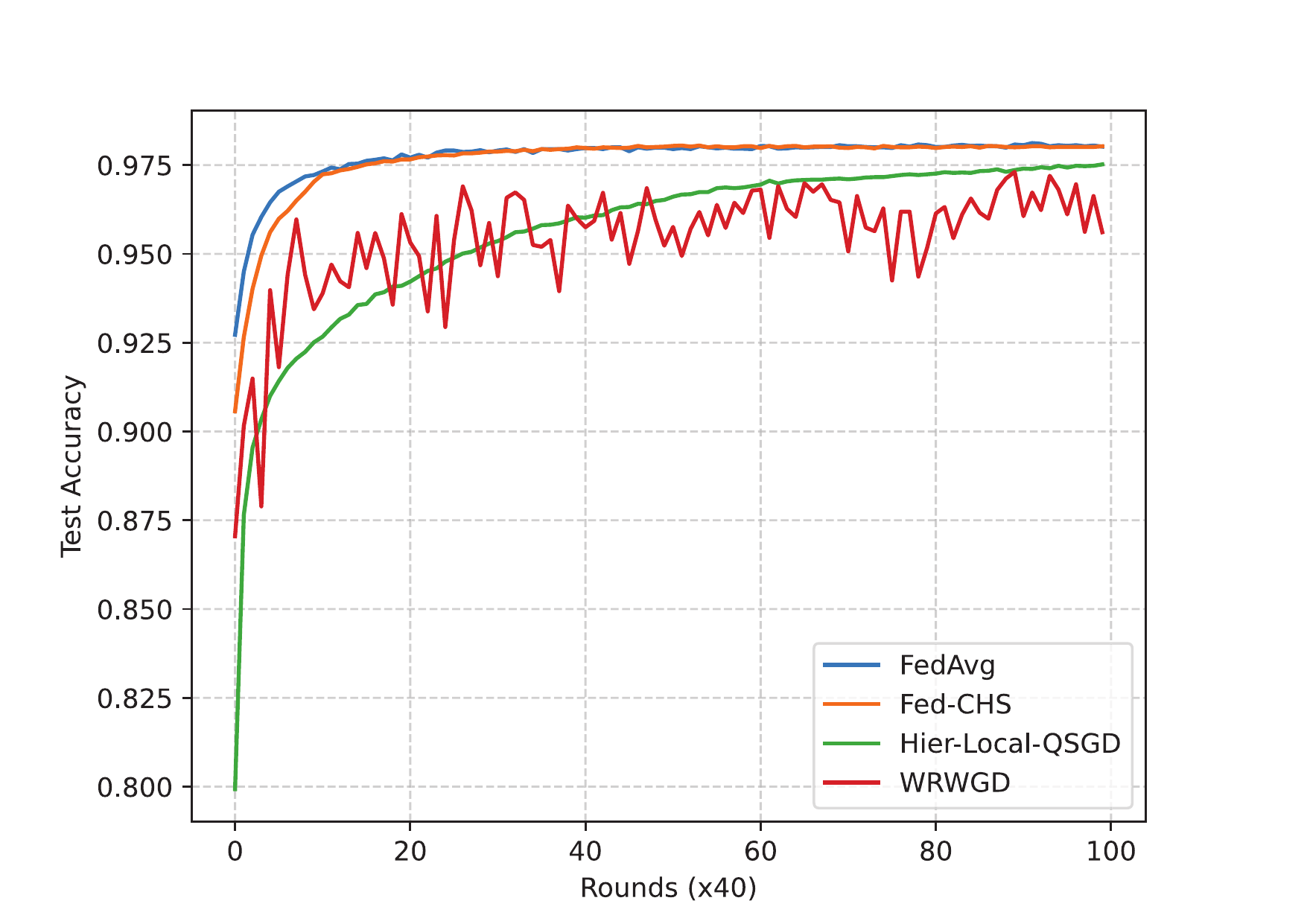}
\label{fig:MNIST-0.3-MLP}
}
\!\!\!\!\!\!\!\!\!\!
\subfigure[MLP and $\lambda=0.6$]{
\includegraphics[width=0.25\linewidth]{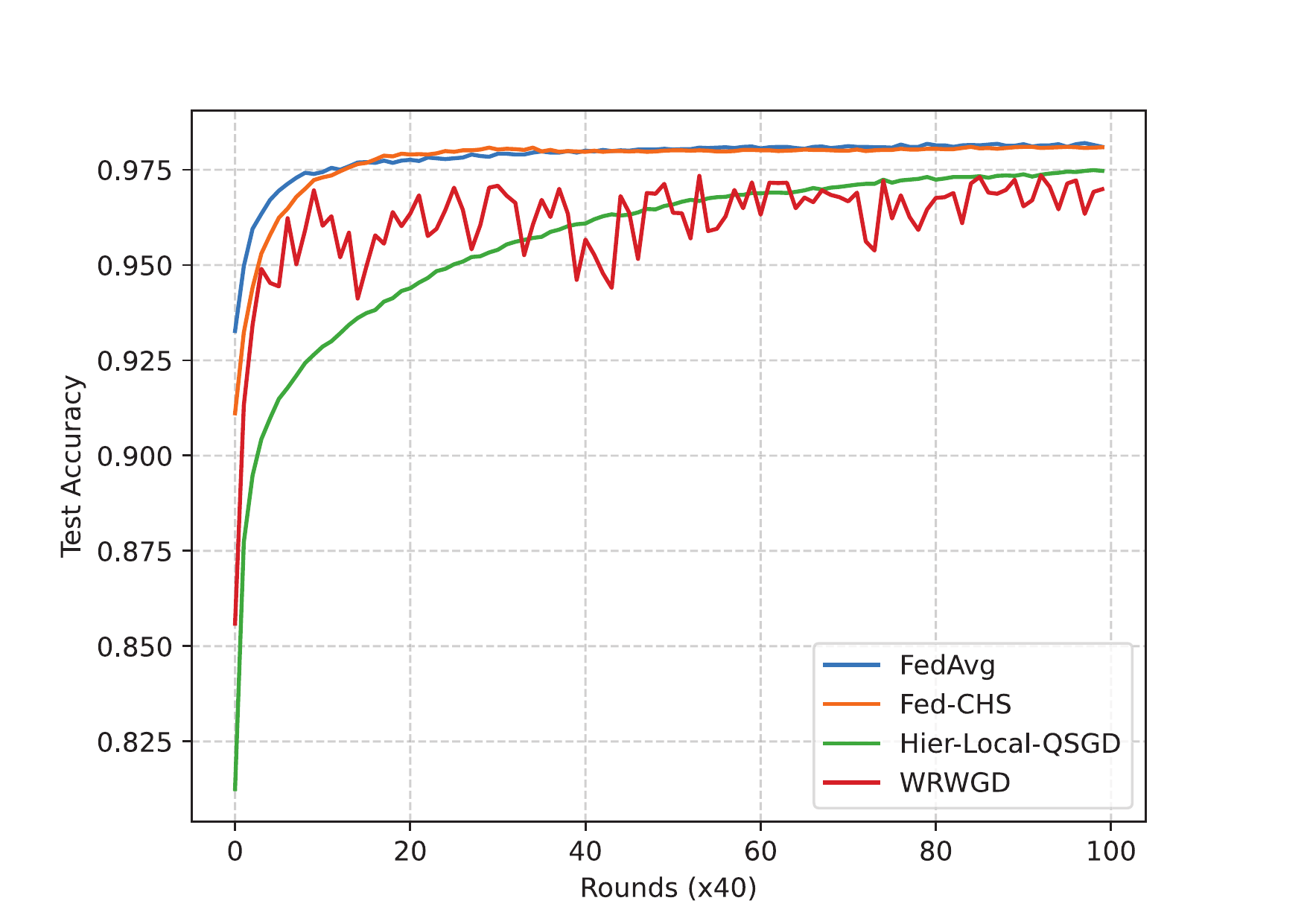}
\label{fig:MNIST-0.6-MLP}
}
\!\!\!\!\!\!\!\!\!\!
\subfigure[LENET and $\lambda=0.3$]{
\includegraphics[width=0.25\linewidth]{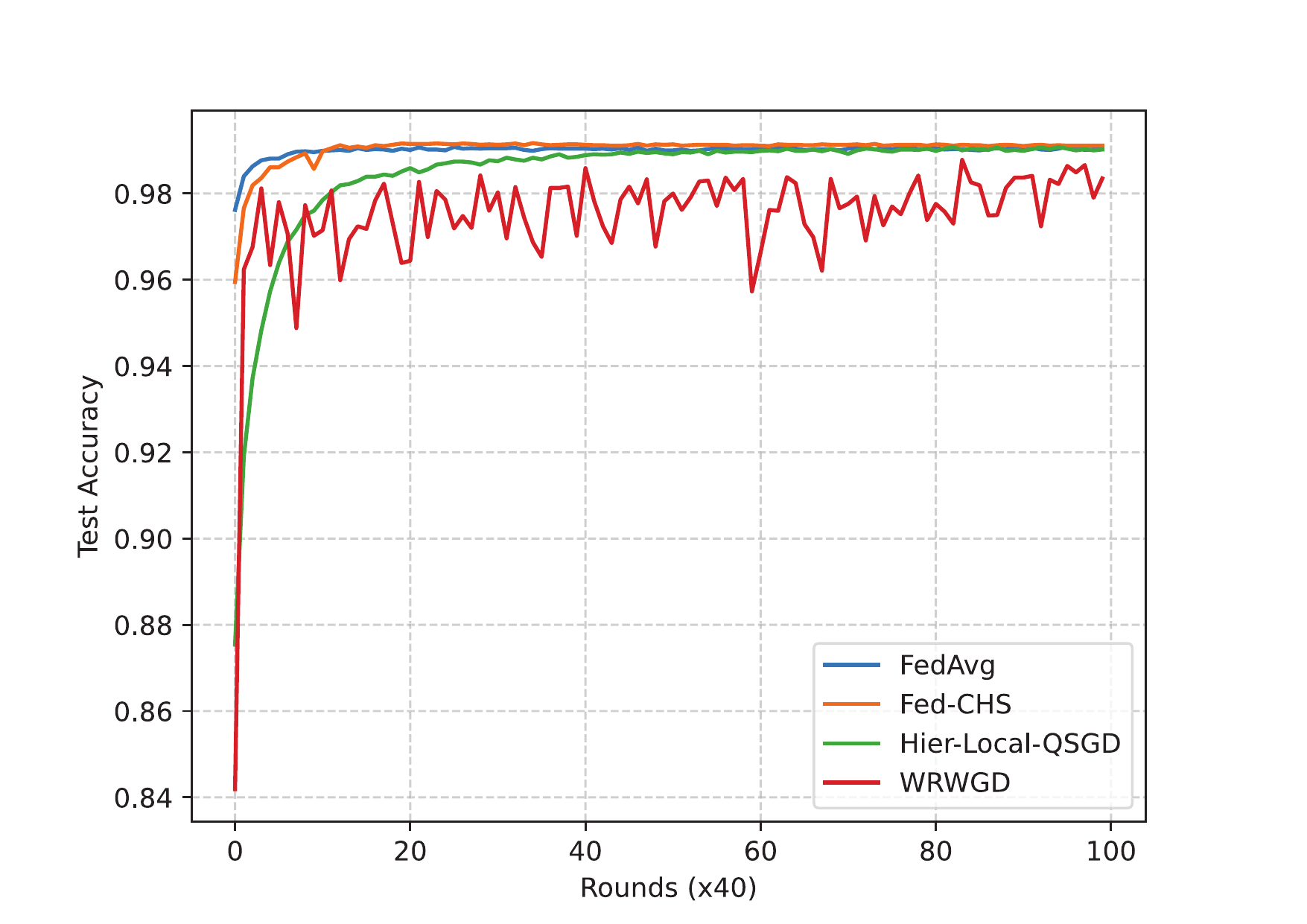}
\label{fig:MNIST-0.3-LENET}
}
\!\!\!\!\!\!\!\!\!\!
\subfigure[LENET and $\lambda=0.6$]{
\includegraphics[width=0.25\linewidth]{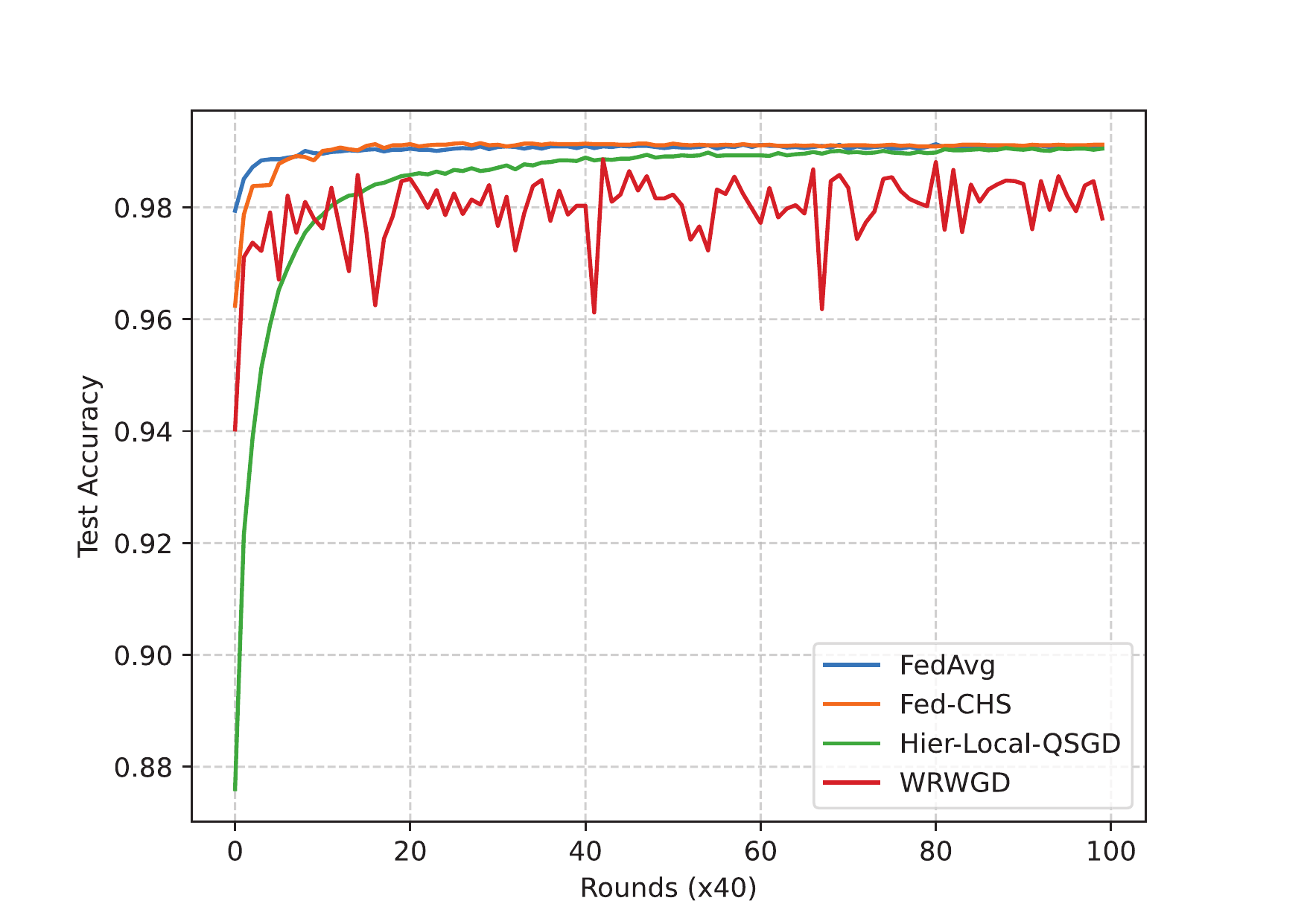}
\label{fig:MNIST-0.6-LENET}
}
\vfill
\caption{Convergence performance of Fed-CHS and baselines in different models and Dirichlet parameters, using MNIST dataset}\label{figs:MNIST}
\end{figure*}
\begin{figure*}[ht]
\centering
\subfigure[MLP and $\lambda=0.3$]{
\includegraphics[width=0.25\linewidth]{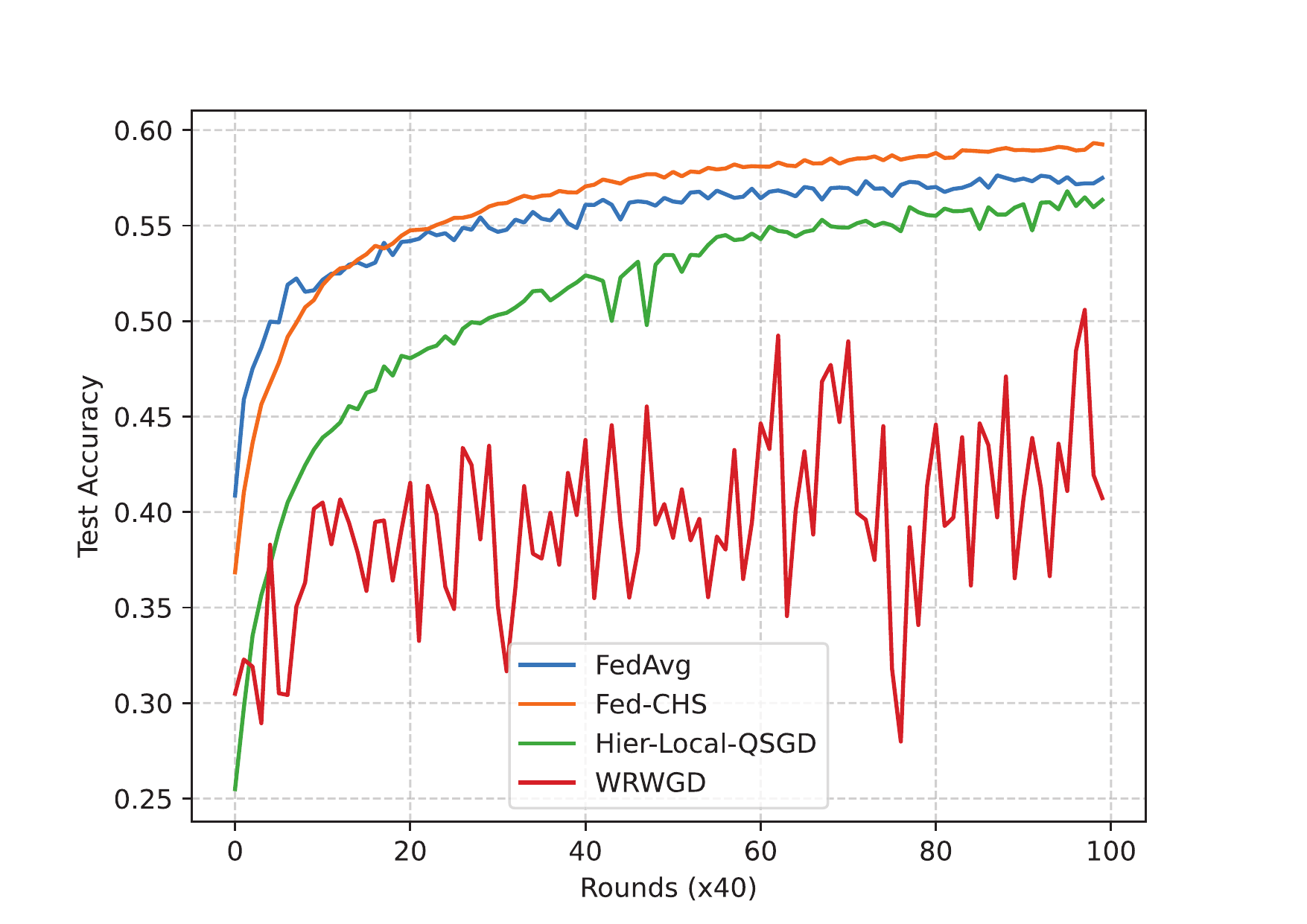}
\label{fig:CIFAR10-0.3-MLP}
}
\!\!\!\!\!\!\!\!\!\!
\subfigure[MLP and $\lambda=0.6$]{
\includegraphics[width=0.25\linewidth]{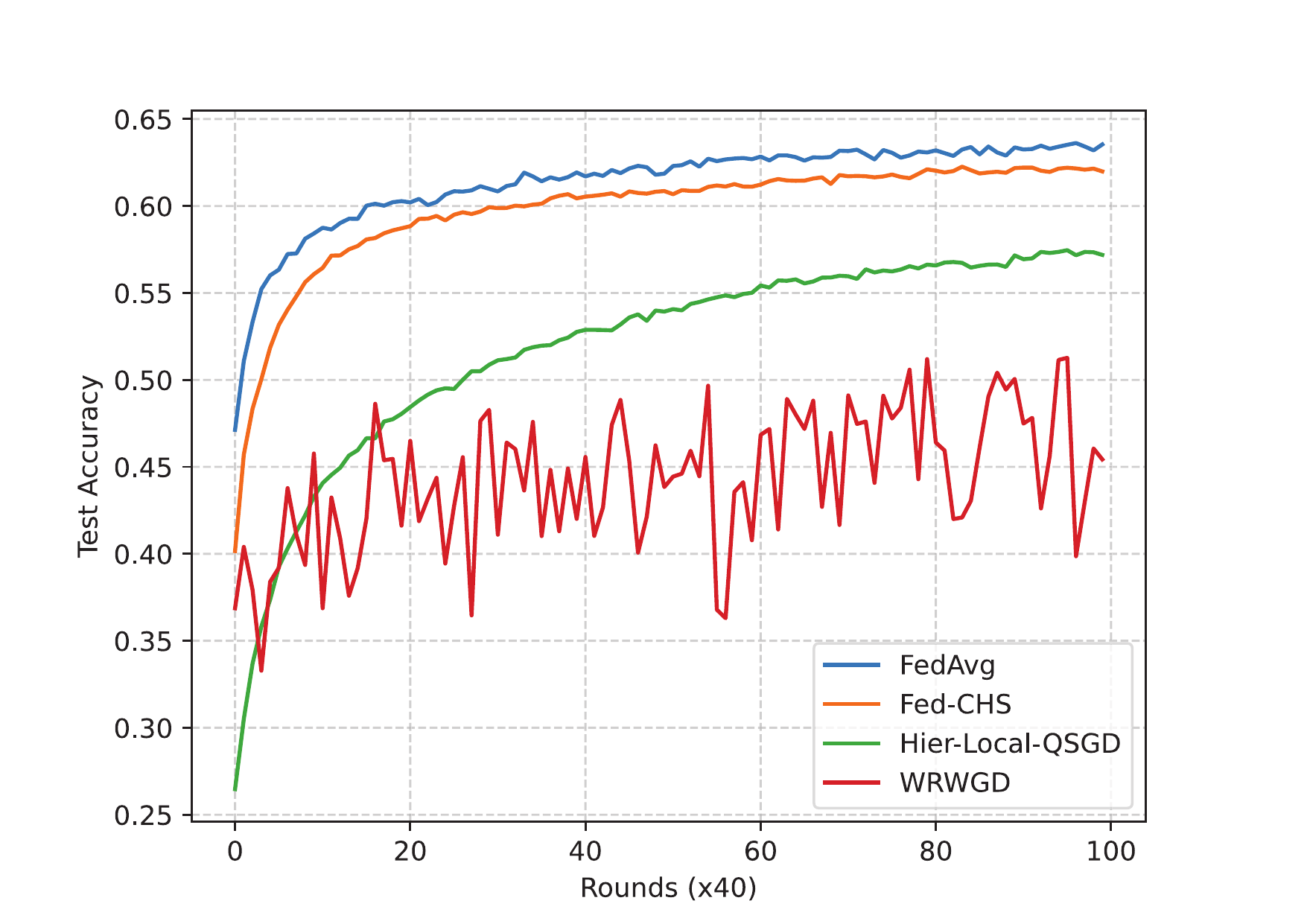}
\label{fig:CIFAR10-0.6-MLP}
}
\!\!\!\!\!\!\!\!\!\!
\subfigure[LENET and $\lambda=0.3$]{
\includegraphics[width=0.25\linewidth]{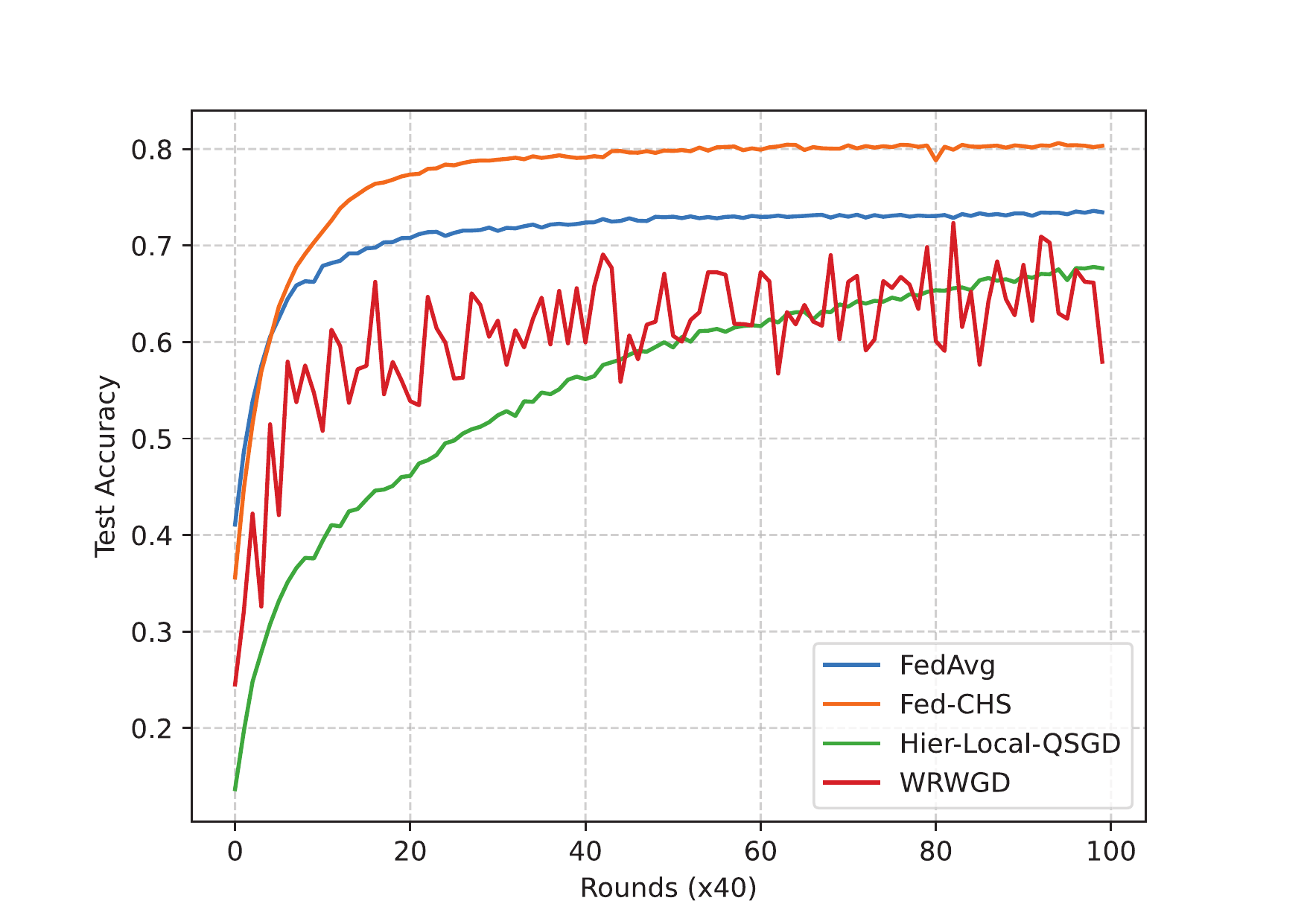}
\label{fig:CIFAR10-0.3-LENET}
}
\!\!\!\!\!\!\!\!\!\!
\subfigure[LENET and $\lambda=0.6$]{
\includegraphics[width=0.25\linewidth]{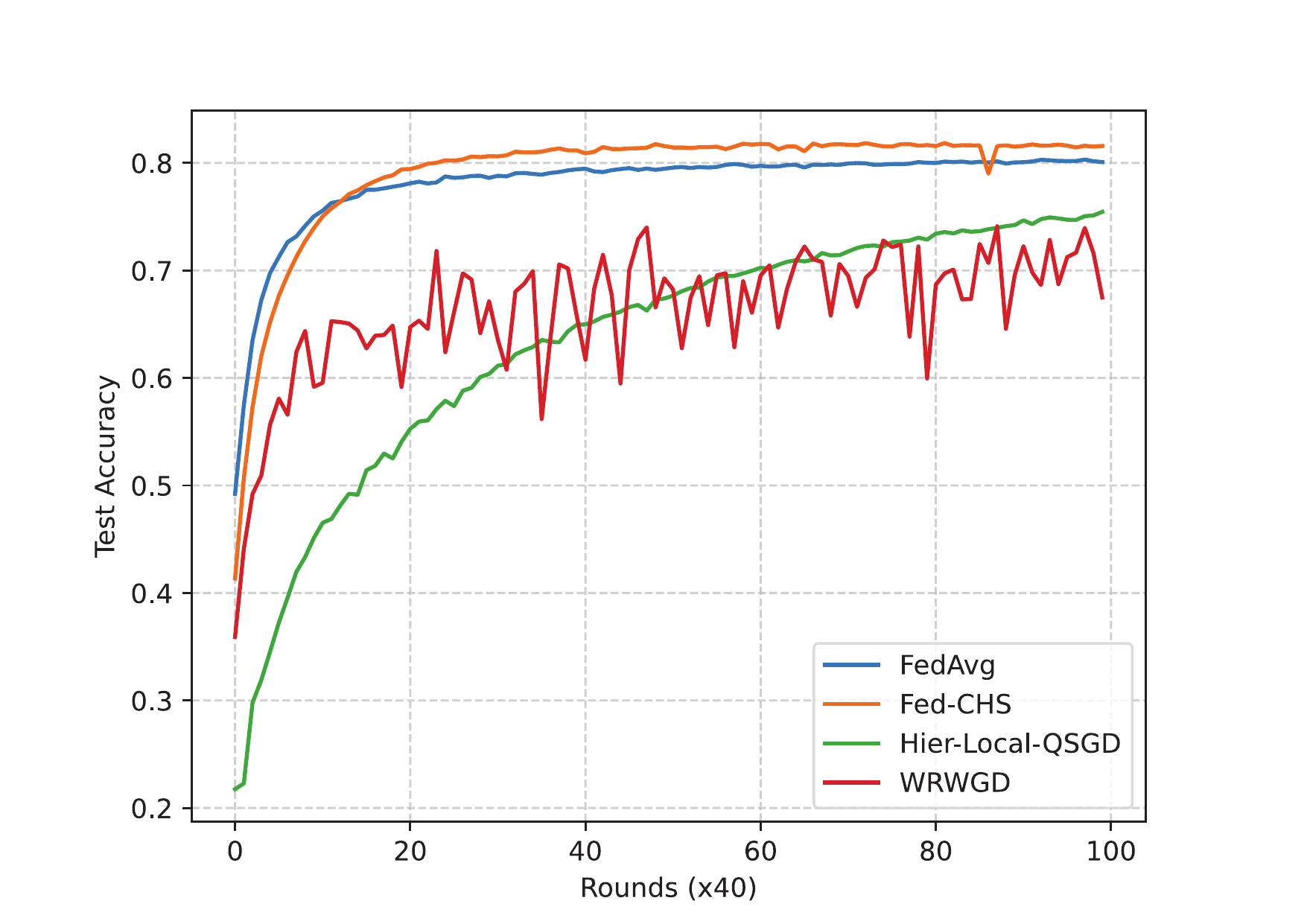}
\label{fig:CIFAR10-0.6-LENET}
}
\vfill
\caption{Convergence performance of Fed-CHS and baselines in different models and Dirichlet parameters, using CIFAR-10 dataset}\label{figs:CIFAR10}
\end{figure*}
\begin{figure*}[ht]
\centering
\subfigure[MLP and $\lambda=0.3$]{
\includegraphics[width=0.25\linewidth]{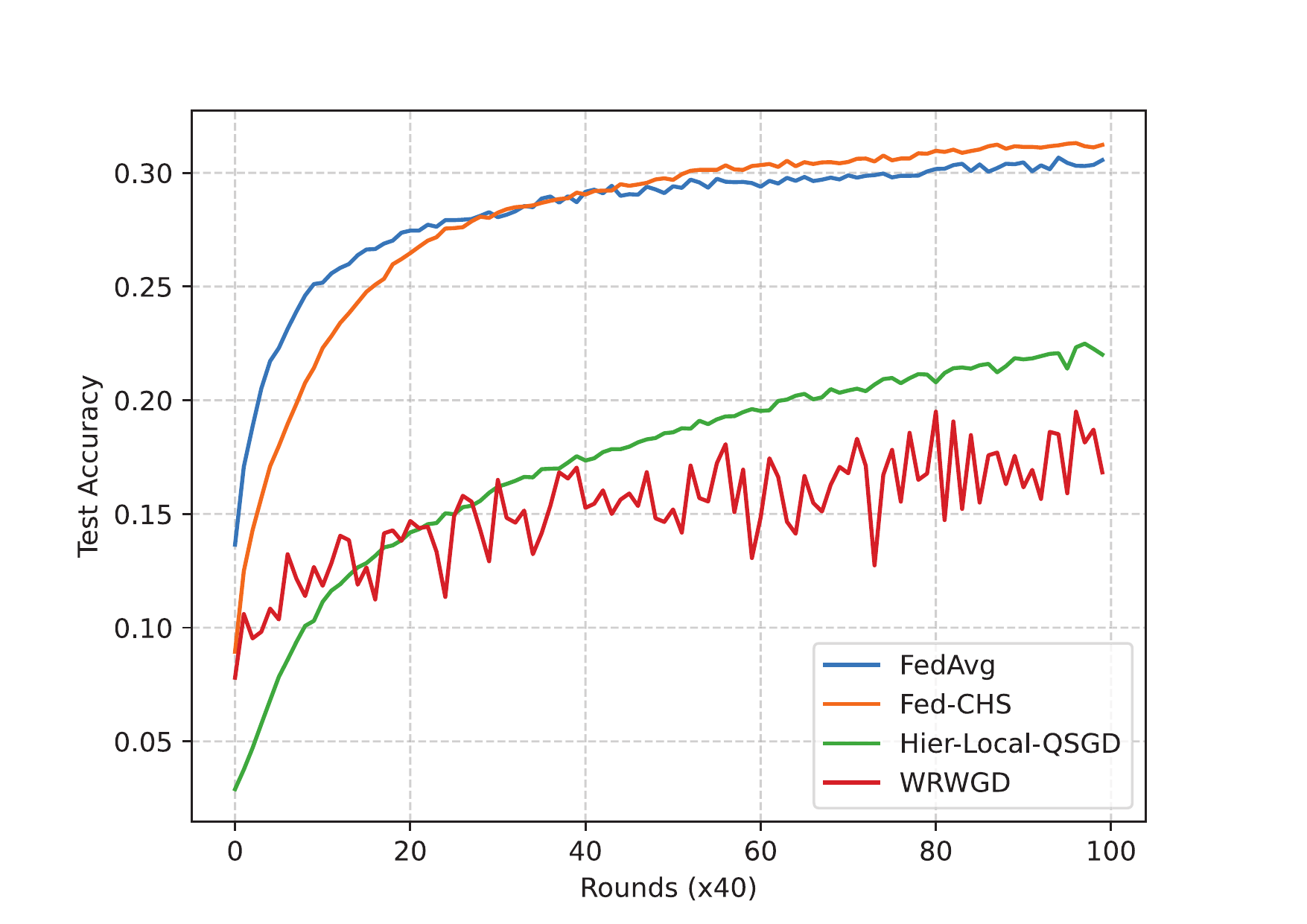}
\label{fig:CIFAR100-0.3-MLP}
}
\!\!\!\!\!\!\!\!\!\!
\subfigure[MLP and $\lambda=0.6$]{
\includegraphics[width=0.25\linewidth]{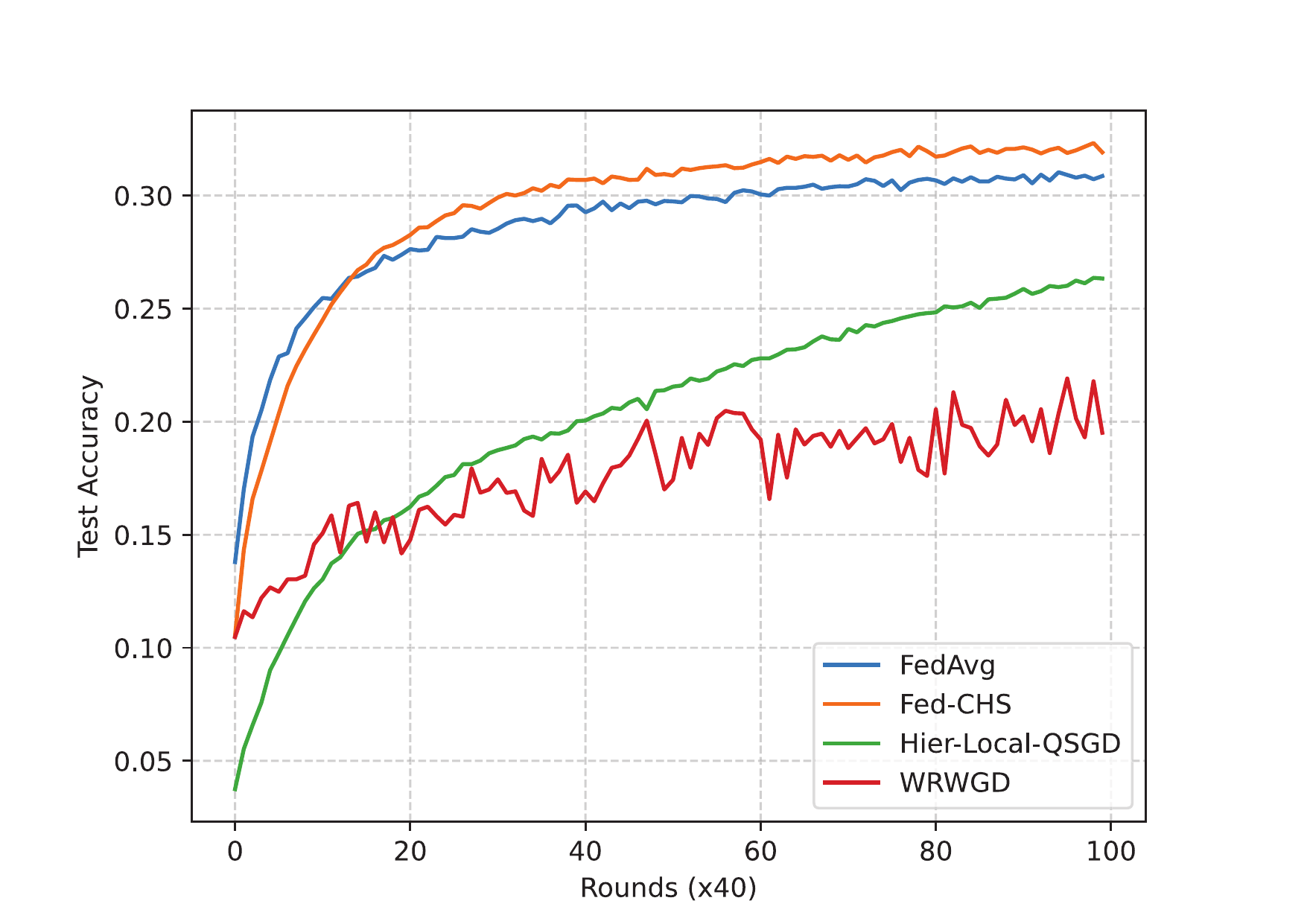}
\label{fig:CIFAR100-0.6-MLP}
}
\!\!\!\!\!\!\!\!\!\!
\subfigure[LENET and $\lambda=0.3$]{
\includegraphics[width=0.25\linewidth]{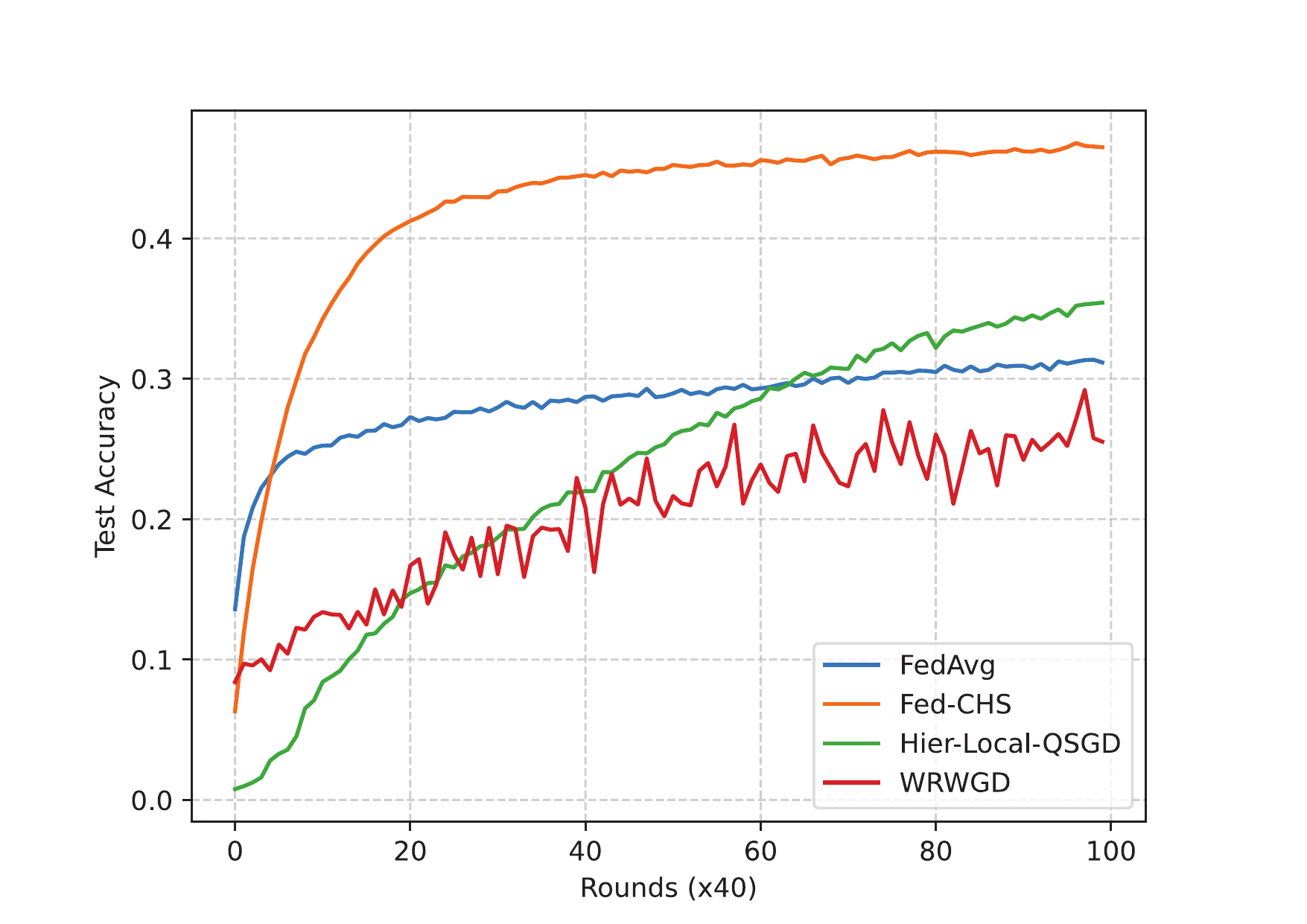}
\label{fig:CIFAR100-0.3-LENET}
}
\!\!\!\!\!\!\!\!\!\!
\subfigure[LENET and $\lambda=0.6$]{
\includegraphics[width=0.25\linewidth]{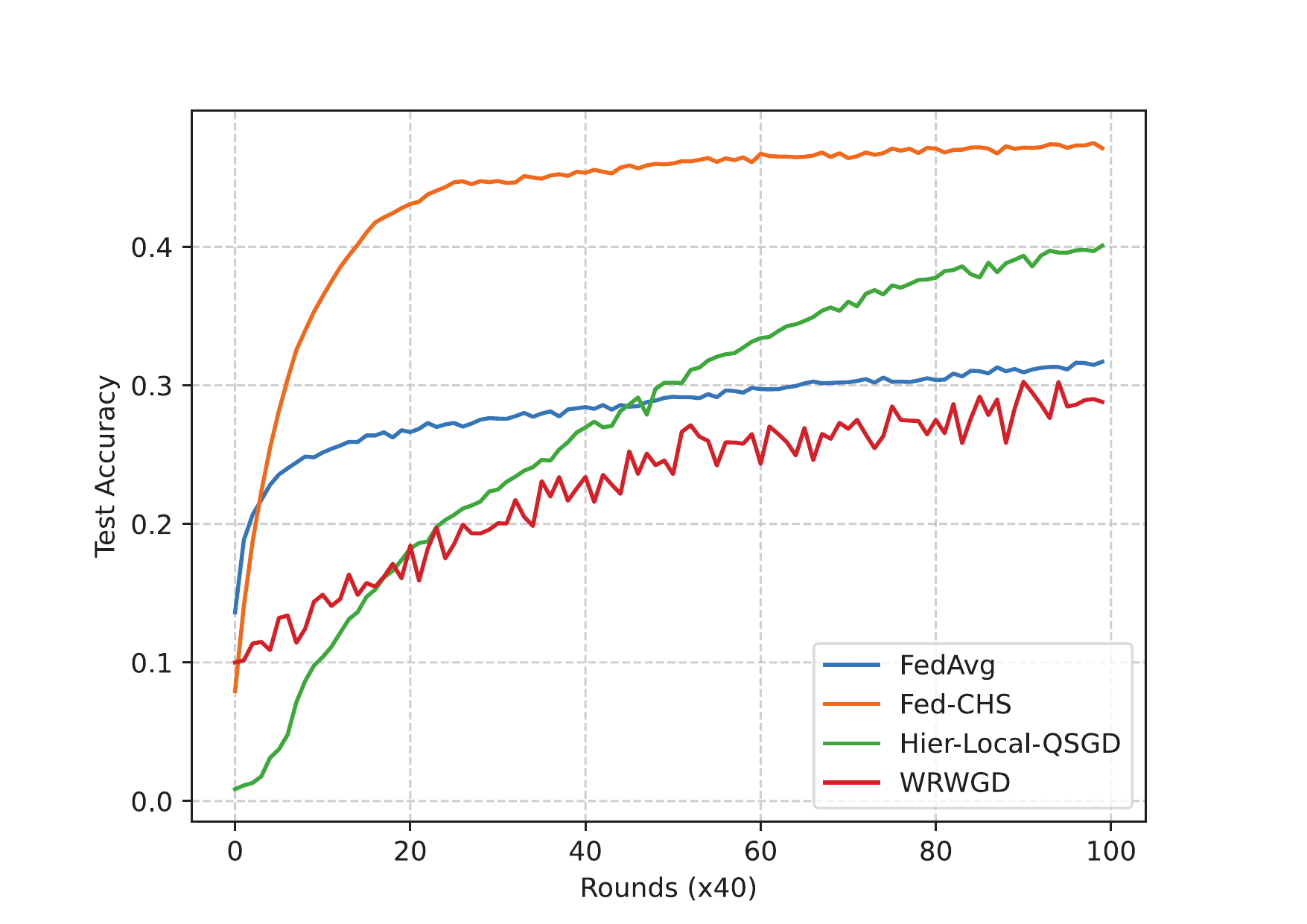}
\label{fig:CIFAR100-0.6-LENET}
}
\vfill
\caption{Convergence performance of Fed-CHS and baselines in different models and Dirichlet parameters, using CIFAR-100 dataset}\label{figs:CIFAR100}
\end{figure*}

\section{Application Scenarios of SFL in Hierarchical Architecture}
\label{apx:app}
{\bf Internet of Vehicles (IoV):} 
To lean a common model from the vehicles in an IoV by FL, a lot of scattered road side units (RSUs) can be leveraged to aggregate the local model parameter of bypassing vehicles through one-hop wireless link \citep{8998397}. In the combinational framework of HFL and SFL, every RSU acts as the ES and the bypassing vehicles are participating clients. Once get the aggregated local model from bypassing vehicles, one RSU can push it to the neighbor RSU for next round of iteration. Through this operation, no data transmission is required from neither any vehicle nor a RSU to a central PS round by round. Lost of traffic burden is saved in the hybrid framework of HFL and SFL.
An illustrative figure of this scenario is given in \cref{fig:scene IOV}
\footnote{Thanks for the courtesy of Freepik.com, which offers parts of the elements for \cref{fig:scene IOV} and \cref{fig:scene LEO}.}.

\begin{figure}
    \centering
    \includegraphics[width=0.7\linewidth]{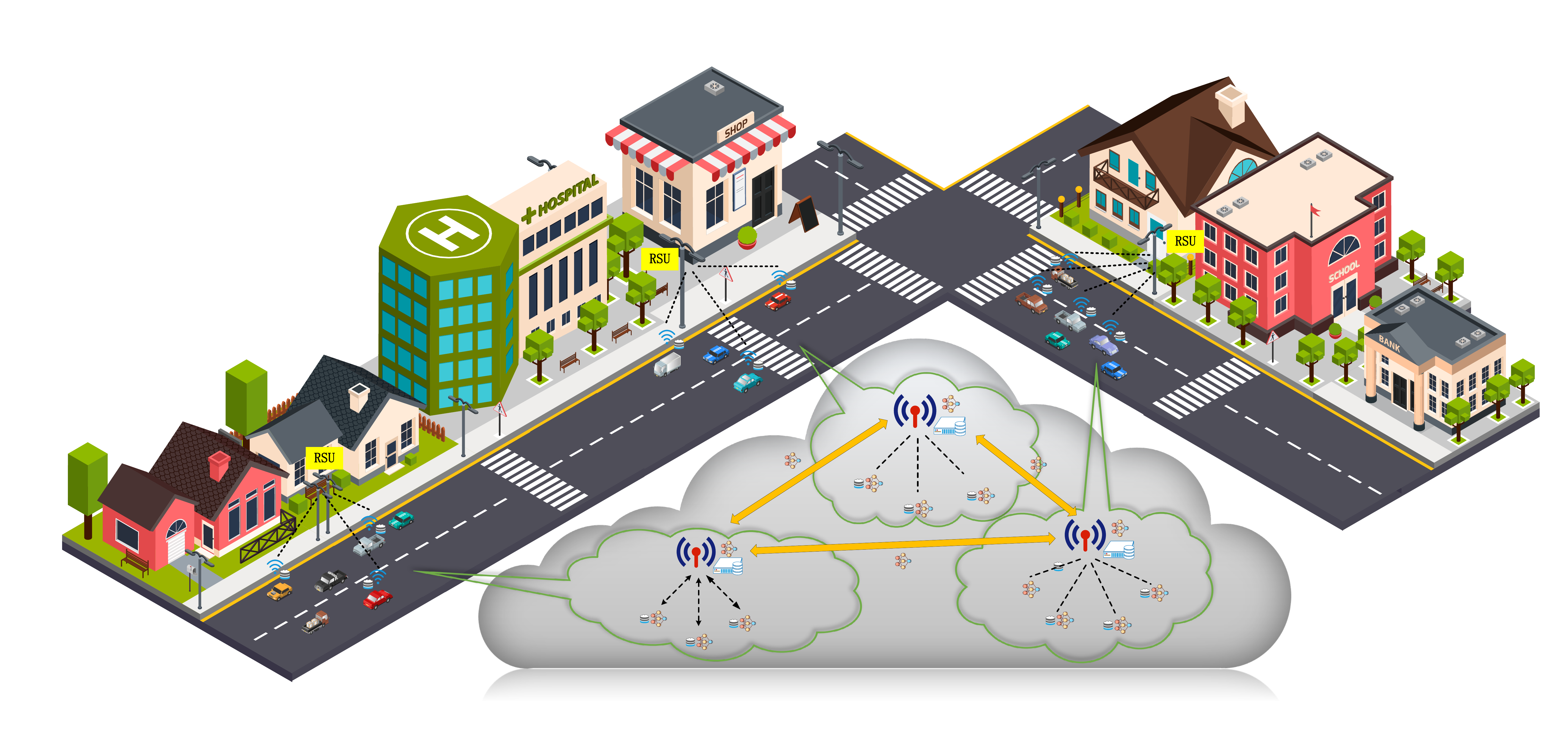}
    \caption{Federated learning used in the scenario of IoV under the combinational framework of HFL and SFL}
    \label{fig:scene IOV}
\end{figure}

{\bf Integrated Low-earth Orbit (LEO) Satellite-Terrestrial Network:} 
Integrated LEO satellite-terrestrial network can provide seamless and reliable content delivery service for the users widely distributed on earth's surface. This type of network such as StarLink has been put into use \citep{ZHAO202294,10113881}. To learn a common model from the surface users in a certain area via the FL technique, local model parameters of multiple surface users can be uploaded to the LEO satellite overhead for aggregation \citep{9099807,9606720}.
Restricted by orbit dynamics, every LEO satellite flies over the sky quickly and would not be able to offer the service of parameter aggregation and broadcasting for long.
To overcome this challenge, we can adopt the combinational framework of HFL and SFL, in which every LEO satellite above the horizon can be taken as one ES and the surface users correspond to participating clients. 
Within such a framework, the LEO satellite with the most up-to-date model parameter but has to go below the horizon soon can hand over its model parameter to the LEO satellite arising in the sky.
Through this way, parameter aggregation and broadcasting can be sustained round by round between the surface users and the LEO satellites over the sky.
This application scenario is demonstrated in \cref{fig:scene LEO}.

\begin{figure}
    \centering
    \includegraphics[width=0.6\linewidth]{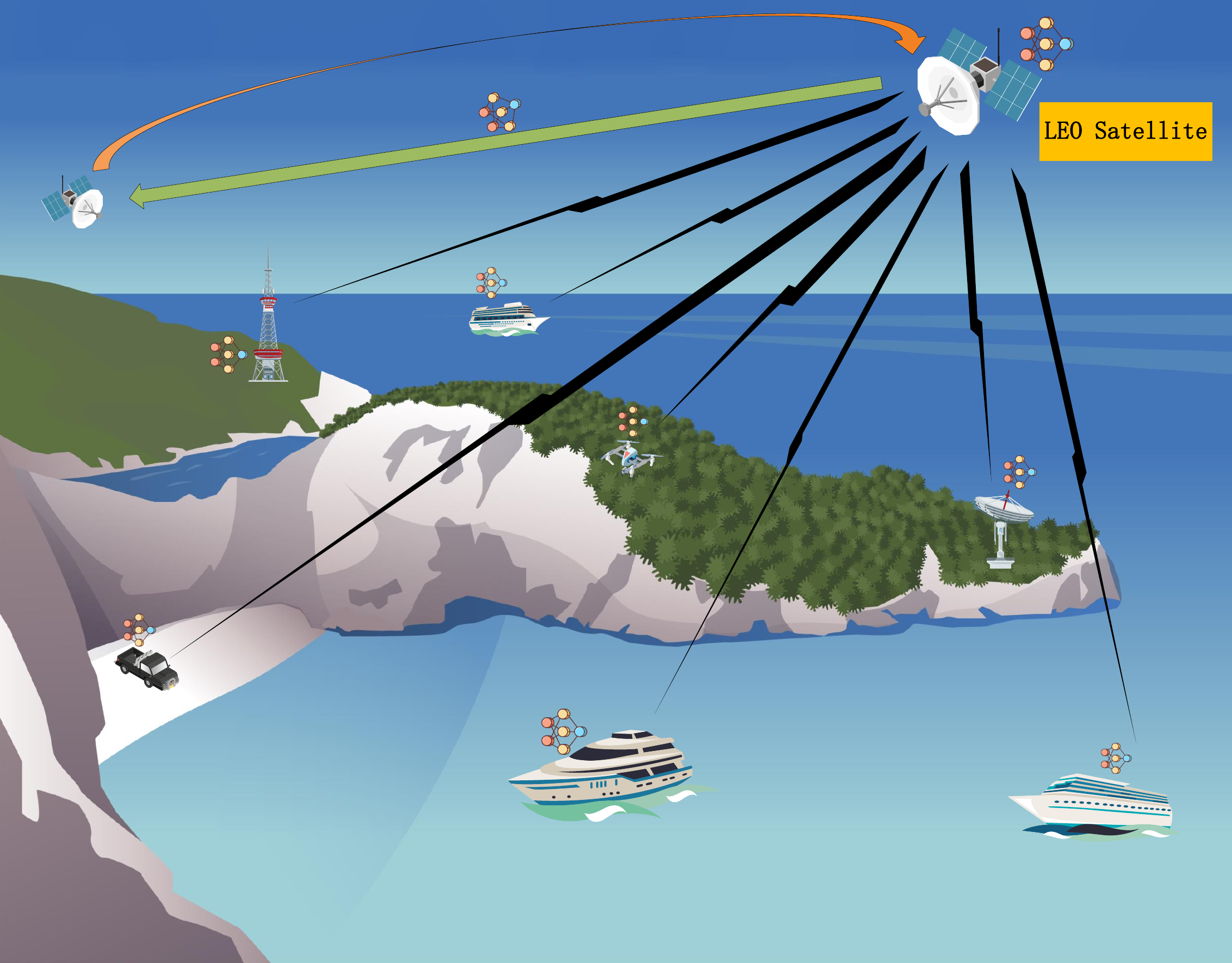}
    \caption{Federated learning under the combinational framework of HFL and SFL for LEO satellite-terrestrial network}
    \label{fig:scene LEO}
\end{figure}

\section{Related Work in Detail}
\label{app:survey}
For the literature in HFL, 
\citet{9699080} proposes to update the aggregated model parameter at each ES to the PS asynchronously, which allows the ES currently having bad link condition with the PS to offload its model parameter only when the link condition turns to be good.   
\citet{tu2020network} leverages fog computing to help the participating clients with weak computation capability. Specifically, when one client cannot update its model parameter at local in time, it can offload its computation task to its neighbor client.
\citet{9705093} builds a multiple-layer (more than two) architecture to cover each participating client in a device-to-device (D2D) enabled network.

In terms of the literature in SFL, 
when the next passing node is selected in fixed order, \citet{wang2022efficient} arranges all the participating clients to be in a ring topology. Differently, \citet{9956084} divides the whole sets of clients into multiple sub-groups, each of which gathers the clients with similar dataset distribution and arranges its clients to be in a ring topology. 
Global model parameter is firstly updated in sequence among the clients in one sub-group and then pushed to the other sub-group.
When the next passing node is selected randomly, 
\citet{8778390} values the neighbor clients, each of which can serve as the candidate of next passing node, with a probability based on the associated Lipschitz constant, which characterizes the smoothness of the local loss function.
\citet{9050511} maps the gradient of each next passing node candidate's local loss function to a selecting probability in a straightforward way, while \citet{10044204} makes this mapping by leveraging  MAB theory.

As of overcoming data heterogeneity, for conventional FL, when the loss function is strongly convex,
\citet{khaled2020tighter} can promise convergence, while 
\citet{li2019convergence} is able to further achieve zero optimality gap.
The convergence speed is mainly at $\mathcal{O}(1/T)$ \citet{li2019convergence} or $\mathcal{O}(1/\sqrt{T})$ \citet{khaled2020tighter}, where $T$ is the number of communication round.
For non-convex and smooth loss functions, prior works can ensure convergence \citet{NEURIPS2022_6db3ea52,pmlr-v180-jhunjhunwala22a,pmlr-v180-das22b,karimireddy2020scaffold} but can hardly achieve stationary point. 
The convergence rate is mainly at $\mathcal{O}(1/T)$ \citep{pmlr-v180-jhunjhunwala22a}, $\mathcal{O}(1/T^{\frac{2}{3}})$ \citep{pmlr-v180-das22b}, or $\mathcal{O}(1/\sqrt{T})$\citep{NEURIPS2022_6db3ea52,karimireddy2020scaffold}.

\section{Assumptions} \label{s:assum}
First we state some general assumptions in the work.
\begin{assumption}
	\label{ass:smooth}
	($L$-smooth).
The loss function $f(w, z_{n,i})$ is $L$-smooth if
	\begin{align}
		f\left(w_1, z_{n,i}\right) \le f\left(w_2, z_{n,i}\right) + \langle \nabla f\left(w_2, z_{n,i}\right),w_1- w_2\rangle + \frac{L}{2}\|w_1-w_2\|^2 
	\end{align}
With Assumption \ref{ass:smooth}, it can be derived that $f_n(w)$, i.e., the local loss function of $n$th client, is also $L$-smooth as they are the linear combination of the $L$-smooth functions $f\left(w, z_{n,i}\right)$ for $z_{n,i}\in \mathcal{D}_n$ \citep{xiao2023communicationefficient}.
Similarly, the function $F_m(w) \triangleq \sum_{n \in \mathcal{N}_m} \gamma_n^m f_n(w)$ and $F(w)$ are all $L$-smooth.
\end{assumption}

\begin{assumption}
	\label{ass:convex}
	($\mu$-strongly Convex).
	The loss function $f(w, z_{n,i})$ is $\mu$-strongly convex if
	\begin{align}
		f\left(w_1, z_{n,i}\right) \geq  f\left(w_2, z_{n,i}\right) + \langle \nabla f\left(w_2, z_{n,i}\right),w_1- w_2\rangle + \frac{\mu}{2}\|w_1-w_2\|^2 
	\end{align}
With Assumption \ref{ass:convex}, similar with the discussion after Assumption \ref{ass:smooth}, $f_n(w)$, $F_m(w)$, and $F(w)$ are all $\mu$-strongly convex as they are the linear combinations of $f\left(w, z_{n,i}\right)$ for $z_{n,i} \in \mathcal{D}_n$ and $n \in \mathcal{N}$ \citep{huang2023achieving,xiao2023communicationefficient}.
\end{assumption}

\begin{assumption}
	\label{ass:gradient}
	(Bounded Stochastic Gradient).
	The norm of gradients is uniformly bounded, i.e.,  
	\begin{equation}
		\left\| \nabla f_{n}\left(w\right)\right\|^2 \le G^2, \forall n \in \mathcal{N}
	\end{equation}
This assumption is general and has been adopted in \citet{NEURIPS2022_6db3ea52,pmlr-v202-li23o,10089406}.
With Assumption \ref{ass:gradient}, it can be derived from Jensen inequality that $\left\| \nabla F_{m}\left(w\right)\right\|^2$ and $\left\| \nabla F\left(w\right)\right\|^2$ are all upper bounded by $G^2$, where $\nabla F_m(w)$ is the gradient of $F_m(w)$ with $w$.
\end{assumption}

\begin{assumption}
	\label{ass:heterogeneity}
	(Bounded Variance and Heterogeneity).
	For $f_n(w)$ and $f(w, z_{n,i})$, the variance of stochastic gradients is assumed to be bounded, i.e.,
 \begin{equation} \label{e:hetero_ass_first}
\!\!\! \left \| \nabla f_{n}\left(w \right) \!-\! \nabla f\left(w, z_{n,i}\right) \right \|^2 \le \sigma^2_{n}, \forall n \in \mathcal{N}, z_{n,i} \in \mathcal{D}_n,
	\end{equation}
where $\sigma_n^2$ represents the local data variance of client $n$ \citep{huang2023achieving,pmlr-v202-li23o}. With \cref{e:hetero_ass_first} and $\forall m \in \mathcal{M}, n \in \mathcal{N}_m$, we can further derive
\begin{gather} \label{e:hetero_ass_second}
		\left \| \nabla f_{n}\left(w \right)- \nabla f\left(w, \xi_{n}\right) \right \|^2 \le \sigma^2_{n}, \forall n \in \mathcal{N},\\
\label{e:hetero_ass_third}
    \left \| \sum_{n \in \mathcal{N}_m} \gamma^m_{n} \left(\nabla f_{n}\left(w \right)- \nabla f \left(w, \xi_{n}\right)\right) \right \|^2 \le \theta^2_m, \forall m \in \mathcal{M}
\end{gather}
where $\theta^2_m = \sum_{n \in \mathcal{N}_m} \gamma^m_{n} \sigma^2_n$.
The inequality in \cref{e:hetero_ass_second,e:hetero_ass_third} define the variance due to data sampling for one client or one cluster.
We also assume
\begin{equation} \label{e:hetero_ass_fifth}
\mathbb{E}  \| \nabla F\left(w\right)-\nabla F_m\left(w\right) \|^2  \le \sigma^2, \forall m \in \mathcal{M}.
\end{equation}
The inequality in \cref{e:hetero_ass_fifth} bounds the heterogeneity between global dataset and the dataset in any cluster.
In summary, this assumption allows the distribution of dataset in any cluster to be heterogeneous from their peers. When $\sigma^2$ goes to zero, the assumed heterogeneity disappears among the clusters.
It is also worthy to note that we still assume the existence of data heterogeneity among all the clients, while we just do not impose any constraint on the extent of them, which is more general.
\end{assumption}

\section{Proof of Theorem \cref{thm:convex}}
\label{apx:convex}
\subsection{Key Lemmas}
To effectively convey our proofs, we need to prove some useful lemmas in advance.
We provide the claim of lemmas in current section, but we defer these lemmas' proof to \cref{proof key lemmas}. 
For convenience, we define {$g_k^t = \sum_{n \in \mathcal{N}_{m(t)}} \gamma_n^{m(t)} \nabla f\left(w^t_k, \xi_{n,k}\right)$}, which represents the weighted sum of the local gradients on ES node $m(t)$ after $k$ round of broadcasting in the cluster.
\begin{lemma}
\label{lemma1}
Define $\overline g_k^t=\mathbb{E}\left[g_k^t\right]=\sum_{n \in \mathcal{N}_{m(t)}} \gamma_n^{m(t)} \nabla f_n\left(w^t_k\right)$, then with \cref{ass:convex}, we have
    \begin{align}
        \label{lemma1-eq}
        \nonumber
        -2\left\langle w^{t-1}-w^*, \sum_{k=0}^{K-1} {\eta_k \overline g_{k}^{t-1}}\right\rangle &\le \sum_{k=0}^{K-1} \left(\frac{1}{K}+\mu\eta_k\right)\left\|w^{t-1}-w^{t-1}_k\right\|^2 \\
        \nonumber
        & + K \sum_{k=0}^{K-1} {\eta_k^2} \sum_{n \in \mathcal{N}_{m(t-1)}} \gamma_n^{m(t-1)} \left\|\nabla f_n\left(w_k^{t-1}\right)\right\|^2 \\
        \nonumber
        &+2\sum_{k=0}^{K-1}\eta_k \sum_{n \in \mathcal{N}_{m(t-1)}} \gamma_n^{m(t-1)} \left(f_n\left(w^*\right)-f_n\left(w^{t-1}_k\right)\right) \\
        &-\frac{\mu}{2}\sum_{k=0}^{K-1} \left\|w^{t-1}-w^*\right\|^2
    \end{align}
\end{lemma}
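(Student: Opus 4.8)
\textbf{Proof proposal for Lemma \ref{lemma1}.}

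The plan is to start from the inner product on the left-hand side and insert the intermediate iterates $w_k^{t-1}$ to telescope the distance of $w^{t-1}$ from $w^*$ against the individual gradient steps. Concretely, I would first write
\[
-2\left\langle w^{t-1}-w^*, \sum_{k=0}^{K-1}\eta_k\overline g_k^{t-1}\right\rangle
= -2\sum_{k=0}^{K-1}\eta_k\left\langle w^{t-1}-w_k^{t-1},\overline g_k^{t-1}\right\rangle
 -2\sum_{k=0}^{K-1}\eta_k\left\langle w_k^{t-1}-w^*,\overline g_k^{t-1}\right\rangle,
\]
splitting each term into a ``drift'' part (difference between the anchor $w^{t-1}$ and the current iterate $w_k^{t-1}$) and a ``descent'' part (current iterate against the optimum). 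This is the standard decomposition used in SGD-type analyses, adapted here to the $K$ inner steps.

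For the drift part $-2\langle w^{t-1}-w_k^{t-1},\overline g_k^{t-1}\rangle$, I would apply Young's inequality in the form $-2\langle a,b\rangle \le \frac{1}{c}\|a\|^2 + c\|b\|^2$ with a carefully chosen $c$ for each $k$; splitting the contribution into a $\frac{1}{K}\|w^{t-1}-w_k^{t-1}\|^2$ piece (this produces the $\frac1K$ coefficient in the claimed bound, which will later sum to $1$ over $k$) and a $K\eta_k^2\|\overline g_k^{t-1}\|^2$ piece, then bounding $\|\overline g_k^{t-1}\|^2 = \|\sum_n \gamma_n^{m(t-1)}\nabla f_n(w_k^{t-1})\|^2 \le \sum_n \gamma_n^{m(t-1)}\|\nabla f_n(w_k^{t-1})\|^2$ by Jensen/convexity of $\|\cdot\|^2$. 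That recovers the second line of the claim.

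For the descent part $-2\langle w_k^{t-1}-w^*,\overline g_k^{t-1}\rangle = -2\sum_n \gamma_n^{m(t-1)}\langle w_k^{t-1}-w^*,\nabla f_n(w_k^{t-1})\rangle$, I would invoke $\mu$-strong convexity of each $f_n$ (Assumption \ref{ass:convex}) in the form $\langle \nabla f_n(w_k^{t-1}), w^*-w_k^{t-1}\rangle \le f_n(w^*)-f_n(w_k^{t-1}) - \frac{\mu}{2}\|w_k^{t-1}-w^*\|^2$, yielding the third line $2\sum_k\eta_k\sum_n\gamma_n^{m(t-1)}(f_n(w^*)-f_n(w_k^{t-1}))$ plus a $-\mu\sum_k\eta_k\|w_k^{t-1}-w^*\|^2$ term. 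The remaining task is to convert $-\mu\sum_k\eta_k\|w_k^{t-1}-w^*\|^2$ into a term in $\|w^{t-1}-w^*\|^2$: using $\|w^{t-1}-w^*\|^2 \le 2\|w^{t-1}-w_k^{t-1}\|^2 + 2\|w_k^{t-1}-w^*\|^2$ gives $-\mu\eta_k\|w_k^{t-1}-w^*\|^2 \le -\frac{\mu}{2}\eta_k\|w^{t-1}-w^*\|^2 + \mu\eta_k\|w^{t-1}-w_k^{t-1}\|^2$; the first piece produces the last line $-\frac{\mu}{2}\sum_k\eta_k\|w^{t-1}-w^*\|^2$ (note the claim writes $\|w^{t-1}-w^*\|^2$ without the $\eta_k$ — I would need to check whether the intended bound absorbs $\eta_k$ into $\mu$ or whether there is a minor typo, but morally this is the source), and the second piece $\mu\eta_k\|w^{t-1}-w_k^{t-1}\|^2$ combines with the $\frac1K\|w^{t-1}-w_k^{t-1}\|^2$ from the drift part to give the $(\frac1K+\mu\eta_k)$ coefficient on the first line.

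The main obstacle I anticipate is bookkeeping the drift terms $\|w^{t-1}-w_k^{t-1}\|^2$ consistently: they arise from two separate places (the drift decomposition and the strong-convexity recentering) with different constants, and one must make sure the combined coefficient is exactly $\frac1K+\mu\eta_k$ as claimed, rather than something larger. A secondary subtlety is the indexing convention $w^{t-1}=w_0^{t-1}$ and the precise form of the last term (with or without the $\eta_k$ factor and the summation over $k$ producing a factor $K$); I would reconcile this against how Lemma \ref{lemma1} is invoked in the main proof of Theorem \ref{thm:convex}. No step requires anything beyond Young's inequality, Jensen's inequality, and Assumption \ref{ass:convex}, so the argument should be short once the constants are tracked carefully.
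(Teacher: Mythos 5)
Your proposal is correct and follows essentially the same route as the paper's own proof: the same split into a drift term $-2\sum_k\eta_k\langle w^{t-1}-w_k^{t-1},\overline g_k^{t-1}\rangle$ and a descent term, Young's inequality with parameter $K\eta_k$ plus Jensen for the drift, strong convexity of each $f_n$ for the descent, and the recentering $\|w_k^{t-1}-w^*\|^2\ge\tfrac12\|w^{t-1}-w^*\|^2-\|w_k^{t-1}-w^{t-1}\|^2$ to produce the $(\tfrac1K+\mu\eta_k)$ coefficient. Your flag about the missing $\eta_k$ in the last term is well taken: the derivation (and the way the lemma is used in \cref{thm:convex}, where $\beta=\tfrac{\mu}{2}\sum_k\eta_k$) shows the intended term is $-\tfrac{\mu}{2}\sum_k\eta_k\|w^{t-1}-w^*\|^2$, so the statement as written contains a typo.
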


 \begin{proof}
 	See \cref{proof of lemma 1}
 \end{proof}

\begin{lemma}
\label{lemma2}
Suppose $\eta_k<\frac{1}{2LK}$ for $k\in \mathcal{K}$, then we have
    \begin{align}
        \label{lemma2-eq}
        \nonumber
	-&\sum_{k=0}^{K-1} 2\eta_k\left(1-2LK\eta_k\right) \sum_{n \in \mathcal{N}_{m(t-1)}} \gamma_n^{m(t-1)} \left(f_n\left(w_k^{t-1}\right)-f_n\left(w^*\right)\right) \\
	\nonumber
	& \le \sum_{k=0}^{K-1} 2\eta_k\left(1-2LK\eta_k\right)\left(LK\eta_k-1\right) \sum_{n \in \mathcal{N}_{m(t-1)}} \gamma_n^{m(t-1)} \left(f_n\left(w^{t-1}\right)-f_n\left(w^*\right)\right) \\
	&+\sum_{k=0}^{K-1} 2LK{\eta_k^2} \sum_{n \in \mathcal{N}_{m(t-1)}} \gamma_n^{m(t-1)} \left(f_n\left(w^*\right)-f_n^*\right)+\sum_{k=0}^{K-1} \left(\frac{1}{K}-\mu\eta_k\right) \left\|w^{t-1}_k-w^{t-1}\right\|^2
    \end{align}
\end{lemma}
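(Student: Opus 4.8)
\textbf{Proof proposal for Lemma~\ref{lemma2}.}

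The plan is to start from the $L$-smoothness of each $f_n$ (Assumption~\ref{ass:smooth}) applied along the one-step local update, and to expand the descent that one broadcasting step $k$ produces at ES node $m(t-1)$. Concretely, I would first write $w_{k+1}^{t-1} = w_k^{t-1} - \eta_k g_k^{t-1}$, take expectations using $\mathbb{E}[g_k^{t-1}] = \overline g_k^{t-1} = \sum_{n} \gamma_n^{m(t-1)} \nabla f_n(w_k^{t-1})$, and invoke $L$-smoothness of $F_{m(t-1)}$ to get an inequality of the form $\mathbb{E}[F_{m(t-1)}(w_{k+1}^{t-1})] \le F_{m(t-1)}(w_k^{t-1}) - \eta_k \langle \nabla F_{m(t-1)}(w_k^{t-1}), \overline g_k^{t-1}\rangle + \tfrac{L}{2}\eta_k^2 \mathbb{E}\|g_k^{t-1}\|^2$. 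The key observation is that $\nabla F_{m(t-1)}(w_k^{t-1}) = \overline g_k^{t-1}$, so the inner-product term becomes $-\eta_k \|\overline g_k^{t-1}\|^2$; then I would lower-bound $\|\overline g_k^{t-1}\|^2$ via the convexity/PL-type inequality $\|\overline g_k^{t-1}\|^2 \ge 2\mu\big(\sum_n \gamma_n^{m(t-1)} f_n(w_k^{t-1}) - \sum_n \gamma_n^{m(t-1)} f_n^*\big)$ or, more likely in this paper's style, use $L$-smoothness in the form $\|\nabla f_n(w)\|^2 \le 2L(f_n(w) - f_n^*)$ applied termwise, to convert gradient norms into function-value gaps $f_n(w_k^{t-1}) - f_n^*$.

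Next I would decompose $f_n(w_k^{t-1}) - f_n^* = \big(f_n(w_k^{t-1}) - f_n(w^*)\big) + \big(f_n(w^*) - f_n^*\big)$, which is exactly why both $\Delta$-type terms ($f_n(w^{t-1}) - f_n(w^*)$, after relating $w_k^{t-1}$ back to $w^{t-1}=w_0^{t-1}$) and $\tau$-type terms ($f_n(w^*) - f_n^*$) appear on the right-hand side. To relate $f_n(w_k^{t-1})$ to $f_n(w^{t-1})$ I would use convexity: $f_n(w^{t-1}) \ge f_n(w_k^{t-1}) + \langle \nabla f_n(w_k^{t-1}), w^{t-1} - w_k^{t-1}\rangle$, and then control the cross term $\langle \nabla f_n(w_k^{t-1}), w^{t-1} - w_k^{t-1}\rangle$ by Young's inequality, splitting it into a multiple of $\|\nabla f_n(w_k^{t-1})\|^2$ (again turned into a function gap by smoothness) plus a multiple of the drift $\|w_k^{t-1} - w^{t-1}\|^2$; the coefficient $\tfrac{1}{K} - \mu\eta_k$ on the drift term and the factor $(LK\eta_k - 1)$ multiplying the $\Delta$ term strongly suggest a Young split tuned so that summing over $k$ collects the $\tfrac{1}{K}$ pieces into a single $\|w_k^{t-1}-w^{t-1}\|^2$, matching the drift bookkeeping in Lemma~\ref{lemma1}. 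The condition $\eta_k < \tfrac{1}{2LK}$ is what makes $1 - 2LK\eta_k > 0$, so the overall sign in front of the bracketed descent is controlled and the manipulation stays a genuine upper bound.

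After assembling these pieces for each $k$, I would sum over $k = 0,\dots,K-1$, being careful that $\sum_k \eta_k^2$-type remainders multiply $K$ (from the factor-of-$K$ that arises when bounding $\mathbb{E}\|g_k^{t-1}\|^2 \le \sum_n \gamma_n^{m(t-1)} \|\nabla f_n(w_k^{t-1})\|^2$ via Jensen and then smoothness), producing the $2LK\eta_k^2$ coefficient in front of the $\tau$-term. The main obstacle I anticipate is the careful bookkeeping of constants: getting precisely the coefficients $2\eta_k(1-2LK\eta_k)$, $(LK\eta_k - 1)$, $2LK\eta_k^2$, and $(\tfrac{1}{K} - \mu\eta_k)$ to line up requires choosing the Young's-inequality weights exactly right and tracking which smoothness bound ($f_n(w)-f_n^* \ge \tfrac{1}{2L}\|\nabla f_n(w)\|^2$ versus the strong-convexity lower bound) is used where; a wrong split gives the right shape but wrong constants. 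A secondary subtlety is handling the stochastic sampling: one must pass from $g_k^t$ to $\overline g_k^t$ by taking conditional expectation given $w_k^t$, and ensure the variance term that this generates is either absorbed here or deferred (it does not appear in this lemma's statement, so presumably it is bounded separately and lands in the $\theta_{m(t)}^2$ and $G^2$ terms of Theorem~\ref{thm:convex}). Once the per-$k$ inequality is correct, the sum is routine and yields \eqref{lemma2-eq} directly.
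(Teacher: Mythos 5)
Your opening move is off-target for this statement. \cref{lemma2} is a purely deterministic inequality: it contains no expectation, no $w_{k+1}^{t-1}$, and no variance term, and the paper's proof never touches the update rule $w_{k+1}^{t-1}=w_k^{t-1}-\eta_k g_k^{t-1}$. Starting from the descent lemma $\mathbb{E}[F_{m(t-1)}(w_{k+1}^{t-1})]\le F_{m(t-1)}(w_k^{t-1})-\eta_k\|\overline g_k^{t-1}\|^2+\tfrac{L}{2}\eta_k^2\mathbb{E}\|g_k^{t-1}\|^2$ produces a statement about consecutive iterates of $F_{m(t-1)}$, not about the weighted gaps $f_n(w_k^{t-1})-f_n(w^*)$ carrying the specific prefactor $2\eta_k(1-2LK\eta_k)$; that prefactor is inherited from the proof of \cref{thm:convex} (where $-2\eta_k+4LK\eta_k^2=-2\eta_k(1-2LK\eta_k)$ after the gradient norms at $w_k^{t-1}$ have already been converted by smoothness), and the sole job of \cref{lemma2} is to shift the evaluation point of these gaps from $w_k^{t-1}$ to $w^{t-1}$. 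Your worry about passing from $g_k^t$ to $\overline g_k^t$ is likewise moot here: the stochastic terms are handled elsewhere.

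The later part of your plan has the right ingredients (split the gap at $w_k^{t-1}$ into a $w_k^{t-1}$-vs-$w^{t-1}$ part and a $w^{t-1}$-vs-$w^*$ part, Young's inequality, smoothness to turn gradient norms into function gaps), but two concrete steps would fail as described. First, since the prefactor $2\eta_k(1-2LK\eta_k)$ is positive and enters with a minus sign, you need a \emph{lower} bound on $f_n(w_k^{t-1})-f_n(w^{t-1})$; the paper gets it from $\mu$-strong convexity linearized at $w^{t-1}$, namely $f_n(w_k^{t-1})-f_n(w^{t-1})\ge\langle\nabla f_n(w^{t-1}),w_k^{t-1}-w^{t-1}\rangle+\tfrac{\mu}{2}\|w_k^{t-1}-w^{t-1}\|^2$. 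Your convexity inequality is linearized at $w_k^{t-1}$, which bounds this difference from the wrong side, reintroduces $\|\nabla f_n(w_k^{t-1})\|^2$ (hence function gaps at $w_k^{t-1}$ again rather than at $w^{t-1}$, which is exactly what the lemma must eliminate), and, being plain convexity, cannot produce the $-\mu\eta_k$ in the drift coefficient $\left(\tfrac{1}{K}-\mu\eta_k\right)$. Second, the coefficients you yourself flag as uncertain come from specific moves you never name: Young's split with weights $\tfrac{K\eta_k}{2}$ and $\tfrac{1}{2K\eta_k}$ followed by $\|\nabla f_n(w^{t-1})\|^2\le 2L\left(f_n(w^{t-1})-f_n^*\right)$, and then the algebraic decomposition $-1=(LK\eta_k-1)-LK\eta_k$ combined with $f_n(w^*)-f_n(w^{t-1})\le f_n(w^*)-f_n^*$, which produce the $(LK\eta_k-1)$ factor on the $\Delta$-type term and the $2LK\eta_k^2$ coefficient on the $\tau$-type term; the hypothesis $\eta_k<\tfrac{1}{2LK}$ is then used only to drop the leftover $(1-2LK\eta_k)\in(0,1)$ factors from the $\tau$ and drift terms. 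Without these steps your route gives, as you anticipate, the right shape but not the stated inequality.
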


 \begin{proof}
 	See \cref{proof of lemma 2}
 \end{proof}

\begin{lemma}
	\label{lemma3}
	Assume \cref{ass:heterogeneity}, we have
	\begin{align}
		\label{lemma3-eq}
		\left\|w^{t-1}-w_k^{t-1}\right\|^2\le k\sum_{j=0}^{k-1} {\eta_j^2} G^2+k\sum_{j=0}^{k-1} {\eta_j^2}\theta_{m(t-1)}^2
	\end{align}
\end{lemma}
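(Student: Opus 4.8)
\textbf{Proof proposal for Lemma \ref{lemma3}.}

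The plan is to unroll the local update recursion \cref{e:w_iter} and bound the resulting telescoping sum. Starting from $w^{t-1}_0 = w^{t-1}$, a simple induction on the recursion $w^{t-1}_{j+1} = w^{t-1}_j - \eta_j g_j^{t-1}$ gives $w^{t-1} - w^{t-1}_k = \sum_{j=0}^{k-1} \eta_j g_j^{t-1}$, where $g_j^{t-1} = \sum_{n \in \mathcal{N}_{m(t-1)}} \gamma_n^{m(t-1)} \nabla f(w^{t-1}_j, \xi_{n,j})$. Taking norms and applying the Cauchy--Schwarz (or Jensen) inequality in the form $\|\sum_{j=0}^{k-1} a_j\|^2 \le k \sum_{j=0}^{k-1} \|a_j\|^2$ yields
\[
\left\|w^{t-1} - w^{t-1}_k\right\|^2 \le k \sum_{j=0}^{k-1} \eta_j^2 \left\|g_j^{t-1}\right\|^2.
\]

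The next step is to bound each $\|g_j^{t-1}\|^2$. The natural device is to split the stochastic gradient into its mean and a fluctuation: write $g_j^{t-1} = \overline g_j^{t-1} + (g_j^{t-1} - \overline g_j^{t-1})$ with $\overline g_j^{t-1} = \sum_n \gamma_n^{m(t-1)} \nabla f_n(w_j^{t-1})$. Then $\|g_j^{t-1}\|^2 \le \|\overline g_j^{t-1}\|^2 + \|g_j^{t-1} - \overline g_j^{t-1}\|^2$ (possibly after taking expectation so the cross term vanishes, or using a $2$-term bound). The first piece is controlled by \cref{ass:gradient}, which via Jensen gives $\|\overline g_j^{t-1}\|^2 \le G^2$; the second piece is exactly the quantity bounded by \cref{e:hetero_ass_third} in \cref{ass:heterogeneity}, namely $\|g_j^{t-1} - \overline g_j^{t-1}\|^2 \le \theta^2_{m(t-1)}$. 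Substituting these two bounds back gives $\|g_j^{t-1}\|^2 \le G^2 + \theta_{m(t-1)}^2$, and hence
\[
\left\|w^{t-1} - w^{t-1}_k\right\|^2 \le k \sum_{j=0}^{k-1} \eta_j^2 \left(G^2 + \theta_{m(t-1)}^2\right) = k\sum_{j=0}^{k-1}\eta_j^2 G^2 + k\sum_{j=0}^{k-1}\eta_j^2 \theta_{m(t-1)}^2,
\]
which is the claimed inequality \cref{lemma3-eq}.

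There is no serious obstacle here; the only point requiring a little care is the splitting step. If one wants the cross term $\langle \overline g_j^{t-1}, g_j^{t-1} - \overline g_j^{t-1}\rangle$ to vanish one should take conditional expectation given $w_j^{t-1}$, so strictly speaking the inequality holds in expectation; alternatively one can avoid expectations entirely by using $\|a+b\|^2 \le 2\|a\|^2 + 2\|b\|^2$ at the cost of a factor $2$. Since the downstream use of this lemma (in \cref{eqn:thm1}) already carries such constants, I would present the version with the cross term killed by conditioning, keeping the clean constant $1$, and note that the bound propagates through the tower property. The remaining work is purely the routine algebra of distributing the sum, which I would not belabor.
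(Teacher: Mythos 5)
Your proposal is correct and follows essentially the same route as the paper's proof: unroll the recursion, apply Cauchy--Schwarz to get $k\sum_{j}\eta_j^2\|g_j^{t-1}\|^2$, split $g_j^{t-1}$ into $\overline g_j^{t-1}$ plus fluctuation, and invoke \cref{ass:gradient} and \cref{e:hetero_ass_third}. Your remark about the cross term is well taken --- the paper simply drops $2k\sum_j\langle\overline g_j^{t-1}, g_j^{t-1}-\overline g_j^{t-1}\rangle$ without comment, so your explicit treatment (conditioning so it vanishes in expectation, or paying a factor $2$) is if anything more careful than the original.
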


\begin{proof}
	See \cref{proof of lemma 3}
\end{proof}

\subsection{Completing the Proof of  \cref{thm:convex}}
{Thanks to the smoothness of global loss function $F(w)$ }and the fact that $\nabla F(w^*)=0$, we have
\begin{align}
	\label{th1-1}
	\nonumber
	\mathbb{E}\{F\left(w^t\right)\}-F\left(w^*\right) & \le \mathbb{E}\left\langle \nabla F\left(w^*\right),w^t-w^* \right\rangle+\frac{L}{2}\mathbb{E}\left\|w^t-w^*\right\|^2 \\
	& =\frac{L}{2}\mathbb{E}\left\|w^t-w^*\right\|^2 
\end{align}

Based on the results in \cref{th1-1}, for $\left\|w^t-w^*\right\|$ , there is.
\begin{align}
	\label{th1-2.1}
	\nonumber
	\left\|w^t-w^*\right\|^2&=\left\|w^{t-1}-w^*-\sum_{k=0}^{K-1} {\eta_{k} g_{k}^{t-1}}\right\|^2 \\
	\nonumber
	& =\left\|w^{t-1}-w^*+\sum_{k=0}^{K-1} {\eta_{k} \left(\overline g_{k}^{t-1} - \overline g_{k}^{t-1}-g_{k}^{t-1}\right)}\right\|^2 \\
    \nonumber
	& =\left\|w^{t-1}-w^*-\sum_{k=0}^{K-1} {\eta_{k} \overline g_{k}^{t-1}}\right\|^2+\left\|\sum_{k=0}^{K-1} {\eta_{k} \left(\overline g_{k}^{t-1}-g_{k}^{t-1}\right)}\right\|^2 \\
    & +2\left\langle w^{t-1}-w^*-\sum_{k=0}^{K-1} {\eta_{k} \overline g_{k}^{t-1}},\sum_{k=0}^{K-1} {\eta_{k} \left(\overline g_{k}^{t-1}-g_{k}^{t-1}\right)}\right\rangle
\end{align}

Note that $\mathbb{E}\left[\overline g_{k}^{t-1}-g_{k}^{t-1}\right]=0$, we have
\begin{align}
    \label{th1-2}
    \nonumber
    \mathbb{E}\left\|w^t-w^*\right\|^2&=\mathbb{E}\left\|w^{t-1}-w^*\right\|^2 +\mathbb{E}\left\|\sum_{k=0}^{K-1} {\eta_{k} \left(\overline g_{k}^{t-1}-g_{k}^{t-1}\right)}\right\|^2\\
    &+\mathbb{E}\left\|\sum_{k=0}^{K-1} {\eta_{k} \overline g_{k}^{t-1}}\right\|^2 -2 \mathbb{E}\left\langle w^{t-1}-w^*, \sum_{k=0}^{K-1} {\eta_{k} \overline g_{k}^{t-1}}\right\rangle
\end{align}

By using the Cauchy Schwartz inequality, we have $\sum_{i=1}^n \left\|a_{i}b_{i}\right\|^2 \le \sum_{i=1}^n \left\|a_i\right\|^2 \sum_{i=1}^n \left\|b_i\right\|^2$.
And combing the inequality with \cref{th1-2}, it follows that
\begin{align}
	\label{th1-3}
    \nonumber
	\left\|w^t-w^*\right\|^2 & \le\left\|w^{t-1}-w^*\right\|^2-2\left\langle w^{t-1}-w^*, \sum_{k=0}^{K-1} {\eta_{k} \overline g_{k}^{t-1}}\right\rangle \\
    &+K\sum_{k=0}^{K-1}\eta_{k}^2\left\|\overline g_{k}^{t-1}\right\|^2+K\sum_{k=0}^{K-1}\eta_{k}^2{\theta_{m(t-1)}^2 }
\end{align}
where {$\theta_{m(t)}^2$} can be checked from \cref{ass:heterogeneity}.

From \cref{lemma1}, it follows that
\begin{align}
	\label{th1-4}
	\nonumber
	& \left\|w^t-w^*\right\|^2 \\
    \nonumber
    & \le \left(1-\frac{\mu}{2}\sum_{k=0}^{K-1} \eta_k\right) \left\|w^{t-1}-w^*\right\|^2 + K\sum_{k=0}^{K-1}\eta_{k}^2{\theta_{m(t-1)}^2} +\sum_{k=0}^{K-1} \left(\frac{1}{K}+\mu\eta_k\right)\left\|w^{t-1}-w^{t-1}_k\right\|^2 \\
	& \underbrace{+2K \sum_{k=0}^{K-1} \eta_k^2 \sum_{n \in \mathcal{N}_{m(t-1)}} \gamma_n^{m(t-1)} \left\|\nabla f_n\left(w_k^{t-1}\right)\right\|^2 +2\sum_{k=0}^{K-1}\eta_{k} \sum_{n \in \mathcal{N}_{m(t-1)}} \gamma_n^{m(t-1)} \left(f_n\left(w^*\right)-f_n\left(w^{t-1}_k\right)\right)}_{A_1}
\end{align}

Since $f_n(w)$ is also L-smooth \citep{li2019convergence}, it follows that
\begin{align}
	\label{th1-5}
	\left\|\nabla f_n\left(w_k^{t-1}\right)\right\|^2 \le 2L\left(f_n\left(w_k^{t-1}\right)-f_n^*\right).
\end{align}

Next, we aim to bound $A_1$.
By using \cref{th1-5}, we have
\begin{align}
	\label{th1-6}
	\nonumber
	2&K\sum_{k=0}^{K-1} \eta_k^2 \sum_{n \in \mathcal{N}_{m(t-1)}} {\gamma_n^{m(t-1)}} \left\|\nabla f_n\left(w_k^{t-1}\right)\right\|^2 +2\sum_{k=0}^{K-1}\eta_{k} \sum_{n \in \mathcal{N}_{m(t-1)}} \gamma_n^{m(t-1)} \left(f_n\left(w^*\right)-f_n\left(w^{t-1}_k\right)\right) \\
	\nonumber
	& \le 4LK \sum_{k=0}^{K-1} \eta_k^2 \sum_{n \in \mathcal{N}_{m(t-1)}} {\gamma_n^{m(t-1)}} \left(f_n\left(w_k^{t-1}\right)-f_n^*\right) \\
    \nonumber
    & +2\sum_{k=0}^{K-1}\eta_{k} \sum_{n \in \mathcal{N}_{m(t-1)}} \gamma_n^{m(t-1)} \left(f_n\left(w^*\right)-f_n\left(w^{t-1}_k\right)\right) \\
	\nonumber
	& =\sum_{k=0}^{K-1} 2\eta_k \left(2LK\eta_k-1\right) \sum_{n \in \mathcal{N}_{m(t-1)}} {\gamma_n^{m(t-1)}} \left(f_n\left(w_k^{t-1}\right)-f_n^*\right) \\
    \nonumber
    &+2\sum_{k=0}^{K-1}\eta_{k} \sum_{n \in \mathcal{N}_{m(t-1)}} \gamma_n^{m(t-1)} \left(f_n\left(w^*\right)-f_n^*\right) \\
	\nonumber
	& =-\sum_{k=0}^{K-1} 2\eta_k \left(1-2LK\eta_k\right) \sum_{n \in \mathcal{N}_{m(t-1)}} {\gamma_n^{m(t-1)}} \left(f_n\left(w_k^{t-1}\right)-f_n^*\right) \\
	\nonumber
	& +2\sum_{k=0}^{K-1}\eta_{k} \sum_{n \in \mathcal{N}_{m(t-1)}} {\gamma_n^{m(t-1)}} \left(f_n\left(w^*\right)-f_n^*\right) \\
	\nonumber
	&=\underbrace{-\sum_{k=0}^{K-1} 2\eta_k\left(1-2LK\eta_k\right) \sum_{n \in \mathcal{N}_{m(t-1)}} {\gamma_n^{m(t-1)}} \left(f_n\left(w_k^{t-1}\right)-f_n\left(w^*\right)\right)}_{A_2} \\
	& +4LK\sum_{k=0}^{K-1} {\eta_{k}^2} \sum_{n \in \mathcal{N}_{m(t-1)}} \gamma_n^{m(t-1)} \left(f_n\left(w^*\right)-f_n^*\right) 
\end{align}

From \cref{lemma2} and \cref{lemma3},combining \cref{th1-4} and \cref{th1-6}, it follows that
\begin{align}
	\label{th1-7}
	\nonumber
    \left\|w^t-w^*\right\|^2 & \le \left(1-\frac{\mu}{2}\sum_{k=0}^{K-1} \eta_k\right) \left\|w^{t-1}-w^*\right\|^2 + \sum_{k=0}^{K-1} \left(K\eta_{k}^2+\frac{2k}{K} \sum_{j=0}^{k-1}\eta_j^2\right) {\theta_{m(t-1)}^2}\\
	\nonumber
	&+\sum_{k=0}^{K-1} 2\eta_k\left(1-2LK\eta_k\right)\left(LK\eta_k-1\right) \sum_{n \in \mathcal{N}_{m(t-1)}} {\gamma_n^{m(t-1)}} \left(f_n\left(w^{t-1}\right)-f_n\left(w^*\right)\right) \\
	&+\sum_{k=0}^{K-1} \frac{2k}{K} \sum_{j=0}^{k-1} {\eta_j^2} G^2+\sum_{k=0}^{K-1} 6LK{\eta_k^2} \sum_{n \in \mathcal{N}_{m(t-1)}} {\gamma_n^{m(t-1)}} \left(f_n\left(w^*\right)-f_n^*\right)
\end{align}

Recalling that 
\begin{align}
	& \beta = \frac{\mu}{2} \sum_{k=0}^{K-1} \eta_k, \\
	& \Delta_{m} = \sum_{n \in \mathcal{N}_{m}} \gamma_{n}^{m} \left(f_n\left(w^{t}\right)-f_n\left(w^*\right)\right), \forall m \in \mathcal{M},\\
	& \tau_{m} = \sum_{n \in \mathcal{N}_{m}} \gamma_{n}^{m} \left(f_n\left(w^*\right)-f_n^*\right), \forall m \in \mathcal{M},
\end{align}
through $t$ rounds of iteration, we have
\begin{align}
	\label{th1-8}
	\nonumber
	\left\|w^t-w^*\right\|^2 & \le \left(1-\beta\right)^{t} \left\|w^0-w^*\right\|^2 + \sum_{k=0}^{K-1} \left(K{\eta_{k}^2}+\frac{2k}{K} \sum_{j=0}^{k-1}{\eta_j^2}\right) \sum_{i=0}^{t-1} \left(1-\beta\right)^{i} {\theta_{m(i)}^2} \\
	\nonumber
	&+\sum_{k=0}^{K-1} \frac{2k}{K} \sum_{j=0}^{k-1} {\eta_j^{2}} \sum_{i=0}^{t-1} \left(1-\beta\right)^{i} G^2 \\
	\nonumber
	&+\sum_{k=0}^{K-1} 2\eta_k\left(1-2LK\eta_k\right)\left(LK\eta_k-1\right) \sum_{i=0}^{t-1} \left(1-\beta\right)^{i} \Delta_{m(i)} \\
	&+\sum_{k=0}^{K-1} 6LK{\eta_k^2} \sum_{i=0}^{t-1} \left(1-\beta\right)^{i} \tau_{m(i)}
\end{align}

And combining the \cref{th1-1} and \cref{th1-8}, we bound the $\mathbb{E}\{F\left(w^T\right)\}-F\left(w^*\right)$
\begin{align}
    \nonumber
    \mathbb{E}\{F\left(w^T\right)\}-F\left(w^*\right) & \le \frac{L}{2}\left(1-\beta\right)^{T} \left\|w^0-w^*\right\|^2 + \frac{L}{2}\sum_{k=0}^{K-1} \left(K{\eta_{k}^2}+\frac{2k}{K} \sum_{j=0}^{k-1}{\eta_j^2}\right) \sum_{t=0}^{T-1} \left(1-\beta\right)^{t} {\theta_{m(t)}^2} \\
    \nonumber
	&+\frac{L}{2}\sum_{k=0}^{K-1} 2\eta_k\left(1-2LK\eta_k\right)\left(LK\eta_k-1\right) \sum_{t=0}^{T-1} \left(1-\beta\right)^{t} \Delta_{m(t)} \\
	&+\frac{L}{2}\sum_{k=0}^{K-1} \frac{2k}{K} \sum_{j=0}^{k-1} {\eta_j^{2}} \sum_{t=0}^{T-1} \left(1-\beta\right)^{t} G^2+\frac{L}{2}\sum_{k=0}^{K-1} 6LK{\eta_k^2} \sum_{t=0}^{T-1} \left(1-\beta\right)^{t} \tau_{m(t)}
\end{align}

This completes the proof.

\section{Proof of Theorem \cref{thm:nonconvex}}
\label{apx:nonconvex}
\subsection{Key Lemmas}
We first give some necessary lemmas.
Their proof are deferred to \cref{proof key lemmas}.
Firstly, we still denote $g_k^t = \sum_{n \in \mathcal{N}_{m(t)}} \gamma_n^{m(t)} \nabla f\left(w^t_k, \xi_{n,k}\right)$.
\begin{lemma}
	\label{lemma4}
	Assume \cref{ass:heterogeneity}, it follows that
	\begin{align}
		\label{lemma4-eq}
		\mathbb{E}\left\langle\nabla F\left(w^t\right), \nabla F_{m(t)}\left(w^t\right)-g_k^t\right\rangle \le \frac{1}{4}\mathbb{E}\left\|\nabla F\left(w^t\right)\right\|^2+\theta_{m(t)}^2
	\end{align}
\end{lemma}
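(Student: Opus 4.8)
\textbf{Proof proposal for Lemma~\ref{lemma4}.}
The plan is to bound the inner product $\mathbb{E}\langle \nabla F(w^t), \nabla F_{m(t)}(w^t) - g_k^t\rangle$ by first inserting the intermediate quantity $\nabla F_{m(t)}(w_k^t) = \overline g_k^t$, i.e.\ the deterministic counterpart of $g_k^t$. Since $\mathbb{E}[g_k^t] = \overline g_k^t = \sum_{n\in\mathcal{N}_{m(t)}}\gamma_n^{m(t)}\nabla f_n(w_k^t)$, the stochastic fluctuation $\overline g_k^t - g_k^t$ has zero mean. However, the subtlety is that $\nabla F(w^t)$ is itself a random quantity (it depends on the history), so I cannot simply drop $\langle \nabla F(w^t), \overline g_k^t - g_k^t\rangle$; instead I would note that conditioned on $w_k^t$ (and hence on $w^t$, since $w^t = w_0^t$ and the iterates $w_1^t,\dots,w_k^t$ are measurable with respect to the sampling up to step $k$), the expectation of $\overline g_k^t - g_k^t$ vanishes. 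Actually the cleaner route: split $\nabla F_{m(t)}(w^t) - g_k^t = \big(\nabla F_{m(t)}(w^t) - \overline g_k^t\big) + \big(\overline g_k^t - g_k^t\big)$, handle the second piece by the tower property of expectation (it contributes zero after conditioning on the iterate $w_k^t$), and handle the first piece by Cauchy--Schwarz together with Young's inequality.

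For the first piece, apply $\langle a, b\rangle \le \tfrac14\|a\|^2 + \|b\|^2$ with $a = \nabla F(w^t)$ and $b = \nabla F_{m(t)}(w^t) - \overline g_k^t = \sum_{n\in\mathcal{N}_{m(t)}}\gamma_n^{m(t)}(\nabla f_n(w^t) - \nabla f_n(w_k^t))$. This yields $\tfrac14\|\nabla F(w^t)\|^2 + \|\sum_n \gamma_n^{m(t)}(\nabla f_n(w^t) - \nabla f_n(w_k^t))\|^2$. Here I would like to bound the second term by $\theta_{m(t)}^2$. This is where I expect the main obstacle: the definition of $\theta_m^2$ in Assumption~\ref{ass:heterogeneity} (equation \eqref{e:hetero_ass_third}) controls $\|\sum_n \gamma_n^m(\nabla f_n(w) - \nabla f(w,\xi_n))\|^2$, i.e.\ the \emph{sampling} variance at a fixed point, not the drift $\|\sum_n \gamma_n^m(\nabla f_n(w^t) - \nabla f_n(w_k^t))\|^2$ between two iterates. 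So either the intended decomposition is different, or there is an implicit use of smoothness plus a bound on $\|w^t - w_k^t\|$ (Lemma~\ref{lemma3}-style), or — most likely — the term to be bounded is really the sampling-noise term $\|\sum_n\gamma_n^{m(t)}(\nabla f_n(w_k^t) - \nabla f(w_k^t,\xi_{n,k}))\|^2$, which is exactly $\le \theta_{m(t)}^2$ by \eqref{e:hetero_ass_third}, after reorganizing the decomposition so that $\overline g_k^t$ is replaced by $\nabla F_{m(t)}(w_k^t)$ and $g_k^t$ is kept, and $\nabla F_{m(t)}(w^t)$ is cancelled against a telescoping contribution elsewhere in the main proof of Theorem~\ref{thm:nonconvex}. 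I would reconcile the statement with the assumption by choosing the decomposition $\nabla F_{m(t)}(w^t) - g_k^t$, taking expectations, using that $\mathbb{E}[g_k^t \mid \mathcal{F}_k^t] = \overline g_k^t$, and then bounding $\langle \nabla F(w^t), \nabla F_{m(t)}(w^t) - \overline g_k^t\rangle$ via Young's inequality, arriving at the stated right-hand side provided $\|\nabla F_{m(t)}(w^t) - \overline g_k^t\|^2 \le \theta_{m(t)}^2$; if that inequality needs Assumption~\ref{ass:heterogeneity} in a different form, I would adjust by absorbing a smoothness term, keeping the final bound of the claimed shape.

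In summary, the skeleton is: (i) decompose the vector $\nabla F_{m(t)}(w^t) - g_k^t$ into a mean-zero stochastic part and a deterministic part; (ii) kill the stochastic part by conditioning and the tower rule; (iii) apply Cauchy--Schwarz and Young's inequality ($\langle a,b\rangle\le\frac14\|a\|^2+\|b\|^2$) to the deterministic part to peel off $\frac14\|\nabla F(w^t)\|^2$; (iv) identify the residual squared-norm with $\theta_{m(t)}^2$ via Assumption~\ref{ass:heterogeneity}. The only genuinely delicate point is matching the residual term to the $\theta_m^2$ bound, i.e.\ making sure the decomposition is arranged so that what remains is precisely the intra-cluster sampling variance covered by \eqref{e:hetero_ass_third} rather than an uncontrolled iterate-drift term.
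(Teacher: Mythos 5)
Your skeleton is essentially the paper's: the paper proves \cref{lemma4} in two lines, applying $\langle a,b\rangle\le\tfrac14\|a\|^2+\|b\|^2$ with $a=\nabla F(w^t)$ and $b=\nabla F_{m(t)}(w^t)-g_k^t$ taken as a whole (\cref{prf4-1}), and then invoking \cref{e:hetero_ass_third} to assert $\mathbb{E}\|\nabla F_{m(t)}(w^t)-g_k^t\|^2\le\theta_{m(t)}^2$ (\cref{prf4-2}). It does not perform your splitting into a mean-zero part and a deterministic part, and it does not need the conditioning/tower step (which is valid but unnecessary on this route, since the zero-mean cross term is never isolated).

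The ``delicate point'' you flag is genuine, and the paper does not resolve it: for $k\ge 1$ the stochastic gradients in $g_k^t$ are evaluated at $w_k^t$ while $\nabla F_{m(t)}$ is evaluated at $w^t$, so $\nabla F_{m(t)}(w^t)-g_k^t$ contains both the intra-cluster sampling noise, which \cref{e:hetero_ass_third} controls, and the iterate drift $\sum_{n\in\mathcal{N}_{m(t)}}\gamma_n^{m(t)}\bigl(\nabla f_n(w^t)-\nabla f_n(w_k^t)\bigr)$, which \cref{ass:heterogeneity} does not control. The paper's display \cref{prf4-2} sidesteps this by writing $\nabla f_n(w)-\nabla f(w,\xi_{n,k})$ at an unspecified common point $w$, i.e.\ it silently identifies $w^t$ with $w_k^t$; as written the step is literally justified only for $k=0$ (single local step, consistent with the paper's concluding remark about extending the analysis to multi-step local iteration), or if the comparator in the lemma is changed from $\nabla F_{m(t)}(w^t)$ to $\overline g_k^t=\nabla F_{m(t)}(w_k^t)$, in which case the drift $\nabla F_{m(t)}(w^t)-\nabla F_{m(t)}(w_k^t)$ must then be handled separately (via $L$-smoothness and a \cref{lemma3}-type bound) in the proof of \cref{thm:nonconvex}, at the cost of extra terms in \cref{eqn:thm2}. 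So your proposal reproduces the paper's argument and correctly pinpoints its loose step, but your own decomposition makes the gap explicit rather than closing it: the repairs you sketch would change the stated right-hand side, and none of them yields the bound exactly as claimed for general $k$.
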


 \begin{proof}
 	See \cref{proof of lemma 4}
 \end{proof}

\begin{lemma}
	\label{lemma5}
	According to \cref{ass:heterogeneity}, it follows that
	\begin{align}
		\label{lemma5-eq}
		\mathbb{E}\left\langle\nabla F\left(w^t\right),\nabla F\left(w^t\right)-\nabla F_{m(t)}\left(w^t\right)\right\rangle\le\frac{1}{2}\mathbb{E}\left\|\nabla F\left(w^t\right)\right\|^2-\frac{1}{2}\mathbb{E}\left\|\nabla F_{m(t)}\left(w^t\right)\right\|^2+\frac{1}{2}\sigma^2 
	\end{align}
\end{lemma}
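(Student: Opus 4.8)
\textbf{Proof proposal for Lemma \ref{lemma5}.}

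The plan is to start from the elementary polarization identity $2\langle a,b\rangle = \|a\|^2 + \|b\|^2 - \|a-b\|^2$ applied to $a = \nabla F(w^t)$ and $b = \nabla F(w^t) - \nabla F_{m(t)}(w^t)$. This immediately gives
\begin{align}
\nonumber
2\left\langle \nabla F(w^t),\, \nabla F(w^t) - \nabla F_{m(t)}(w^t)\right\rangle
&= \left\|\nabla F(w^t)\right\|^2 + \left\|\nabla F(w^t) - \nabla F_{m(t)}(w^t)\right\|^2 \\
\nonumber
&\quad - \left\|\nabla F(w^t) - \bigl(\nabla F(w^t) - \nabla F_{m(t)}(w^t)\bigr)\right\|^2 \\
\nonumber
&= \left\|\nabla F(w^t)\right\|^2 + \left\|\nabla F(w^t) - \nabla F_{m(t)}(w^t)\right\|^2 - \left\|\nabla F_{m(t)}(w^t)\right\|^2.
\end{align}
Dividing by $2$ and taking expectations, the claim reduces to controlling $\tfrac12\mathbb{E}\|\nabla F(w^t) - \nabla F_{m(t)}(w^t)\|^2$ by $\tfrac12\sigma^2$, which is exactly the heterogeneity bound \cref{e:hetero_ass_fifth} in \cref{ass:heterogeneity}. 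Substituting that inequality closes the argument.

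First I would write out the polarization identity carefully, being explicit that the third squared-norm term simplifies to $\|\nabla F_{m(t)}(w^t)\|^2$ because the two copies of $\nabla F(w^t)$ cancel. Then I would take $\mathbb{E}[\cdot]$ of both sides — no subtlety here since the identity is deterministic — and finally invoke \cref{ass:heterogeneity} to replace $\mathbb{E}\|\nabla F(w^t)-\nabla F_{m(t)}(w^t)\|^2$ by $\sigma^2$, giving the stated bound after dividing through by $2$.

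I do not anticipate a genuine obstacle: this is a one-line application of a standard identity plus an assumed bound, and the only thing to be careful about is the sign bookkeeping in the polarization step and making sure the $m(t)$-indexed quantities are the cluster-wise objects $F_{m(t)}$ (defined in \cref{ass:smooth}) so that \cref{e:hetero_ass_fifth} applies verbatim. If anything, the mild point worth a sentence is that \cref{ass:heterogeneity} already states the heterogeneity bound in expectation form, so no additional conditioning or tower-property argument is needed here.
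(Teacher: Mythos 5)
Your proposal is correct and follows essentially the same route as the paper: the paper also applies the identity $2\langle a,b\rangle=\|a\|^2+\|b\|^2-\|a-b\|^2$ to split $\mathbb{E}\left\langle\nabla F\left(w^t\right),\nabla F\left(w^t\right)-\nabla F_{m(t)}\left(w^t\right)\right\rangle$ into the three squared-norm terms and then bounds $\mathbb{E}\left\|\nabla F\left(w^t\right)-\nabla F_{m(t)}\left(w^t\right)\right\|^2$ by $\sigma^2$ via \cref{e:hetero_ass_fifth} of \cref{ass:heterogeneity}. No gaps; your sign bookkeeping and the observation that no extra conditioning argument is needed both match the paper's argument.
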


 \begin{proof}
 	See \cref{proof of lemma 5}
 \end{proof}


\subsection{Completing the Proof of \cref{thm:nonconvex}}

In case the loss function is non-convex, we first have
\begin{align}
	\label{th2-1}
    \nonumber
	\left\langle\nabla F\left(w^t\right),g_k^t\right\rangle & =\left\langle\nabla F\left(w^t\right),g_k^t-\nabla F_{m(t)}\left(w^t\right)\right\rangle+\left\langle\nabla F\left(w^t\right),\nabla F_{m(t)}(w^t) - \nabla F\left(w^t\right)\right\rangle \\
    &+\left\langle\nabla F\left(w^t\right), \nabla F\left(w^t\right)\right\rangle
\end{align}
Then there is
\begin{align}
	\label{th2-2}
    \nonumber
	\left\|\nabla F\left(w^t\right)\right\|^2 & = \left\langle\nabla F\left(w^t\right),g_k^t\right\rangle+\langle\nabla F\left(w^t\right),\nabla F_{m(t)}\left(w^t\right)-g_k^t\rangle \\
    &+\langle\nabla F\left(w^t\right),\nabla F\left(w^t\right)-\nabla F_{m(t)}\left(w^t\right) \rangle
\end{align}

From Lemma \ref{lemma4} and Lemma \ref{lemma5}, it follows that
\begin{align}
	\label{th2-3}
	\frac{1}{4}\left\|\nabla F\left(w^t\right)\right\|^2\le\left\langle\nabla F\left(w^t\right),g_k^t\right\rangle+\theta_{m(t)}^2+\frac{1}{2}\sigma^2-\frac{1}{2}\left\|\nabla F_{m(t)}\left(w^t\right)\right\|^2
\end{align}

Therefore, we can get
\begin{align}
	\label{th2-4}
	\left\langle\nabla F\left(w^t\right),g_k^t\right\rangle \ge \frac{1}{4}\left\|\nabla F\left(w^t\right)\right\|^2+\frac{1}{2}\left\|\nabla F_{m(t)}\left(w^t\right)\right\|^2-\left(\theta_{m(t)}^2+\frac{1}{2}\sigma^2\right)
\end{align}

Telescoping $K$ rounds of iterations in cluster $m(t)$, it follows that
\begin{align}
	\label{th2-5}
    \nonumber
	\sum_{k=0}^{K-1} \eta_k\left\langle\nabla F\left(w^t\right),g_k^t\right\rangle &\ge \frac{1}{4}\sum_{k=0}^{K-1} \eta_k\left\|\nabla F\left(w^t\right)\right\|^2+\frac{1}{2}\sum_{k=0}^{K-1} \eta_k\left\|\nabla F_{m(t)}\left(w^t\right)\right\|^2 \\
    &-\sum_{k=0}^{K-1} \eta_k\left(\theta_{m(t)}^2+\frac{1}{2}\sigma^2\right)
\end{align}

Since $F(w)$ is L-smooth, it follows that
\begin{align}
	\label{th2-6}
	\nonumber
	\sum_{k=0}^{K-1} \eta_k\left\langle\nabla F\left(w^t\right),g_k^t\right\rangle &=\left\langle\nabla F\left(w^t\right), w^t-w^{t+1} \right\rangle \\
	\nonumber
	& \le F(w^t)-F(w^{t+1})+\frac{L}{2}\left\|w^t-w^{t+1}\right\|^2 \\
	\nonumber
	& \le F(w^t)-F(w^{t+1})+\frac{L}{2}\left\|\sum_{k=0}^{K-1} \eta_k g_k^t\right\|^2 \\
	\nonumber
	& \le F(w^t)-F(w^{t+1})+\frac{LK}{2}\sum_{k=0}^{K-1} {\eta_k^2} \left\| g_k^t\right\|^2 \\
	& \le F(w^t)-F(w^{t+1})+\frac{LK}{2}\sum_{k=0}^{K-1} {\eta_k^2} \left(\theta_{m(t)}^2+\left\|\overline g_k^t\right\|^2\right)
\end{align}

Combining \cref{th2-5} and \cref{th2-6}, it follows that
\begin{align}
	\label{th2-7}
	\nonumber
	\frac{1}{4}\sum_{k=0}^{K-1} \eta_k \left\|\nabla F\left(w^t\right)\right\|^2 & \le F\left(w^t\right)-F\left(w^{t+1}\right)+\left(\frac{LK}{2}\sum_{k=0}^{K-1} {\eta_k^2} + \sum_{k=0}^{K-1} {\eta_k}\right)\theta_{m(t)}^2 &\\
	& +\sum_{k=0}^{K-1} \left(\frac{LK}{2} {\eta_k^2}-\frac{1}{2} \eta_k\right)\left\|\nabla F_{m(t)}\left(w^t\right)\right\|^2+\frac{1}{2}\sum_{k=0}^{K-1} \eta_k\sigma^2
\end{align}

Recalling that $\eta_k \le \frac{1}{LK}$,
divide both sides of the inequality (\ref{th2-7}) by $\frac{1}{4}\sum_{k=0}^{K-1} \eta_k$, then we have
\begin{align}
	\label{th2-8}
	\left\|\nabla F\left(w^t\right)\right\|^2 \le \frac{4\left(F\left(w^t\right)-F\left(w^{t+1}\right)\right)}{\sum_{k=0}^{K-1} \eta_k}+\left(\frac{2LK\sum_{k=0}^{K-1} {\eta_k^2}}{\sum_{k=0}^{K-1} \eta_k} + 4\right)\theta_{m(t)}^2+2\sigma^2
\end{align}

Given that the minimum of $||\nabla F(w^t)||^2$ for $t=0, 1, ..., T-1$ is upper bounded by the average of them, we can bound the minimum of $||\nabla F(w^t)||^2$ for $t=0, 1, ..., T-1$ as follows
\begin{flalign}
	\label{th2-9}
	\nonumber
	&\mathbb{E}\left[\frac{1}{T}\sum_{t=0}^{T-1} \left\|\nabla F\left(w^t\right)\right\|^2\right] \\
    \nonumber
    & \le \frac{1}{T}\sum_{t=0}^{T-1} \frac{4\left(F\left(w^t\right)-F\left(w^{t+1}\right)\right)}{\sum_{k=0}^{K-1} \eta_k}+\left(\frac{2LK\sum_{k=0}^{K-1} {\eta_k^2}}{\sum_{k=0}^{K-1} \eta_k} + 4\right) \frac{1}{T}\sum_{t=0}^{T-1} \theta_{m(t)}^2+\frac{2}{T}\sum_{t=0}^{T-1}\sigma^2 \\
	& \le \frac{4\left[F\left(w^0\right)-F\left(w^{t}\right)\right]}{T\sum_{k=0}^{K-1} \eta_k} + \left(\frac{2LK\sum_{k=0}^{K-1} {\eta_k^2}}{\sum_{k=0}^{K-1} \eta_k} + 4\right)\frac{1}{T}\sum_{t=0}^{T-1} \theta_{m(t)}^2+\frac{2}{T}\sum_{t=0}^{T-1}\sigma^2
\end{flalign}

Since $F(w^*)$ be the global minimum of the loss function $F(w)$, then there is 
\begin{equation}
	\label{th2-10}
	\begin{split}
		F\left(w^0\right)-F\left(w^{t}\right) \le F\left(w^0\right)-F\left(w^*\right)
	\end{split}
\end{equation}

Combining \cref{th2-9} and \cref{th2-10}, we have
\begin{align}
	\label{th2-11}
	\mathbb{E}\left[\frac{1}{T}\sum_{t=0}^{T-1} \left\|\nabla F\left(w^t\right)\right\|^2\right]\le\frac{4\left[F\left(w^0\right)-F\left(w^*\right)\right]}{T\sum_{k=0}^{K-1} \eta_k}+\left(\frac{2LK\sum_{k=0}^{K-1} {\eta_k^2}}{\sum_{k=0}^{K-1} \eta_k} + 4\right)\frac{1}{T}\sum_{t=0}^{T-1} \theta_{m(t)}^2 +\frac{2}{T}\sum_{t=0}^{T-1} \sigma^2
\end{align}

This completes the proof.

\section{Defered Proof of Lemmas}
\label{proof key lemmas}
\subsection{Proof of  \cref{lemma1}}
\label{proof of lemma 1}
Splitting and expanding the left-hand side of \cref{lemma1-eq}, we can get
\begin{align}
	\label{pf1-1}
	\nonumber
	-2\left\langle w^{t-1}-w^*, \sum_{k=0}^{K-1} \eta_k \overline g_k^{t-1}\right\rangle &=-2\sum_{k=0}^{K-1} \eta_k\left\langle w^{t-1}-w^*, \sum_{n \in \mathcal{N}_{m(t-1)}} \gamma_n^{m(t-1)}\nabla f_n\left(w_k^{t-1}\right)\right\rangle \\
    \nonumber
	& =\underbrace{-2\sum_{k=0}^{K-1} \eta_{k}\left\langle w^{t-1}-w^{t-1}_k, \sum_{n \in \mathcal{N}_{m(t-1)}} \gamma_n^{m(t-1)} \nabla f_n\left(w_k^{t-1}\right)\right\rangle}_{C_1} \\
    &-\underbrace{2\sum_{k=0}^{K-1} \eta_{k} \left\langle w^{t-1}_k-w^*, \sum_{n \in \mathcal{N}_{m(t-1)}} \gamma_n^{m(t-1)} \nabla f_n\left(w_k^{t-1}\right)\right\rangle}_{C_2}
\end{align}

By using Cauchy Schwartz and AM-GM inequality \citep{9916263}, we can bound $C_1$
\begin{align}
	\label{pf1-2}
	\nonumber
	-&2\sum_{k=0}^{K-1} \eta_{k}\left\langle w^{t-1}-w^{t-1}_k, \sum_{n \in \mathcal{N}_{m(t-1)}} \gamma_n^{m(t-1)} \nabla f_n\left(w_k^{t-1}\right)\right\rangle \\
    \nonumber
    &\le \sum_{k=0}^{K-1} \eta_k\left[\frac{1}{K\eta_k}\left\|w^{t-1}-w^{t-1}_k\right\|^2+K\eta_k\left\|\sum_{n \in \mathcal{N}_{m(t-1)}}  \gamma_n^{m(t-1)} \nabla f_n\left(w_k^{t-1}\right)\right\|^2\right] \\
	& =\frac{1}{K} \sum_{k=0}^{K-1} \left\|w^{t-1}-w^{t-1}_k\right\|^2+K \sum_{k=0}^{K-1} {\eta_k^2} \sum_{n \in \mathcal{N}_{m(t-1)}} \gamma_n^{m(t-1)} \left\|\nabla f_n\left(w_k^{t-1}\right)\right\|^2
\end{align} 

Because of the strong convexity of $f_n(w)$ for $n\in \mathcal{N}_{m(t-1)}$, we have
\begin{align}
	\label{pf1-3}
	\nonumber
    &-2\sum_{k=0}^{K-1} \eta_{k} \left\langle w^{t-1}_k-w^*, \sum_{n \in \mathcal{N}_{m(t-1)}} \gamma_n^{m(t-1)} \nabla f_n\left(w_k^{t-1}\right)\right\rangle \\
    &\le 2\sum_{k=0}^{K-1}\eta_{k} \sum_{n \in \mathcal{N}_{m(t-1)}} \gamma_n^{m(t-1)} \left(f_n\left(w^*\right)-f_n\left(w^{t-1}_k\right)\right)-\mu\sum_{k=0}^{K-1} \eta_k\left\|w^{t-1}_k-w^*\right\|^2
\end{align}

By using Cauchy Schwartz inequality, we have
\begin{align}
	\label{pf1-4}
	-\mu\sum_{k=0}^{K-1} \eta_k\left\|w^{t-1}_k-w^*\right\|^2 \le \mu\sum_{k=0}^{K-1} \eta_k\left[\left\|w^{t-1}_k-w^{t-1}\right\|^2-\frac{1}{2}\left\|w^{t-1}-w^*\right\|^2\right] 
\end{align}

We combine \cref{pf1-1}, \cref{pf1-2}, \cref{pf1-3} and \cref{pf1-4}, it follows that
\begin{align}
	\label{pf1-5}
	\nonumber
	& -2\left\langle w^{t-1}-w^*, \sum_{k=0}^{K-1} {\eta_{k} \overline g_{k}^{t-1}}\right\rangle \\
    \nonumber
    &\le \sum_{k=0}^{K-1} \left(\frac{1}{K}+\mu\eta_k\right)\left\|w^{t-1}-w^{t-1}_k\right\|^2 + K \sum_{k=0}^{K-1} {\eta_k^2} \sum_{n \in \mathcal{N}_{m(t-1)}} \gamma_n^{m(t-1)} \left\|\nabla f_n\left(w_k^{t-1}\right)\right\|^2 \\
	&+2\sum_{k=0}^{K-1}\eta_{k} \sum_{n \in \mathcal{N}_{m(t-1)}} \gamma_n^{m(t-1)} (f_n\left(w^*\right)-f_n\left(w^{t-1}_k\right))-\frac{\mu}{2}\sum_{k=0}^{K-1} \left\|w^{t-1}-w^*\right\|^2
\end{align}

This completes the proof.

\subsection{Proof of  \cref{lemma2}}
\label{proof of lemma 2}
We can split \cref{lemma2-eq} into two terms, it follows that
\begin{align}
    \label{pf2-1}
    \nonumber
    -&\sum_{k=0}^{K-1} 2\eta_k\left(1-2LK\eta_k\right) \sum_{n \in \mathcal{N}_{m(t-1)}} \gamma_n^{m(t-1)} \left(f_n\left(w_k^{t-1}\right)-f_n\left(w^*\right)\right) \\
    \nonumber
    & = \underbrace{-\sum_{k=0}^{K-1} 2\eta_k\left(1-2LK\eta_k\right) \sum_{n \in \mathcal{N}_{m(t-1)}} \gamma_n^{m(t-1)} \left(f_n\left(w_k^{t-1}\right)-f_n\left(w^{t-1}\right)\right)}_{C_3} \\
    & -\sum_{k=0}^{K-1} 2\eta_k\left(1-2LK\eta_k\right) \sum_{n \in \mathcal{N}_{m(t-1)}} \gamma_n^{m(t-1)} \left(f_n\left(w^{t-1}\right)-f_n\left(w^*\right)\right)
\end{align}

Attributed to the strong convexity of $f_n(w)$, we have
\begin{align}
	\label{pf2-2}
	f_n\left(w_k^{t-1}\right)-f_n\left(w^{t-1}\right) \ge \left\langle \nabla f_n\left(w^{t-1}\right), w^{t-1}_k-w^{t-1} \right\rangle + \frac{\mu}{2}\left\|w^{t-1}_k-w^{t-1}\right\|^2
\end{align}

Combing \cref{pf2-2} with $C_3$, we have
\begin{align}
	\label{pf2-3}
	\nonumber
	-&\sum_{k=0}^{K-1} 2\eta_k\left(1-2LK\eta_k\right) \sum_{n \in \mathcal{N}_{m(t-1)}} \gamma_n^{m(t-1)} \left(f_n\left(w_k^{t-1}\right)-f_n\left(w^{t-1}\right)\right) \\
	\nonumber
	& \le -\mu\sum_{k=0}^{K-1} \eta_k\left(1-2LK\eta_k\right) \sum_{n \in \mathcal{N}_{m(t-1)}} \gamma_n^{m(t-1)} \left\|w^{t-1}_k-w^{t-1}\right\|^2 \\
	& - \sum_{k=0}^{K-1} 2\eta_k\left(1-2LK\eta_k\right) \sum_{n \in \mathcal{N}_{m(t-1)}} \gamma_n^{m(t-1)} \left\langle \nabla f_n\left(w^{t-1}\right), w^{t-1}_k-w^{t-1} \right\rangle 
\end{align}

By using Cauchy Schwartz inequality, we have
\begin{align}
	\label{pf2-4}
	\nonumber
	-& \sum_{k=0}^{K-1} 2\eta_k\left(1-2LK\eta_k\right) \sum_{n \in \mathcal{N}_{m(t-1)}} \gamma_n^{m(t-1)} \left\langle \nabla f_n\left(w^{t-1}\right), w^{t-1}_k-w^{t-1} \right\rangle \\
	& \le \sum_{k=0}^{K-1} 2\eta_k\left(1-2LK\eta_k\right) \sum_{n \in \mathcal{N}_{m(t-1)}} \gamma_n^{m(t-1)} \left(\frac{K\eta_k}{2}\left\|f_n\left(w^{t-1}\right)\right\|^2+\frac{1}{2K\eta_k}\left\| w^{t-1}_k-w^{t-1}\right\|^2\right)
\end{align}

Combining \cref{pf2-3} and \cref{pf2-4}, we can bound $C_3$
\begin{align}
	\label{pf2-5}
	\nonumber
	-&\sum_{k=0}^{K-1} 2\eta_k\left(1-2LK\eta_k\right) \sum_{n \in \mathcal{N}_{m(t-1)}} \gamma_n^{m(t-1)} \left(f_n\left(w_k^{t-1}\right)-f_n\left(w^{t-1}\right)\right) \\
	\nonumber
	& \le\sum_{k=0}^{K-1} 2LK{\eta_k^2}\left(1-2LK\eta_k\right) \sum_{n \in \mathcal{N}_{m(t-1)}} \gamma_n^{m(t-1)} \left(f_n\left(w^{t-1}\right)-f_n^*\right) \\
	& +\sum_{k=0}^{K-1} \left(\frac{1}{K}-\mu\eta_k\right)\left(1-2LK\eta_k\right) \sum_{n \in \mathcal{N}_{m(t-1)}} \gamma_n^{m(t-1)} \left\|w^{t-1}_k-w^{t-1}\right\|^2
\end{align}

Next, by combining \cref{pf2-1} and \cref{pf2-5}, it follows that
\begin{align}
	\label{pf2-6}
	\nonumber
	-&\sum_{k=0}^{K-1} 2\eta_k\left(1-2LK\eta_k\right) \sum_{n \in \mathcal{N}_{m(t-1)}} \gamma_n^{m(t-1)} \left(f_n\left(w_k^{t-1}\right)-f_n\left(w^*\right)\right) \\
	\nonumber
	& \le\sum_{k=0}^{K-1} 2LK{\eta_k^2}\left(1-2LK\eta_k\right) \sum_{n \in \mathcal{N}_{m(t-1)}} \gamma_n^{m(t-1)} \left(f_n\left(w^{t-1}\right)-f_n^*\right) \\
	\nonumber
	& \underbrace{-\sum_{k=0}^{K-1} 2\eta_k\left(1-2LK\eta_k\right) \sum_{n \in \mathcal{N}_{m(t-1)}} \gamma_n^{m(t-1)} \left(f_n\left(w^{t-1}\right)-f_n\left(w^*\right)\right)}_{C_4} \\
	& +\sum_{k=0}^{K-1} \left(\frac{1}{K}-\mu\eta_k\right)\left(1-2LK\eta_k\right) \sum_{n \in \mathcal{N}_{m(t-1)}} \gamma_n^{m(t-1)} \left\|w^{t-1}_k-w^{t-1}\right\|^2
\end{align}

Note that
\begin{align}
	\label{pf2-7}
	\nonumber
	C_4 &\le \sum_{k=0}^{K-1} 2\eta_k\left(1-2LK\eta_k\right)\left(LK\eta_k-1\right) \sum_{n \in \mathcal{N}_{m(t-1)}} \gamma_n^{m(t-1)} \left(f_n\left(w^{t-1}\right)-f_n\left(w^*\right)\right) \\
	& +\sum_{k=0}^{K-1} 2LK{\eta_k^2}\left(1-2LK\eta_k\right) \sum_{n \in \mathcal{N}_{m(t-1)}} \gamma_n^{m(t-1)} \left(f_n\left(w^*\right)-f_n^*\right)
\end{align}

Combining \cref{pf2-6} and \cref{pf2-7}, we have
\begin{align}
	\label{pf2-8}
	\nonumber
	-&\sum_{k=0}^{K-1} 2\eta_k\left(1-2LK\eta_k\right) \sum_{n \in \mathcal{N}_{m(t-1)}} \gamma_n^{m(t-1)} \left(f_n\left(w_k^{t-1}\right)-f_n\left(w^*\right)\right) \\
	\nonumber
	&\le \sum_{k=0}^{K-1} 2\eta_k\left(1-2LK\eta_k\right)\left(LK\eta_k-1\right) \sum_{n \in \mathcal{N}_{m(t-1)}} \gamma_n^{m(t-1)} \left(f_n\left(w^{t-1}\right)-f_n\left(w^*\right)\right) \\
	\nonumber
	&+\sum_{k=0}^{K-1} 2LK{\eta_k^2}\left(1-2LK\eta_k\right) \sum_{n \in \mathcal{N}_{m(t-1)}} \gamma_n^{m(t-1)} \left(f_n\left(w^*\right)-f_n^*\right)\\
	&+\sum_{k=0}^{K-1} \left(\frac{1}{K}-\mu\eta_k\right)\left(1-2LK\eta_k\right) \sum_{n \in \mathcal{N}_{m(t-1)}} \gamma_n^{m(t-1)} \left\|w^{t-1}_k-w^{t-1}\right\|^2
\end{align}

Given that $\eta_k<\frac{1}{2LK}$ for $k\in \mathcal{K}$, it follows that
\begin{align}
	\label{pf2-9}
	\nonumber
	-&\sum_{k=0}^{K-1} 2\eta_k\left(1-2LK\eta_k\right) \sum_{n \in \mathcal{N}_{m(t-1)}} \gamma_n^{m(t-1)} \left(f_n\left(w_k^{t-1}\right)-f_n\left(w^*\right)\right) \\
	\nonumber
	& \le \sum_{k=0}^{K-1} 2\eta_k\left(1-2LK\eta_k\right)\left(LK\eta_k-1\right) \sum_{n \in \mathcal{N}_{m(t-1)}} \gamma_n^{m(t-1)} \left(f_n\left(w^{t-1}\right)-f_n\left(w^*\right)\right) \\
	\nonumber
	&+\sum_{k=0}^{K-1} 2LK{\eta_k^2} \sum_{n \in \mathcal{N}_{m(t-1)}} \gamma_n^{m(t-1)} \left(f_n\left(w^*\right)-f_n^*\right)\\
	&+\sum_{k=0}^{K-1} \left(\frac{1}{K}-\mu\eta_k\right) \left\|w^{t-1}_k-w^{t-1}\right\|^2
\end{align}

This completes the proof.

\subsection{Proof of  \cref{lemma3}}
\label{proof of lemma 3}

Notice that $w_k^{t-1}=w^{t-1}-\sum_{j=0}^{k-1} \eta_j g_j^{t-1}$, then
\begin{align}
	\label{prf3-1}
	\nonumber
	\left\|w^{t-1}-w_k^{t-1}\right\|^2 &\le \left\|\sum_{j=0}^{k-1} \eta_j g_j^{t-1}\right\|^2 \\
	\nonumber
	& \le k\sum_{j=0}^{k-1} {\eta_j^2 }\left\|g_j^{t-1}\right\|^2 \\
	\nonumber
	& \le k\sum_{j=0}^{k-1} {\eta_j^2}\left\|g_j^{t-1}-\overline g_j^{t-1}+\overline g_j^{t-1}\right\|^2 \\
	\nonumber
	& \le k\sum_{j=0}^{k-1} {\eta_j^2}\left\|\overline g_j^{t-1}\right\|^2 + 2k\sum_{j=0}^{k-1} {\left\langle\overline g_j^{t-1},g_j^{t-1}-\overline g_j^{t-1}\right\rangle} + k\sum_{j=0}^{k-1} {\eta_j^2}\left\|g_j^{t-1}-\overline g_j^{t-1}\right\|^2 \\
	& \le k\sum_{j=0}^{k-1} {\eta_j^2}\left\|\overline g_j^{t-1}\right\|^2 +k\sum_{j=0}^{k-1} {\eta_j^2}\left\|g_j^{t-1}-\overline g_j^{t-1}\right\|^2 
\end{align}

Based on the \cref{ass:gradient} and \cref{ass:heterogeneity}, we have
\begin{align}
	\label{prf3-2}
	\left\|w^{t-1}-w_k^{t-1}\right\|^2\le k\sum_{j=0}^{k-1} {\eta_j^2} G^2+k\sum_{j=0}^{k-1} {\eta_j^2}\theta_{m(t-1)}^2
\end{align}

This completes the proof.

\subsection{Proof of \cref{lemma4}}
\label{proof of lemma 4}
By using Cauchy-Schwartz inequality and AM-GM inequality \citep{9916263}, it follows that
\begin{align}
	\label{prf4-1}
	\left\langle\nabla F\left(w^t\right), \nabla F_{m(t)}\left(w^t\right)-g_k^t\right\rangle \le \frac{1}{4}\mathbb{E}\left\|\nabla F\left(w^t\right)\right\|^2+\left\|\nabla F_{m(t)}\left(w^t\right)-g_k^t\right\|^2 
\end{align}

From \cref{ass:heterogeneity}, we have
\begin{align}
	\label{prf4-2}
	\mathbb{E}\left\|\nabla F_{m(t)}\left(w^t\right)-g_k^t\right\|^2 \le \mathbb{E}\left\| \sum_{n \in \mathcal{N}_{m(t)}} \gamma^{m(t)}_{n} \left(\nabla f_{n}\left(w \right)- \nabla f \left(w, \xi_{n,k}\right)\right) \right\|^2 \le \theta_{m(t)}^2
\end{align}

Combining \cref{prf4-1} and \cref{prf4-2}, it follows that
\begin{align}
	\label{prf4-4}
	\mathbb{E}\left\langle\nabla F\left(w^t\right), \nabla F_{m(t)}\left(w^t\right)-g_k^t\right\rangle\le \frac{1}{4}\mathbb{E}\left\|\nabla F\left(w^t\right)\right\|^2+\theta_{m(t)}^2
\end{align}

This completes the proof.

\subsection{Proof of \cref{lemma5}}
\label{proof of lemma 5}
For any two vectors $a$ and $b$, we have $2a^Tb=\left\|a\right\|^2+\left\|b\right\|^2-\left\|a-b\right\|^2$. 
Therefore we can split $\left\langle\nabla F\left(w^t\right),\nabla F\left(w^t\right)-\nabla F_{m(t)}\left(w^t\right)\right\rangle$ into three terms
\begin{align}
	\label{prf5-1}
    \nonumber
	\mathbb{E}\left\langle\nabla F\left(w^t\right),\nabla F\left(w^t\right)-\nabla F_{m(t)}\left(w^t\right)\right\rangle&=\frac{1}{2}\mathbb{E}\left\|\nabla F\left(w^t\right)\right\|^2-\frac{1}{2}\mathbb{E}\left\|\nabla F_{m(t)}\left(w^t\right)\right\|^2 \\
    &+\frac{1}{2}\mathbb{E}\left\|\nabla F\left(w^t\right)-\nabla F_{m(t)}\left(w^t\right)\right\|^2 
\end{align}

According to Assumption \ref{ass:heterogeneity}, it follows that
\begin{align}
	\label{prf5-2}
	\mathbb{E}\left\langle\nabla F\left(w^t\right),\nabla F\left(w^t\right)-\nabla F_{m(t)}\left(w^t\right)\right\rangle \le\frac{1}{2}\mathbb{E}\left\|\nabla F\left(w^t\right)\right\|^2-\frac{1}{2}\mathbb{E}\left\|\nabla F_{m(t)}\left(w^t\right)\right\|^2+\frac{1}{2}\sigma^2
\end{align}

This completes the proof.



\end{document}